\definecolor{alizarin}{RGB}{227,38,54}
\definecolor{ultramarine}{RGB}{24,13,191}
\newcommand{\ip}[2]{\left\langle#1,#2\right\rangle}
\newcommand\norm[1]{\left\lVert#1\right\rVert}
\newcommand{\abs}[1]{\left\lvert#1\right\rvert}
\def\RR{{\mathbb R}}
\def\bX{{\mathbf{X}}}
\def\bY{{\mathbf{Y}}}
\def\bZ{{\mathbf{Z}}}
\newtheorem{theorem}{Theorem}
\newtheorem{lemma}{Lemma}
\newtheorem{proposition}{Proposition}
\newtheorem{remark}{Remark}
\title{Joint Hierarchical Representation Learning of Samples and Features via Informed Tree-Wasserstein Distance}
\author{%
    Ya-Wei Eileen Lin$^1$ \hspace{1mm}  Ronald R. Coifman$^2$ \hspace{1mm} Gal Mishne$^3$ \hspace{1mm}  Ronen Talmon$^1$ \\ \\
\normalsize{
$^1$Viterbi Faculty of Electrical and Computer Engineering, Technion}\\
\normalsize{
$^2$Department of Mathematics, Yale University}\\
\normalsize{$^3$Halicio$\check{\text{g}}$lu Data Science Institute, University of California San Diego}
}
\begin{document}
\frenchspacing
\maketitle

\begin{abstract}
High-dimensional data often exhibit hierarchical structures in both modes: samples and features. Yet, most existing approaches for hierarchical representation learning consider only one mode at a time. 
In this work, we propose an unsupervised method for jointly learning hierarchical representations of samples and features via Tree-Wasserstein Distance (TWD). 
Our method alternates between the two data modes. It first constructs a tree for one mode, then computes a TWD for the other mode based on that tree, and finally uses the resulting TWD to build the second mode’s tree. By repeatedly alternating through these steps, the method gradually refines both trees and the corresponding TWDs, capturing meaningful hierarchical representations of the data.
We provide a theoretical analysis showing that our method converges.
We show that our method can be integrated into hyperbolic graph convolutional networks as a pre-processing technique, improving performance in link prediction and node classification tasks. 
In addition, our method outperforms baselines in sparse approximation and unsupervised Wasserstein distance learning tasks on word-document and single-cell RNA-sequencing datasets.

\end{abstract}

\section{Introduction}\label{sec:intro}

High-dimensional data with hierarchical structures are prevalent in numerous fields, e.g., gene expression \citep{tanay2017scaling, bellazzi2021gene, luecken2019current}, image analysis \citep{dabov2007image, ulyanov2018deep, mettes2024hyperbolic}, neuroscience \citep{mishne2016hierarchical,gao2020poincare}, and citation networks \cite{sen2008collective, leicht2008community, clauset2008hierarchical}. 
Therefore, finding meaningful hierarchical representations for these data is an important task that has attracted considerable attention \cite{wu2022learning, nickel2017poincare, nickel2018learning, desai2023hyperbolic, ermolov2022hyperbolic, peng2021hyperbolic, tan2023rdesign, chen2018harp,mishne2023numerical}. 
While hierarchical structure is often assumed to exist in \emph{either the samples or the features} \citep{nickel2017poincare, chami2019hyperbolic, sonthalia2020tree, yang2023hyperbolic} (i.e., one of the data modes of a matrix), many real-world datasets exhibit hierarchical structure in both modes.
For example, word-document data \citep{chang2010hierarchical, kusner2015word, huang2016supervised} commonly contain hierarchies within features (e.g., related keywords) and within samples (e.g., document topics).
Another example is recommendation systems \citep{koren2009matrix, ricci2010introduction}, where items (features) could be arranged into taxonomies or product categories, and users (samples) can be represented by multiple levels of behavioral or demographic relations \citep{wang2018billion, gu2020hierarchical}.


An emerging approach to represent hierarchical data is based on embedding in hyperbolic space \cite{nickel2017poincare, nickel2018learning, chami2019hyperbolic, sonthalia2020tree, yang2023hyperbolic, weber2024flattening}.
However, most existing hyperbolic representation learning methods focus only on one mode of a data matrix \citep{nickel2017poincare, chami2019hyperbolic, sonthalia2020tree}, either the rows (samples) or the columns (features). 
A straightforward way to handle hierarchies in both modes is to learn a separate hierarchical representation for each. We postulate that a joint approach facilitates significant advantages and investigate how the hierarchical structure of features can improve the discovery of the sample hierarchy, and, conversely, how the sample hierarchy can improve the discovery of the feature hierarchy.

To this end, we propose to jointly learn the hierarchical representation of features and samples using Tree-Wasserstein Distance (TWD) \citep{IndykQuadtree2003} in an iterative manner. 
Our approach departs from standard uses of TWD, where trees are typically built for computational efficiency in approximating Wasserstein distances with Euclidean ground metrics \citep{IndykQuadtree2003, evans2012phylogenetic, yamada2022approximating, le2019tree, chen2024learning}.
Instead, we use trees learned from hierarchical data through diffusion geometry and hyperbolic embedding \citep{lin2025tree},  allowing the hierarchical structure of one mode to be incorporated in the computation of a distance of the other.
We start by learning an initial tree of one mode (either features or samples). Then, this learned tree is incorporated into the computation of the TWD of the other mode. 
The computed, informed TWD is then used to infer a tree of that mode, i.e., sample TWD is used to update the sample tree and feature TWD is used to update the feature tree, as shown in Fig.~\ref{fig:illustration_v2}. 
This procedure is repeated, alternating between the two modes, iteratively refining both trees and recomputing the corresponding TWDs.
We provide theoretical guarantees that this iterative process converges and that a fixed point exists.

Based on the learned trees that represent the hierarchical structures of the data, we extend the iterative algorithm by incorporating an additional adaptive filtering step.
This filtering is implemented using Haar bases \citep{mallat1989multiresolution,mallat1999wavelet,haar1911theorie,daubechies1990wavelet}, which are constructed based on the learned trees.
More concretely, at each iteration, we view each sample (resp. feature) as a signal supported on the feature tree (resp. sample tree) \citep{ortega2018graph}.
We then apply Haar wavelet filters \citep{mallat1989multiresolution, gavish2010multiscale} induced by the feature tree (resp. sample tree) to the samples (resp. features), as illustrated in Fig.~\ref{fig:illustration_v2}.
Assuming the trees reflect the intrinsic hierarchical structure, applying the resulting data‑driven wavelet filters can remove noise and other nuisance components \citep{do2005contourlet}.
We show that integrating the filters into the iterative process also leads to convergence. Empirically, we demonstrate that this approach improves hierarchical representations in terms of sparse approximation, and the resulting TWD leads to superior performance in document and single-cell classification.

We further demonstrate the practical benefit of our hierarchical representations by using them to initialize hyperbolic graph convolutional networks (HGCNs)~\citep{chami2019hyperbolic, dai2021hyperbolic}. By incorporating our data-driven hierarchical representation as a preprocessing step, we observe improved performance in link prediction and node classification tasks compared to standard HGCNs. This highlights both the compatibility and effectiveness of our approach for hyperbolic graph-based models.

\paragraph{Our contribution.}
(1) We present an iterative framework for jointly learning hierarchical representations of features and samples using TWD. 
(2) We further enhance the hierarchical representations using Haar wavelet filters constructed from the learned trees. 
(3) We show that our method achieves superior performance in sparse approximation, as well as document and single-cell (scRNA-seq) classification.
(4) We also show that the proposed hierarchical representation can be used to initialize HGCNs, improving link prediction and node classification on hierarchical graph data.

\begin{figure*}[t]
    \centering
     \includegraphics[width=1\textwidth]{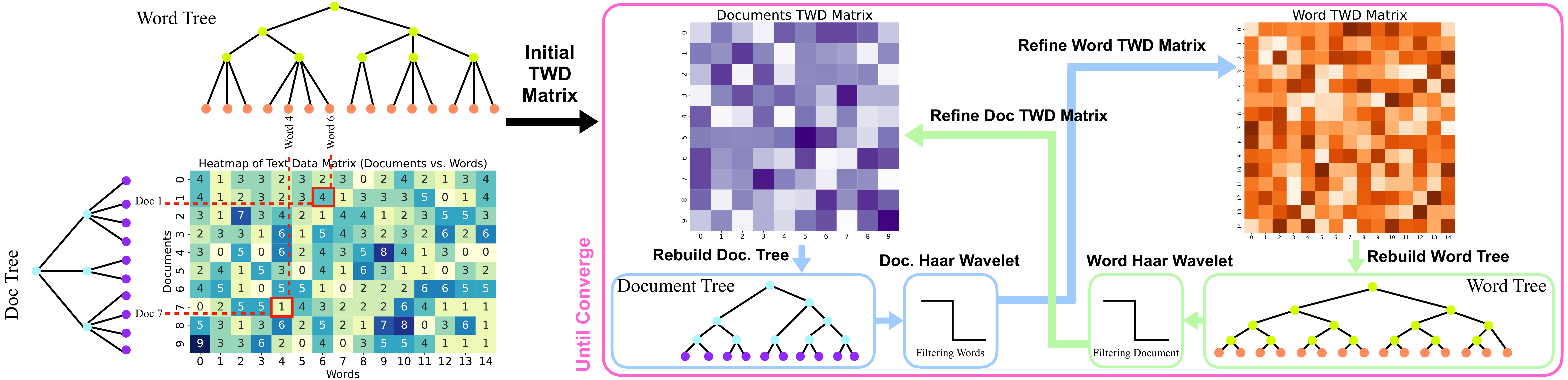}
    \caption{
    Overview of learning hierarchical representations across samples and features using TWD.
    Consider a word-document data matrix.
    We construct an initial tree for one data mode (e.g., words). This tree is then used to compute the TWD in the other mode (e.g., documents). 
    The newly computed TWD informs a tree update of that mode, and the updated tree is subsequently used to compute the TWD in the cross-mode. This alternating procedure continues iteratively, refining both trees.
   }
   \label{fig:illustration_v2}
\end{figure*}

\section{Related work}\label{sec:related_work}

\paragraph{Tree-Wasserstein distance.}  Tree-Wasserstein distance (TWD)~\citep{IndykQuadtree2003, evans2012phylogenetic, yamada2022approximating, le2019tree, chen2024learning} was introduced to mitigate the high computational cost of the Wasserstein distance \citep{villani2009optimal}, which is a powerful tool to compare sample distributions while taking into account feature relationship \citep{monge1781memoire, kantorovich1942translocation, peyre2019computational}.
Most TWD methods involve two steps: (i)\;constructing a feature tree and (ii)\;using this tree to compute a sample TWD matrix.
While the tree is typically built to approximate the \emph{Euclidean} ground metric in TWD literature, \citep{lin2025tree} recently proposed tree construction via hyperbolic embeddings \citep{lin2023hyperbolic} and diffusion geometry \citep{coifman2006diffusion}, enabling the tree metric to reflect geodesic distances on a latent unobserved tree underlying features. 

\paragraph{Co-Manifold learning.}  
Co-manifold learning aims to jointly recover the geometry of samples and features by treating their relationships as mutually informative, i.e., the feature manifold informs the sample manifold, and vice versa. 
It has been applied in joint embedding \citep{ankenman2014geometry, gavish2012sampling, yair2017reconstruction, leeb2016holder} and dimensionality reduction \citep{mishne2016hierarchical, shahid2016fast, mishne2019co}, assuming that both rows and columns of the data matrix lie on smooth, low-dimensional manifolds. 
While effective in capturing geometric structures, these methods often rely on Riemannian manifolds with non-negative curvature assumptions \citep{leeb2016holder, fefferman2016testing}, which may not be suited well with the negative curvature often associated with hierarchical data.

\paragraph{Unsupervised ground metric learning.} 
Wasserstein singular vectors (WSV) \citep{huizing2022unsupervised} were introduced to jointly compute Wasserstein distances across samples and features, where the Wasserstein distance in one mode (e.g., samples) acts as the ground metric for the other (e.g., features). However, WSV is computationally expensive. To address this, Tree-WSV \citep{dusterwald2025fast} was recently proposed, using tree-based approximations of the Wasserstein distance to reduce computational complexity.
While these methods alternate between computing distances in a manner similar to ours, neither WSV nor Tree-WSV was designed to learn the hierarchical representations of the data.
In contrast, our method explicitly learns the hierarchical representation of samples and features, further enhanced by wavelet filtering and its integration into HGCNs. These aspects were not explored in the WSV or Tree-WSV frameworks.

\section{Background}\label{sec:background}

\paragraph{Tree construction using hyperbolic and diffusion geometry.}
Consider a set of high-dimensional points $\mathcal{Z} = \{\mathbf{z}_j\in\mathbb{R}^n\}_{j=1}^m$, and let $\mathbf{M} \in \mathbb{R}^{m \times m}$ be a suitable pairwise distance matrix between these $m$ points.
In a recent work \citep{lin2025tree}, a binary tree was derived from a hyperbolic embedding obtained through multiscale diffusion densities built from $\mathbf{M}$~\cite{lin2023hyperbolic}, employing the concept of lowest common ancestors \citep{wang2018improved} within this embedding space.
The resulting tree, denoted $\mathcal{T}(\mathbf{M})$, has $m$ leaves corresponding to the $m$ points.
Details on the embedding and tree construction are in App.~\ref{app:more_background}.

\paragraph{Tree-Wasserstein distance.}
Consider a tree $T = (V, E, \mathbf{A})$ with $N_{\text{leaf}}$ leaves, where $V$ is the vertex set with $N$ nodes, $E$ is the edge set, and $\mathbf{A}\in\mathbb{R}^{N \times N}$ is the edge weight matrix. The tree distance $d_T$ is the sum of weights of the edges on the shortest path between any two nodes on $T$. 
Let $\Upsilon_T(v)$ be the set of nodes in the subtree rooted at $v\in V$. 
Each $u\in V$ has a unique parent $v$, with edge weight $w_u = d_T(u,v)$ \cite{bondy2008graph}.
Given distributions $\bm{\rho}_1, \bm{\rho}_2 \in \mathbb{R}^{N_{\text{leaf}}}$ supported on $T$, the TWD \citep{IndykQuadtree2003} is defined by 
\begin{equation}\label{eq:TWD}
     \mathrm{TW}(\bm{\rho}_1, \bm{\rho}_2, T) = \textstyle\sum_{v\in V}  w_v \hspace{-0.5 mm} \abs{\sum_{u \in \Upsilon_{T}(v)}\left(\bm{\rho}_1(u) - \bm{\rho}_2 (u) \right)}.
\end{equation}

\paragraph{Haar wavelet.}
Consider a complete binary tree $B$ with $m$ leaves. 
Let $\ell = 1, \ldots, L$ denote the levels in the tree, where $\ell = 1$ is the root level and  $\ell = L$ is the leaf level.
Let $\widehat{\Upsilon}(\ell, s)$ be the set of all leaves in the subtree, whose root is the $s$-th  node of the tree at level $\ell$, where $s = 1, \ldots, N_\ell$ and $N_\ell$ is the number of nodes in the $\ell$-level.
At level $\ell$, a subtree $\widehat{\Upsilon}(\ell, s)$ splits into two sub-subtrees $\widehat{\Upsilon}(\ell+1, s_1)$ and $\widehat{\Upsilon}(\ell+1, s_2)$. 
A zero-mean Haar wavelet $\bm{\beta}_{\ell, s} \in \mathbb{R}^m$ has non-zero values only at the indices corresponding to leaves in the sub-subtrees and is piecewise constant on each of them (see App.~\ref{app:more_background} for an illustration and further details) \citep{haar1911theorie, mallat1999wavelet, gavish2010multiscale}. 
The set of these Haar wavelets, along with a constant vector, is complete and forms an orthonormal Haar basis, denoted by $\mathbf{B} \in \mathbb{R}^{m \times m}$ with each column corresponding to a Haar wavelet (basis vector).
Any vector $\mathbf{a} \in \mathbb{R}^m$ can be expanded in this Haar basis as $\mathbf{a} = \sum_{\ell, \varepsilon} \alpha_{i,\ell, \varepsilon}\bm{\beta}_{\ell, \varepsilon}$, where $\alpha_{i,\ell, s} = \ip{\mathbf{a}}{\bm{\beta}_{\ell, s}}$ is the expansion coefficient. 

\section{Proposed method}
\label{sec:proposed_method}

\paragraph{Problem setting.}
Given a data matrix $\bX\in\mathbb{R}_+^{n\times m}$ with $n$ rows (samples) and $m$ columns (features), we denote $\bX_{i, :}$ and $\bX_{:,j}$ as the $i$-th sample and the $j$-th feature, respectively. 
Our goal is to learn hierarchical representations for both samples and features. 
We model the hierarchical structure of the data by constructing rooted weighted trees as follows: a sample tree $T_r$ with $n$ leaves, where each leaf represents one sample, and a feature tree $T_c$ with $m$ leaves, where each leaf represents one feature.
Placing data points as leaves follows the established line of works in statistics \citep{chi2017convex, chi2015splitting, chi2019recovering}, manifold learning \citep{ankenman2014geometry, gavish2012sampling, yair2017reconstruction, leeb2016holder, mishne2016hierarchical, gavish2010multiscale}, and TWD literature \citep{IndykQuadtree2003, evans2012phylogenetic, yamada2022approximating, le2019tree, chen2024learning}.
We propose an iterative scheme that alternates between learning the sample tree and learning the feature tree.

\subsection{Iterative scheme for joint hierarchical representation learning}
\label{sec:learning_latent_coupled_tree}

\begin{algorithm}[t]
\caption{Iterative Joint Hierarchical Representation Learning via Tree-Wasserstein Distance}
\label{alg:Co_HD}
\begin{algorithmic}
\State \textbf{Input:} Data matrix $\mathbf{X}\in\mathbb{R}_+^{n \times m}$,  $\mathbf{M}_c\in\mathbb{R}^{m\times m}$ and $\mathbf{M}_r\in\mathbb{R}^{n\times n}$, $\gamma_r, \gamma_c>0$
\State \textbf{Output:} Trees  $\mathcal{T}(\mathbf{W}_r^{(l)})$ and $\mathcal{T}(\mathbf{W}_c^{(l)})$, and TWDs $\mathbf{W}_r^{(l)}$ and $\mathbf{W}_c^{(l)}$
\vspace{2 mm}
\State $l\gets 0$,  $\widehat{\mathbf{X}}_r \gets \{\mathbf{r}_i =\mathbf{X}_{i, :}^\top/\|\mathbf{X}_{i, :}\|_1\}$, and $\widehat{\mathbf{X}}_c \gets \{\mathbf{c}_j = \mathbf{X}_{:,j}/\|\mathbf{X}_{:,j}\|_1\}$ \Comment{Initialization}
\State $\mathbf{W}_r^{(0)}\gets \Phi(\widehat{\mathbf{X}}_r;\mathcal{T}(\mathbf{M}_c))$\Comment{ Construct initial feature tree and compute initial sample TWD}
\State $\mathbf{W}_c^{(0)}\gets \Phi(\widehat{\mathbf{X}}_c;\mathcal{T}(\mathbf{M}_r))$ \Comment{ Construct initial sample tree and compute initial feature TWD}
\State \textbf{repeat}
    \State \quad $\mathbf{W}_r^{(l+1)}\gets\Phi(\widehat{\mathbf{X}}_r;\mathcal{T}(\mathbf{W}_c^{(l)}))$ and $\mathbf{W}_c^{(l+1)}\gets\Phi(\widehat{\mathbf{X}}_c;\mathcal{T}(\mathbf{W}_r^{(l)}))$ \Comment{Iterative update}
    \State \quad  $l\gets l+1$
\State \textbf{until} convergence
\end{algorithmic}
\end{algorithm}

To jointly learn hierarchical representations for samples and features, we use TWD as a means to facilitate the relationships between samples and features.
We begin by constructing a tree for one data mode; without loss of generality, we first construct an initial \emph{feature} tree.
Given a pairwise distance matrix $\mathbf{M}_c \in \mathbb{R}^{m \times m}$ over the $m$ features, we build a complete binary tree $\mathcal{T}(\mathbf{M}_c)$ with $m$ leaves \citep{lin2025tree}, where each leaf corresponds to a feature (see App.~\ref{app:more_background} for details). 
This \emph{feature} tree then serves as the hierarchical ground metric for computing TWD between the \emph{samples}: 
\begin{equation}\label{eq:twd_between_samples}
    \mathbf{W}_r^{(0)}(i, i')= \mathrm{TW}(\mathbf{r}_i, \mathbf{r}_{i'}, \mathcal{T}(\mathbf{M}_c)) + \gamma_r \zeta(\mathbf{r}_i - \mathbf{r}_{i'}),
\end{equation}
where $\mathbf{r}_i = \bX_{i, :}^\top/\norm{\bX_{i, :}}_1$ is the $i$-th normalized sample, $\zeta$ is a norm regularize based on the snowflake penalty \citep{mishne2019co}, and $\gamma_r > 0$. 
For brevity, we write the resulting sample pairwise TWD matrix as 
\begin{equation}\label{eq:TWD_r}
\mathbf{W}_r^{(0)}=\Phi({\widehat{\mathbf{X}}_r};\mathcal{T}(\mathbf{M}_c)) \in \mathbb{R}^{n \times n},
\end{equation}
where $\widehat{\mathbf{X}}_r = [\mathbf{r}_1, \ldots, \mathbf{r}_n]^\top \in \mathbb{R}^{n \times m}$, and $\Phi$ denotes a function that computes the pairwise TWD between rows of the matrix $\widehat{\mathbf{X}}_r$ using the tree $\mathcal{T}(\mathbf{M}_c)$ defined over features.
A similar analogous construction is applied to the other mode (the samples). Given an initial pairwise distance matrix $\mathbf{M}_r \in \mathbb{R}^{n \times n}$ over the $n$ samples, we construct an initial \emph{sample} tree $\mathcal{T}(\mathbf{M}_r)$ with $n$ leaves and use it to compute TWD between the \emph{features}:
\begin{equation}\label{eq:twd_between_features}
    \mathbf{W}_c^{(0)}(j, j')= \mathrm{TW}(\mathbf{c}_j, \mathbf{c}_{j'}, \mathcal{T}(\mathbf{M}_r)) + \gamma_c \zeta(\mathbf{c}_j - \mathbf{c}_{j'}),
\end{equation}
where $\mathbf{c}_j = \bX_{:, j}/\norm{\bX_{:,j}}_1$ and $\gamma_c>0$. 
The resulting feature pairwise TWD is then written as
\begin{equation}\label{eq:TWD_c}
    \mathbf{W}_c^{(0)} = \Phi(\widehat{\mathbf{X}}_c; \mathcal{T}(\mathbf{M}_r)) \in \mathbb{R}^{m \times m},
\end{equation}
where $\widehat{\mathbf{X}}_c = [\mathbf{c}_1, \ldots, \mathbf{c}_m]^\top \in \mathbb{R}^{m \times n}$.
These matrices of the initial sample and feature pairwise TWDs in Eq.~\eqref{eq:TWD_r} and Eq.~\eqref{eq:TWD_c} serve as the starting point of the proposed iterative scheme.

Our iterative scheme alternates between the two data modes in a coordinate descent manner \cite{wright2015coordinate, friedman2010regularization}.
At iteration $l=0$, the initial pairwise distance matrices $\mathbf{M}_r$ and $\mathbf{M}_c$ are used to construct the corresponding sample and feature trees, respectively. These trees are then used to compute the initial TWDs in Eq.~\eqref{eq:TWD_r} and Eq.~\eqref{eq:TWD_c}.
For all subsequent iterations $l\geq 1$, the learned \emph{feature} TWD from the previous step is used to construct a new \emph{feature} tree, which is then used to compute the updated \emph{sample} TWD.
The same process is applied to the other mode: the \emph{sample} TWD is used to construct a new \emph{sample} tree, which in turn is used to compute the subsequent \emph{feature} TWD.
Formally, using the notation introduced above, the update steps 
at $l+1$ iteration are given by 
\begin{equation}\label{eq:alternative_update}
\mathbf{W}_r^{(l+1)}=\Phi(\widehat{\mathbf{X}}_r;\mathcal{T}(\mathbf{W}_c^{(l)}))\in \mathbb{R}^{n \times n}, \quad \mathbf{W}_c^{(l+1)}=\Phi(\widehat{\mathbf{X}}_c;\mathcal{T}(\mathbf{W}_r^{(l)})) \in \mathbb{R}^{m \times m}.
\end{equation}
This alternating scheme, outlined in Alg.~\ref{alg:Co_HD}, allows the hierarchical representation in one mode to iteratively inform and refine the representation in the other mode. 

\begin{theorem}\label{thm:convergence}
The sequences $\mathbf{W}_r^{(l)}$ and $\mathbf{W}_c^{(l)}$ generated by  Alg.~\ref{alg:Co_HD} have at least one limit point, and all limit points are fixed points if $\gamma_r, \gamma_c >0$.
\end{theorem}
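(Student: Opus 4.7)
My approach splits into establishing compactness (for existence of a limit point) and a continuity-plus-stability argument (for the fixed-point property). First, I would prove that the joint sequence $(\mathbf{W}_r^{(l)}, \mathbf{W}_c^{(l)})$ is uniformly bounded. The distributions $\mathbf{r}_i, \mathbf{c}_j$ are fixed and lie on probability simplices, the TWD in Eq.~\eqref{eq:TWD} is bounded by twice the sum of tree edge weights, the snowflake regularizer is a continuous function of the fixed normalized data, and the tree construction of \cite{lin2025tree} produces edge weights controlled by the diameter of its input distance matrix. A short induction on $l$ yields a uniform bound, so the iterates live in a compact set $\mathcal{K} \subset \mathbb{R}^{n\times n}_{\geq 0} \times \mathbb{R}^{m\times m}_{\geq 0}$, and Bolzano--Weierstrass guarantees at least one limit point $(\mathbf{W}_r^*, \mathbf{W}_c^*)$ along some subsequence $l_k \to \infty$.

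Next, I would analyze the update map $\Psi(\mathbf{W}_r, \mathbf{W}_c) := (\Phi(\widehat{\mathbf{X}}_r;\mathcal{T}(\mathbf{W}_c)),\,\Phi(\widehat{\mathbf{X}}_c;\mathcal{T}(\mathbf{W}_r)))$. This map is not globally continuous because $\mathcal{T}(\cdot)$ involves discrete merging choices via lowest common ancestors in the hyperbolic embedding. To handle this, I would use that there are only finitely many binary tree topologies on $m$ (resp. $n$) leaves, and pass to a further subsequence along which both $\mathcal{T}(\mathbf{W}_r^{(l_k)})$ and $\mathcal{T}(\mathbf{W}_c^{(l_k)})$ are stuck at fixed topologies $(\tau_r, \tau_c)$. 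On this restricted domain, the edge weights depend continuously on the input distances (via the hyperbolic embedding and LCA distance computation), and Eq.~\eqref{eq:TWD} is polynomial in edge weights and distributions, so $\Psi$ is continuous on the topology-class chamber. Consequently, $\Psi(\mathbf{W}_r^{(l_k)}, \mathbf{W}_c^{(l_k)}) \to \Psi(\mathbf{W}_r^*, \mathbf{W}_c^*)$.

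The main obstacle, and the step where the assumption $\gamma_r, \gamma_c > 0$ enters essentially, is to deduce $\Psi(\mathbf{W}_r^*, \mathbf{W}_c^*) = (\mathbf{W}_r^*, \mathbf{W}_c^*)$ rather than merely a further limit point. I would exploit that the snowflake terms $\gamma_r \zeta(\cdot), \gamma_c \zeta(\cdot)$ contribute a tree-independent, strictly positive component to the TWDs in Eqs.~\eqref{eq:twd_between_samples} and \eqref{eq:twd_between_features} whenever the underlying distributions differ. I expect this to yield a Lyapunov-type inequality that forces $\|\mathbf{W}_r^{(l_k+1)} - \mathbf{W}_r^{(l_k)}\| + \|\mathbf{W}_c^{(l_k+1)} - \mathbf{W}_c^{(l_k)}\| \to 0$ along the stabilized subsequence; combined with Step~2, this gives $\Psi(\mathbf{W}_r^*, \mathbf{W}_c^*) = (\mathbf{W}_r^*, \mathbf{W}_c^*)$. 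If this direct stability fails in its strongest form, I would fall back on a Kakutani-type argument applied to an upper-hemicontinuous set-valued extension of $\Psi$ across topology-chamber boundaries, which at least yields existence of a fixed-point limit, and then upgrade to the ``all limit points'' conclusion using the strict positivity induced by $\gamma_r, \gamma_c > 0$ to rule out non-fixed cluster points.
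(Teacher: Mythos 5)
Your compactness step and your handling of the tree decoder's discontinuity are sound and broadly parallel the paper's argument: the paper also confines the iterates to a compact product set of bounded tree/TWD matrices (using explicit two-sided bounds on $\mathrm{OT}(\cdot,\cdot,\mathbf{H})$ in terms of $\|\cdot\|_1$ and on the decoded tree metric via the hyperbolic embedding), and it also needs the decoded tree to be locally constant in its input — though it asserts this via an $\varepsilon$-separation ("marginal-separate") property of the distance entries rather than your finitely-many-topologies chamber decomposition. Your chamber argument is arguably the more careful way to phrase the same point, with the caveat you already note that a limit point sitting on a chamber boundary is not covered without the Kakutani fallback.

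The genuine gap is the asymptotic-regularity step, which is exactly the step that upgrades "a limit point exists" to "all limit points are fixed points." You write that the snowflake terms $\gamma_r\zeta(\cdot)$, $\gamma_c\zeta(\cdot)$ should "yield a Lyapunov-type inequality that forces $\|\mathbf{W}_r^{(l+1)}-\mathbf{W}_r^{(l)}\|+\|\mathbf{W}_c^{(l+1)}-\mathbf{W}_c^{(l)}\|\to 0$," but this mechanism cannot work as stated: in Alg.~\ref{alg:Co_HD} the normalized rows $\mathbf{r}_i$ and columns $\mathbf{c}_j$ are fixed once and for all, so $\zeta(\mathbf{r}_i-\mathbf{r}_{i'})$ and $\zeta(\mathbf{c}_j-\mathbf{c}_{j'})$ are the \emph{same constants at every iteration}. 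A constant additive offset to $\mathbf{W}_r^{(l)}$ and $\mathbf{W}_c^{(l)}$ cancels in any difference of successive iterates and therefore cannot drive those differences to zero; there is no objective function here that the iteration demonstrably decreases. The paper obtains asymptotic regularity by a different route: it argues that the composed update map $F(\mathbf{W}_r,\mathbf{W}_c)=(\Phi(\widehat{\mathbf{X}}_r;\mathcal{T}(\mathbf{W}_c)),\Phi(\widehat{\mathbf{X}}_c;\mathcal{T}(\mathbf{W}_r)))$ is non-expansive on the compact set, and then invokes asymptotic regularity plus Opial's lemma to conclude that every cluster point is a fixed point. The role of $\gamma_r,\gamma_c>0$ in the paper is not to create descent but to keep the iterated distance matrices uniformly bounded away from degeneracy so that the separation needed for local constancy of the tree decoder holds along the whole trajectory. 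To repair your proof you would need either to establish non-expansiveness (or an averaged/firmly non-expansive property) of the update map and then follow the Opial route, or to exhibit an actual monotone quantity; the regularizer alone supplies neither.
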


The proof is in App.~\ref{app:proof}.
Thm.~\ref{thm:convergence} implies the existence of a limit point to which the proposed iterative scheme converges.
Alg.~\ref{alg:Co_HD} can be implemented in practice without regularization, i.e., $\gamma_r=\gamma_c = 0$.
However, the conditions $\gamma_r, \gamma_c > 0$ are necessary for Thm.~\ref{thm:convergence}.

We remark that while other TWD methods \citep{IndykQuadtree2003, evans2012phylogenetic, yamada2022approximating, le2019tree, chen2024learning} could in principle be incorporated into our iterative framework, we chose the tree construction method \citep{lin2025tree} for two main reasons.
First, we empirically observe that computing sample and feature TWDs with these alternative TWD methods often fails to converge. 
Second, their use as cross-mode tree references would yield trees whose tree distances approximate the Wasserstein distance.
However, the Wasserstein metric is not inherently a hierarchical metric.
As a result, the trees derived from such approximations do not represent the hierarchy present in the data.
In contrast, constructing a hierarchical representation using \citep{lin2025tree} yields a tree whose geodesic (shortest path) distances reflect the hierarchical relationship underlying the data \citep{kileel2021manifold, coifman2006diffusion}.
As shown empirically in Sec.~\ref{sec:exp}, the trees obtained by Alg.~\ref{alg:Co_HD} yield meaningful hierarchical representations of features and samples across benchmarks from multiple domains, leading to improved performance compared to using other TWD methods within our iterative scheme. 
In addition, we show that the trees and the corresponding TWDs are progressively refined throughout the iterations.

\subsection{Haar wavelet filtering}\label{sec:haar_filtering}
To improve the refinement of trees across iterations, we apply a filtering step at every iteration. Specifically, our filters are constructed from Haar wavelets \citep{mallat1989multiresolution, mallat1999wavelet, haar1911theorie, daubechies1990wavelet}. 
It was shown that Haar wavelets can be derived adaptively from trees \citep{gavish2010multiscale}.
Here, we propose to build the Haar wavelets from the trees inferred through the iterative process.
Viewing each \emph{sample} (resp. \emph{feature}) as a signal supported on the \emph{feature} tree (resp. \emph{sample} tree) \citep{ortega2018graph} 
allows us to apply data-driven Haar wavelet filters.
Since by construction the filters reflect the intrinsic hierarchical structure of the data, they enhance this meaningful representation while suppressing noise and other nuisance components.

Given a feature tree $\mathcal{T}(\mathbf{M}_c)$, we construct a Haar basis $\mathbf{B}_c \in \mathbb{R}^{m \times m}$ associated with the tree (see Sec.~\ref{sec:background} and App.~\ref{app:more_background} for details). 
Subsequently, each \emph{sample} can be expanded in this Haar basis, and we denote $\bm{\alpha}_i =  (\bX_{i,:}\mathbf{B}_c)^\top \in  \mathbb{R}^m$ as the vector of the expansion coefficients of the $i$-th sample.
To define a wavelet filter, we select a subset of the Haar basis vectors in $\mathbf{B}_c$ as follows.
For each coefficient index $j$, we compute the aggregate $L_1$ norm $\sum_{i=1}^n |\bm{\alpha}_i(j)|$ and sort these values in descending order. Then, we sequentially add the corresponding indices to $\Omega$ until the cumulative contribution $\eta_{\Omega} = \sum_{q \in \Omega} \sum_{i=1}^n \abs{\bm{\alpha}_{i}(q)}$ exceeds a threshold $\vartheta_c  > 0$.
Let $\widehat{\mathbf{B}}_c\in  \mathbb{R}^{m \times d}$ denote the
matrix consisting of the $d$ basis vectors in $\Omega$.
The filtering step, which yields the \emph{filtered samples}, is defined as
\begin{equation}\label{eq:haar_filter}
     \Psi(\bX;\mathcal{T}(\mathbf{M}_c)) =(\bX\widehat{\mathbf{B}}_c)\widehat{\mathbf{B}}_c^\top \in \mathbb{R}^{n \times m}, 
\end{equation}
where $\Psi$ denotes a wavelet filtering operator applied to the rows of the matrix $\bX$, using the Haar wavelet induced by the tree $\mathcal{T}(\mathbf{M}_c)$ defined over features.  
An analogous wavelet filter can be constructed from a \emph{sample} tree $\mathcal{T}(\mathbf{M}_r)$ and applied to the \emph{features}:
\begin{equation}\label{eq:filtered_feature}
    \Psi(\bZ;\mathcal{T}(\mathbf{M}_r)) = (\bZ\widehat{\mathbf{B}}_r) \widehat{\mathbf{B}}_r^\top \in \mathbb{R}^{m \times n}, 
\end{equation}
where $\bZ = \bX^\top$, and $\widehat{\mathbf{B}}_r\in \mathbb{R}^{n \times d'}$ consists of the top $d'$ basis vectors selected using a threshold $\vartheta_r > 0$ on cumulative coefficient magnitude. 

We can apply this \emph{Tree Haar Wavelet filtering} in our joint iterative scheme to update the trees as follows.
Given initial inputs $\bX^{(0)}=\bX, \bZ^{(0)}=\bX^\top$ and using the notation introduced above, 
at each iteration $l\geq 0$, we filter the data using the Haar basis vectors derived from the trees:
\begin{equation}
\bX^{(l+1)}=\Psi(\bX^{(l)};\mathcal{T}(\widehat{\mathbf{W}}_c^{(l)})) \in \mathbb{R}^{ n \times m}, \quad \bZ^{(l+1)}= \Psi(\bZ^{(l)};\mathcal{T}(\widehat{\mathbf{W}}_r^{(l)})) \in \mathbb{R}^{m \times n}, 
\end{equation}
where $\widehat{\mathbf{W}}_r^{(0)}= \mathbf{M}_r$ and $\widehat{\mathbf{W}}_c^{(0)}= \mathbf{M}_c$ for iteration $l=0$. 
The filtered data is normalized into discrete histograms\footnote{\scriptsize{The resulting filtered data may contain negative values. To represent it as a histogram, we subtract the vector with its minimum value.}}.
Then we refine the trees and TWDs based on the normalized Haar-filtered data:
\begin{equation}
    \widehat{\mathbf{W}}_r^{(l+1)} = \Phi(\widehat{\mathbf{X}}_r^{(l+1)};\mathcal{T}(\widehat{\mathbf{W}}_c^{(l)}))\in \mathbb{R}^{n \times n}, \quad \widehat{\mathbf{W}}_c^{(l+1)}=\Phi(\widehat{\mathbf{X}}_c^{(l+1)};\mathcal{T}(\widehat{\mathbf{W}}_r^{(l)}))\in \mathbb{R}^{m \times m}, 
\end{equation}
where $\widehat{\mathbf{X}}_r^{(l+1)}$ and $\widehat{\mathbf{X}}_c^{(l+1)}$ are the column-normalized matrices of $\mathbf{X}^{(l+1)}$ and $\mathbf{Z}^{(l+1)}$, respectively. 
We summarize this iterative learning scheme with the Haar wavelet filters in Alg.~\ref{alg:wavelet_co-HD}.

\begin{theorem}\label{thm:convergence_update_X}
The sequences $\widehat{\mathbf{W}}_r^{(l)}$ and $\widehat{\mathbf{W}}_c^{(l)}$ generated by Alg.~\ref {alg:wavelet_co-HD} have at least one limit point, and all limit points are fixed points if $\gamma_r, \gamma_c >0$.
\end{theorem}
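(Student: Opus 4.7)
The plan is to extend the argument used to prove Thm.~\ref{thm:convergence} to the augmented iteration of Alg.~\ref{alg:wavelet_co-HD}, which interleaves the TWD updates with Haar wavelet filtering. Its two-part structure is preserved: first, I would establish that the iterates lie in a compact set so that limit points exist by Bolzano--Weierstrass; second, I would verify that the joint update map is regular enough that every such limit point satisfies the fixed-point relation.

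For compactness, I would observe that every intermediate quantity in Alg.~\ref{alg:wavelet_co-HD} is uniformly bounded. The filter $\Psi(\,\cdot\,;\mathcal{T}(\,\cdot\,))$ is an orthogonal projection onto a subspace spanned by a subset of an orthonormal Haar basis, hence a $1$-Lipschitz linear operator. After the shift-and-normalize step described in Sec.~\ref{sec:haar_filtering}, the columns of $\widehat{\mathbf{X}}_r^{(l+1)}$ and $\widehat{\mathbf{X}}_c^{(l+1)}$ lie in probability simplices. The TWD in Eq.~\eqref{eq:TWD} is bounded above by the total edge weight of its tree argument, which is in turn controlled by the distance matrix input to $\mathcal{T}$. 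Together these bounds confine $(\widehat{\mathbf{W}}_r^{(l)},\widehat{\mathbf{W}}_c^{(l)})$ to a compact subset of nonnegative symmetric matrices, so a convergent subsequence exists.

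For the fixed-point step, let $F$ denote the joint update $(\widehat{\mathbf{W}}_r^{(l)},\widehat{\mathbf{W}}_c^{(l)}) \mapsto (\widehat{\mathbf{W}}_r^{(l+1)},\widehat{\mathbf{W}}_c^{(l+1)})$ of Alg.~\ref{alg:wavelet_co-HD}. As in Thm.~\ref{thm:convergence}, the composition $\Phi(\,\cdot\,;\mathcal{T}(\,\cdot\,))$ is continuous provided $\gamma_r,\gamma_c>0$; this is precisely where the snowflake regularization is required, since it removes degeneracies in the hyperbolic embedding and tree-construction stages that would otherwise render $\mathcal{T}$ set-valued. The genuinely new ingredient is the wavelet filter $\Psi$, which is piecewise linear but can jump whenever the greedy selection of $\Omega$ (and its sample-side counterpart) changes.

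The main obstacle is precisely this discontinuity, and I would dispose of it by a finite branch decomposition. Since $\Omega\subset\{1,\dots,m\}$ can take only finitely many values (and likewise for the sample side), from any convergent subsequence one may extract a further subsequence along which the pair of selections $(\Omega_r^\star,\Omega_c^\star)$ is constant. On this subsequence $F$ coincides with a single branch $F_{\Omega_r^\star,\Omega_c^\star}$ in which $\Psi$ is replaced by a fixed orthogonal projection, and by continuity of $\mathcal{T}$, $\Phi$, and linear projection this branch is continuous. Passing to the limit yields a fixed-point identity for $F_{\Omega_r^\star,\Omega_c^\star}$ at the limit $(\widehat{\mathbf{W}}_r^\star,\widehat{\mathbf{W}}_c^\star)$. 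To upgrade this to a fixed point of $F$ itself, I would use upper semicontinuity of the greedy selection rule in the underlying coefficient magnitudes: ties in the cumulative ordering occupy only a codimension-one set, so the selection at sufficiently late iterates of the extracted subsequence coincides with that at its limit, forcing the selection at $(\widehat{\mathbf{W}}_r^\star,\widehat{\mathbf{W}}_c^\star)$ to equal $(\Omega_r^\star,\Omega_c^\star)$. Combining the identity for $F_{\Omega_r^\star,\Omega_c^\star}$ with this selection-matching gives $F(\widehat{\mathbf{W}}_r^\star,\widehat{\mathbf{W}}_c^\star)=(\widehat{\mathbf{W}}_r^\star,\widehat{\mathbf{W}}_c^\star)$, yielding both conclusions of Thm.~\ref{thm:convergence_update_X}.
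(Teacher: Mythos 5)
Your compactness argument matches the paper's: both rest on the filter $\Psi$ being a norm-non-increasing linear map composed with normalization, so the iterates stay in a compact set and Bolzano--Weierstrass gives a limit point. Your finite-branch decomposition for the greedy selection of $\Omega$ is also a reasonable (and in fact more explicit) way to handle the piecewise nature of $\Psi$ than what the paper does, which simply folds the selection into a projection operator $P^{(l)}$ inside the composition $H^{-1}\circ S_{\vartheta}\circ P^{(l)}\circ H$.

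However, there is a genuine gap in your fixed-point step. From $\widehat{\mathbf{W}}^{(l_k)}\to\widehat{\mathbf{W}}^{\star}$ and continuity of the branch $F_{\Omega_r^\star,\Omega_c^\star}$ you can only conclude $\widehat{\mathbf{W}}^{(l_k+1)}=F(\widehat{\mathbf{W}}^{(l_k)})\to F_{\Omega_r^\star,\Omega_c^\star}(\widehat{\mathbf{W}}^{\star})$; to identify this limit with $\widehat{\mathbf{W}}^{\star}$ you need $\widehat{\mathbf{W}}^{(l_k+1)}\to\widehat{\mathbf{W}}^{\star}$ as well, i.e.\ asymptotic regularity $\lVert\widehat{\mathbf{W}}^{(l+1)}-\widehat{\mathbf{W}}^{(l)}\rVert\to 0$. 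Without it, a limit point could lie on a periodic cycle (e.g.\ $F(a)=b$, $F(b)=a$ gives two limit points, neither a fixed point), and your "passing to the limit yields a fixed-point identity" does not go through. This is precisely the step the paper's proof supplies: it argues that each filter is (firmly) non-expansive, that the composed update map is therefore non-expansive, that non-expansiveness plus boundedness yields asymptotic regularity, and only then invokes Opial's lemma to conclude that every cluster point is a fixed point. Your proposal never establishes any contraction, averagedness, or monotone-energy property of the joint update, so the conclusion that \emph{all} limit points are fixed points is unsupported. (A secondary, smaller issue: your tie-breaking/upper-semicontinuity argument assumes the limit point avoids the codimension-one tie set, which is not guaranteed; but the asymptotic-regularity omission is the essential missing ingredient.)
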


\begin{algorithm}[t]
\caption{Haar Filtering over Alternating Tree Refinement}
\label{alg:wavelet_co-HD}
\begin{algorithmic}
\State \textbf{Input:} Data matrix $\mathbf{X}\in\mathbb{R}_+^{n \times m}$,  $\mathbf{M}_c\in\mathbb{R}^{m \times m}$ and $\mathbf{M}_r\in\mathbb{R}^{n \times n}$, $\gamma_r$ and $\gamma_c$, thresholds $\vartheta_c$ and $\vartheta_r$
\State \textbf{Output:} Trees $\mathcal{T}(\widehat{\mathbf{W}}_r^{(l)})$ and $\mathcal{T}(\widehat{\mathbf{W}}_c^{(l)})$, and TWDs $\widehat{\mathbf{W}}_r^{(l)}$ and $\widehat{\mathbf{W}}_c^{(l)}$
\vspace{2 mm}
\State $l \gets 0$, $\bX^{(0)} \gets \bX$, $\bZ^{(0)} \gets \mathbf{X}^\top$, $\widehat{\mathbf{W}}_r^{(0)} \gets \mathbf{M}_r$ and $\widehat{\mathbf{W}}_c^{(0)} \gets \mathbf{M}_c$ \Comment{Initialization}
\State \textbf{repeat}
    \State \quad $\bX^{(l+1)} \gets \Psi(\bX^{(l)};\mathcal{T}(\widehat{\mathbf{W}}_c^{(l)}))$ and $\bZ^{(l+1)} \gets \Psi(\bZ^{(l)};\mathcal{T}(\widehat{\mathbf{W}}_r^{(l)}))$ \Comment{Tree haar wavelet filtering}
    \State \quad $\widehat{\mathbf{X}}_r^{(l+1)} \gets \left\{\mathbf{r}_i^{(l+1)}=\left(\bX_{i, :}^{(l+1)}\right)^\top / \|\bX_{i, :}^{(l+1)}\|_1\right\}$
    \State \quad $\widehat{\mathbf{X}}_c^{(l+1)} \gets \left\{\mathbf{c}_j^{(l+1)} = \left(\bZ_{j,:}^{(l+1)}\right)^\top / \|\bZ_{j, :}^{(l+1)}\|_1\right\}$
    \State \quad $\widehat{\mathbf{W}}_r^{(l+1)} \gets \Phi(\widehat{\mathbf{X}}_r^{(l+1)};\mathcal{T}(\widehat{\mathbf{W}}_c^{(l)}))$ and $\widehat{\mathbf{W}}_c^{(l+1)} \gets \Phi(\widehat{\mathbf{X}}_c^{(l+1)};\mathcal{T}(\widehat{\mathbf{W}}_r^{(l)}))$ \Comment{Iterative update}
    \State \quad $l \gets l + 1$
\State \textbf{until} convergence
\end{algorithmic}
\end{algorithm}

The proof of Thm.~\ref{thm:convergence_update_X} is in App.~\ref{app:proof}.
Same as 
Thm.~\ref{thm:convergence}, Thm.~\ref{thm:convergence_update_X} implies the existence of a limit point to which the proposed iterative scheme with the addition of the Haar wavelet filters converges.
We argue that the resulting data-driven wavelet filters attenuate noise and other nuisance components \citep{gavish2010multiscale}, and therefore improve the quality of the learned trees. 
As shown in Sec.~\ref{sec:exp},  this filtering step contributes to more informative hierarchical representations underlying the high-dimensional data, resulting in superior performance on various tasks.
We note that while constructing the filters using $L_1$-based selection criterion \citep{coifman1992entropy} is effective, alternative filtering strategies \citep{cohen1992biorthogonal, hammond2011wavelets, shuman2013emerging, tremblay2014graph, leonardi2013tight, dong2016learning} can be considered, depending on the downstream tasks.
We leave this extension for future work.

We conclude this section with a few remarks.
First, while our iterative procedure is broadly applicable to various TWD methods \citep{IndykQuadtree2003, evans2012phylogenetic, yamada2022approximating, le2019tree, chen2024learning},  the theoretical results and the ability to obtain meaningful hierarchical representations rely on our choice of the specific tree construction method~\citep{lin2025tree}. 
Second, in each iteration, our method can be computed in $O(n^{1.2} + m^{1.2})$ \citep{lin2025tree, shen2022scalability} (see App.~\ref{app:more_exp_details}).
In contrast, a na\"{i}ve computation requires $ O(mn^3 + m^3 \log m+ nm^3 + n^3 \log n)$, making our method more efficient.
Third, similar to WSV \citep{huizing2022unsupervised} and Tree-WSV \citep{dusterwald2025fast}, our approach can be viewed as an unsupervised ground metric learning technique \citep{cuturi2014ground}. However, its specific focus on learning hierarchical representations for both samples and features distinguishes it from these methods and leads to superior empirical performance for hierarchical data (see Sec.~\ref{sec:exp}). 
For further comparison with Tree-WSV, we refer to App.~\ref{app:additional_related_work} and  App.~\ref{app:Comparison with Tree-WSV}.
Finally, we remark that either the sample or the feature hierarchical structure can be provided as a prior and used for initialization of our method (see  Sec.~\ref{sec:exp_hgcn}).

\section{Incorporating learned hierarchical representation within HGCNs}\label{sec:exp_hgcn}

In Sec.~\ref{sec:proposed_method}, we introduced a joint hierarchical representation learning framework using TWD. 
Typically, the initial pairwise distance matrices $\mathbf{M}_r$ and $\mathbf{M}_c$ are data-driven, e.g., using standard metrics such as the Euclidean distance or the cosine similarity.
An advantage of our method is that it can incorporate prior knowledge when a hierarchical structure is available for one of the modes.
Without loss of generality, we consider scenarios where a hierarchical structure over the $n$ samples is known and represented by a graph $H = ([n], E, \mathbf{A})$, with $[n] = \{1, \ldots, n\}$.
To integrate this prior, we initialize $\mathbf{M}_r$ using the shortest-path distances $d_H$ induced by $H$.
This initialization introduces a structured prior in one mode while allowing the hierarchy in the other mode to be learned jointly via TWD.

Incorporating such a prior is particularly relevant for hierarchical graph data \citep{sen2008collective, chami2019hyperbolic, anderson1991infectious}, where the node hierarchy is provided, while the feature structure remains implicit. We demonstrate the compatibility of our approach with hyperbolic graph convolutional networks (HGCNs) \citep{chami2019hyperbolic, dai2021hyperbolic}. 
Specifically, during the neighborhood aggregation step in HGCNs, we replace the predefined hierarchical graph with the sample tree inferred by our method after convergence. This learned tree guides the aggregation process, enabling hierarchy-aware message passing that reflects structured relations among samples and across features.
As shown in Sec.~\ref{sec:exp}, incorporating our method as a pre-processing step improves performance on link prediction (LP) and node classification (NC) for hierarchical graph datasets.
While our focus here is on HGCNs due to their improved performance on NC and LP, the same initialization strategy can be adopted in other hyperbolic architectures, such as hyperbolic neural networks (HNNs) \citep{ganea2018hyperbolic, shimizu2020hyperbolic}. 
Further technical details on this integration are provided in App.~\ref{app:hgcn}.


\section{Experimental results}\label{sec:exp}

We evaluate our methods on sparse approximation and unsupervised Wasserstein distance learning with word-document and scRNA-seq benchmarks. 
Additionally, we examine the integration of our methods into HGCNs for hierarchical graph data on LP and NC tasks.
The implementation details, including hyperparameters, are reported in App.~\ref{app:more_exp_details}.
Additional experiments, e.g., empirical convergence, ablation study, runtime analysis, and co-clustering performance, are presented in App.~\ref{app:more_exp_results}.

\subsection{Evaluating hierarchical representations via sparse approximation}\label{sec:exp_sparse_approximation}

We first demonstrate the advantages of the learned trees from Alg.~\ref{alg:Co_HD} and Alg.~\ref{alg:wavelet_co-HD} for sparse approximation tasks on the data matrix. 
The quality of the feature tree (and similarly, the sample tree) is evaluated by the $L_1$ norm of their expansion coefficients across all samples
(and the features, respectively) \cite{starck2010sparse, cohen2001tree, candes2006robust}.
A lower $L_1$ norm indicates a more efficient (sparser) representation of the data using fewer significant Haar coefficients, thus indicating the learned tree structures better reflect the hierarchical information of the data \citep{gavish2010multiscale, gavish2012sampling}. 
We test four word-document datasets \citep{kusner2015word}: BBCSPORT, TWITTER, CLASSIC, and AMAZON, and two scRNA-seq datasets \citep{dumitrascu2021optimal}: ZEISEL and CBMC. 
Both types of data exhibit hierarchical structures in their features and samples \citep{blei2010nested, cao2019single}.
In document data, words (features) form semantic hierarchies (e.g., animal → mammal → dog), while documents (samples) follow topic–subtopic structures (e.g., science → biology → genomics). In scRNA-seq data, genes (features) are organized by functional relationships such as biological pathways and gene ontologies (e.g., immune response genes → cytokine genes → specific interleukins), and cells (samples) follow developmental or taxonomic hierarchies (e.g., hematopoietic stem cell → progenitor cell → mature blood cell types), which are widely modeled as hierarchical in computational biology \cite{peng2020single}.
Additional information about these datasets can be found in App.~\ref{app:more_exp_details}.

\begin{table*}[t]
\caption{The $L_1$ norm of the Haar expansion coefficients. Values are reported in the format $\mathrm{samples}$ / $\mathrm{features}$ (the lowest in bold and the second lowest underlined).} 
\begin{center}
\resizebox{0.9\textwidth}{!}{%
\begin{tabular}{lccccccccccr}
\toprule
    
   &  BBCSPORT &  TWITTER & CLASSIC & AMAZON & & ZEISEL & CBMC  \\
\midrule
Co-Quadtree  & 26.6 / 27.8 & 59.5 / 24.9 & 64.3 / 108.7 &  87.3 / 102.7 & & 157.0 / 242.4 & 1616.6 / 77.3 \\
Co-Flowtree  & 27.4 / 31.5 & 69.1 / 20.9 & 73.7 / 97.5 &  88.1 / 110.1 & & 173.7 / 240.0& 1688.8 / 81.7 \\
Co-WCTWD  & 26.5 / 26.7 & 57.6 / 17.1 & 63.3 / 81.7 & 77.4 / 96.4& & 136.4 / 202.9& 1308.1 / 68.4 \\
Co-WQTWD  & 25.2 / 32.1 & 56.9 / 30.2 & 61.3 / 100.1 &  74.7 / 111.0& & 135.0 / 224.8& 1324.8 / 67.4\\
Co-UltraTree  & 37.6 / 32.1 & 69.8 / 18.8 & 76.0 / 125.5 & 86.3 / 133.3& &155.4 / 226.6& 1450.0 / 82.2 \\
Co-TSWD-1  & 26.0 / 29.6 & 70.1 / 15.0 & 69.8 / 91.6 & 100.4 / 111.2&& 179.5 / 211.5& 1716.1 / 79.5  \\
Co-TSWD-5  & 30.5 / 24.7 & 58.9 / 16.0 & 65.7 / 98.0 & 83.3 / 102.3& & 170.2 / 234.7 & 1560.2 / 70.7  \\
Co-TSWD-10  & 21.1 / 34.5 & 52.6 / 23.6 & 59.3 / 140.8 & 74.5 / 135.8& & 141.4 / 229.7 & 1373.3 / 81.2  \\
Co-SWCTWD  & 35.0 / 29.5 & 69.0 / 24.7 & 79.8 / 93.6 &  95.9 / 104.6& & 170.4 / 215.9& 1595.0 / 89.7\\
Co-SWQTWD  & 33.5 / 25.6 & 57.6 / 13.9 & 61.3 / 86.0 &  77.0 / 98.9& & 141.8 / 209.6 & 1369.8 / 68.0\\
Co-MST-TWD   & 30.4 / 34.1 & 69.1 / 22.1 & 77.5 / 109.3& 97.0 / 126.4& &179.1 / 254.0 
 &1755.3 / 83.7\\
Co-TR-TWD   & 36.8 / 24.5 & 59.1 / 15.2 & 66.1 / 94.7 &  82.2 / 102.3& & 124.6 / 238.6 &926.8 / 68.8  \\
Co-HHC-TWD  & 22.5 / 22.7 & 51.5 / 13.6 & 58.7 / 93.3 & 74.1 / 110.1& & 192.1 / 215.5 & 1148.8 / 79.0\\
Co-gHHC-TWD  & 27.9 / 10.6 & 65.0 / 16.7 & 72.8 / 112.9& 88.5 / 115.5& & 162.7 / 240.7 &1612.8 / 70.8\\
Co-UltraFit-TWD  & 22.5 / 22.1 & 51.9 / 13.4 & 58.4 / 81.3 & 73.1 / 92.8& & 133.7 / 202.0 & 1294.6 / 78.1 \\
QUE & 22.8 / 19.8 & 57.8 / 10.4 & 71.6 / 67.6 & 88.8 / 72.0& & 93.1 / 173.3 & 906.4 / 58.5 \\
Tree-WSV & 23.6 / 24.9 & 54.3 / 18.2 & 65.4 / 99.2 & 84.2 / 106.3 & & 139.7 / 201.9 & 1637.4 / 73.2\\
\midrule
Alg.~\ref{alg:Co_HD}  & \underline{12.9} / \underline{7.0} & \underline{25.4} / \underline{3.7} & \underline{40.4} / \underline{24.6}& \underline{51.0} / \underline{30.1} & & \underline{64.4} / \underline{110.8} &  \underline{511.3} / \underline{50.1} \\
Alg.~\ref{alg:wavelet_co-HD}   & \textbf{10.1} / \textbf{4.8} & \textbf{22.4} / \textbf{3.4} &  \textbf{37.2} / \textbf{19.4} & \textbf{46.6} / \textbf{26.6} & &  \textbf{50.9} / \textbf{93.7} & \textbf{489.4} / \textbf{45.6} \\
\bottomrule
\end{tabular} }
\label{tab:sparse_approximation}
\end{center}
\end{table*}

\begin{table*}[t]
\caption{Document and single-cell classification accuracy.
} 
\begin{center}
\resizebox{0.88\textwidth}{!}{%
\begin{tabular}{lccccccccccr}
\toprule  
   &  BBCSPORT &  TWITTER & CLASSIC & AMAZON & &ZEISEL & CBMC \\
\midrule
Co-Quadtree  & 96.2$\pm$0.4 & 69.6$\pm$0.3 & 95.9$\pm$0.2 & 89.4$\pm$0.2  & & 81.7$\pm$1.0 & 80.7$\pm$0.3\\
Co-Flowtree  & 95.7$\pm$0.9 & 71.5$\pm$0.7 & 95.6$\pm$0.5 & 91.4$\pm$0.4 & & 84.3$\pm$0.7 & 83.0$\pm$1.2\\
Co-WCTWD  & 93.2$\pm$1.2 & 70.2$\pm$2.1 & 94.7$\pm$2.6 & 87.4$\pm$1.0 & & 82.5$\pm$2.9 & 79.4$\pm$2.1\\
Co-WQTWD  & 95.7$\pm$1.8 & 70.7$\pm$2.2 & 95.5$\pm$1.3 & 88.2$\pm$2.1 & & 82.3$\pm$3.1 & 80.5$\pm$2.8 \\
Co-UltraTree  & 95.3$\pm$1.4 & 70.1$\pm$2.8 & 93.6$\pm$2.0 & 86.5$\pm$2.8 & & 85.8$\pm$1.1 & 84.6$\pm$1.3\\
Co-TSWD-1  &  88.2$\pm$1.4 & 70.4$\pm$1.2 & 94.7$\pm$0.9& 86.1$\pm$0.5 & & 80.2$\pm$1.4 & 73.2$\pm$1.0 \\
Co-TSWD-5  & 88.7$\pm$1.7 & 71.0$\pm$1.5 & 96.7$\pm$0.8 & 91.5$\pm$0.4 & & 82.0$\pm$0.9 & 75.4$\pm$0.7\\
Co-TSWD-10  & 89.2$\pm$1.1 & 71.4$\pm$1.8 & 95.5$\pm$0.2 & 91.8$\pm$0.7 & & 83.8$\pm$0.5 & 77.2$\pm$0.9\\
Co-SWCTWD  & 93.5$\pm$2.4 & 70.5$\pm$1.0 & 94.4$\pm$1.3 & 90.7$\pm$1.5 & & 82.7$\pm$1.7 & 79.0$\pm$0.9\\
Co-SWQTWD  & 96.2$\pm$1.2 & 72.4$\pm$2.1 & 96.0$\pm$1.1 & 90.6$\pm$2.3 & & 82.4$\pm$1.4 & 81.3$\pm$1.1\\
Co-MST-TWD   &  88.7$\pm$2.4 & 68.4$\pm$3.3 & 91.3$\pm$2.9 & 87.1$\pm$1.4  & & 80.1$\pm$2.8 & 76.5$\pm$1.3\\
Co-TR-TWD   & 89.5$\pm$1.2 & 70.9$\pm$1.7 & 93.4$\pm$2.2 & 89.5$\pm$1.4  & & 80.7$\pm$0.8 & 78.5$\pm$0.9\\
Co-HHC-TWD  &  86.1$\pm$2.1 & 70.1$\pm$1.3 & 93.6$\pm$1.5 & 88.5$\pm$0.5 & & 83.2$\pm$1.4 & 77.6$\pm$0.8\\
Co-gHHC-TWD  & 84.0$\pm$2.0 & 70.4$\pm$1.6 & 90.7$\pm$1.7 & 87.2$\pm$1.9 & & 79.9$\pm$1.4 & 84.2$\pm$1.2\\
Co-UltraFit-TWD  &  86.8$\pm$0.9 & 70.9$\pm$1.1 & 91.9$\pm$1.0 & 89.9$\pm$2.0 & & 83.7$\pm$2.9 & 79.1$\pm$1.8\\
QUE & 84.7$\pm$0.5 &72.4$\pm$0.6 & 91.9$\pm$0.5  & 91.6$\pm$0.9 & & 83.6$\pm$1.4 & 82.5$\pm$1.9\\
WSV  & 85.9$\pm$1.0 &71.4$\pm$1.3 & 92.6$\pm$0.7 & 89.0$\pm$1.5 & & 81.6$\pm$2.4 & 77.5$\pm$1.7\\
Tree-WSV & 86.3$\pm$1.5 & 71.2$\pm$1.9 & 92.4$\pm$1.0 & 88.7$\pm$1.9 & & 82.0$\pm$2.9 & 76.4$\pm$2.4  \\
\midrule
Alg.~\ref{alg:Co_HD}  & \underline{96.7$\pm$0.3} & \underline{74.1$\pm$0.5} & \underline{97.3$\pm$0.2} & \underline{94.0$\pm$0.4} & & \underline{90.1$\pm$0.4} & \underline{86.7$\pm$0.5}\\
Alg.~\ref{alg:wavelet_co-HD}  & \textbf{97.3}$\pm$0.5 & \textbf{76.7}$\pm$0.7 & \textbf{97.6}$\pm$0.1 & \textbf{94.2}$\pm$0.2 & & \textbf{94.0}$\pm$0.6 & \textbf{93.3}$\pm$0.7\\
\bottomrule
\end{tabular} 
}

\label{tab:classification}
\end{center}
\end{table*}

We propose to learn hierarchical representations for both samples and features, where each informs the other through TWD. 
To the best of our knowledge, this approach to hierarchical representation learning has not been previously explored. To demonstrate its effectiveness, we compare Alg.~\ref{alg:Co_HD} and Alg.~\ref{alg:wavelet_co-HD} against existing TWD-based methods as follows.
For a fair comparison, we adapt each baseline TWD method to our iterative setting.
 Specifically, we begin by using the competing method to compute the sample TWD matrix, from which the sample tree is constructed to approximate the corresponding tree-based Wasserstein ground metric. This sample tree is then used to compute the feature TWD matrix, which in turn defines the feature tree.
At each iteration, the same baseline method is used to compute the TWDs for samples and features.
The competing TWD methods include: Quadtree \citep{IndykQuadtree2003}, Flowtree \citep{backurs2020scalable}, TSWD \citep{le2019tree}, UltraTree \citep{chen2024learning}, weighted cluster TWD (WCTWD),  weighted  Quadtree TWD (WQTWD) \citep{yamada2022approximating}, their sliced variants SWCTWD and SWQTWD, MST \citep{prim1957shortest}, Tree Representation (TR) \citep{sonthalia2020tree}, gradient-based hierarchical clustering (HC) in hyperbolic space (gHHC) \citep{monath2019gradient}, gradient-based Ultrametric Fitting (UltraFit) \citep{chierchia2019ultrametric}, and HC by hyperbolic Dasgupta's cost (HHC) \citep{chami2020trees}. 
See App.~\ref{app:additional_related_work} for details on these methods. 
We use the prefix ``co-''  to denote the adaptation of each method to our iterative framework.
In addition, we include comparisons with co-manifold learning that involves trees induced by a diffusion embedding (QUE)~\citep{ankenman2014geometry}, and Tree-WSV~\citep{dusterwald2025fast}, which learns unsupervised ground metrics based on tree approximation of WSV \citep{huizing2022unsupervised}.

\begin{wrapfigure}[16]{r}{0.5\textwidth}  
     \includegraphics[width=0.5\textwidth]{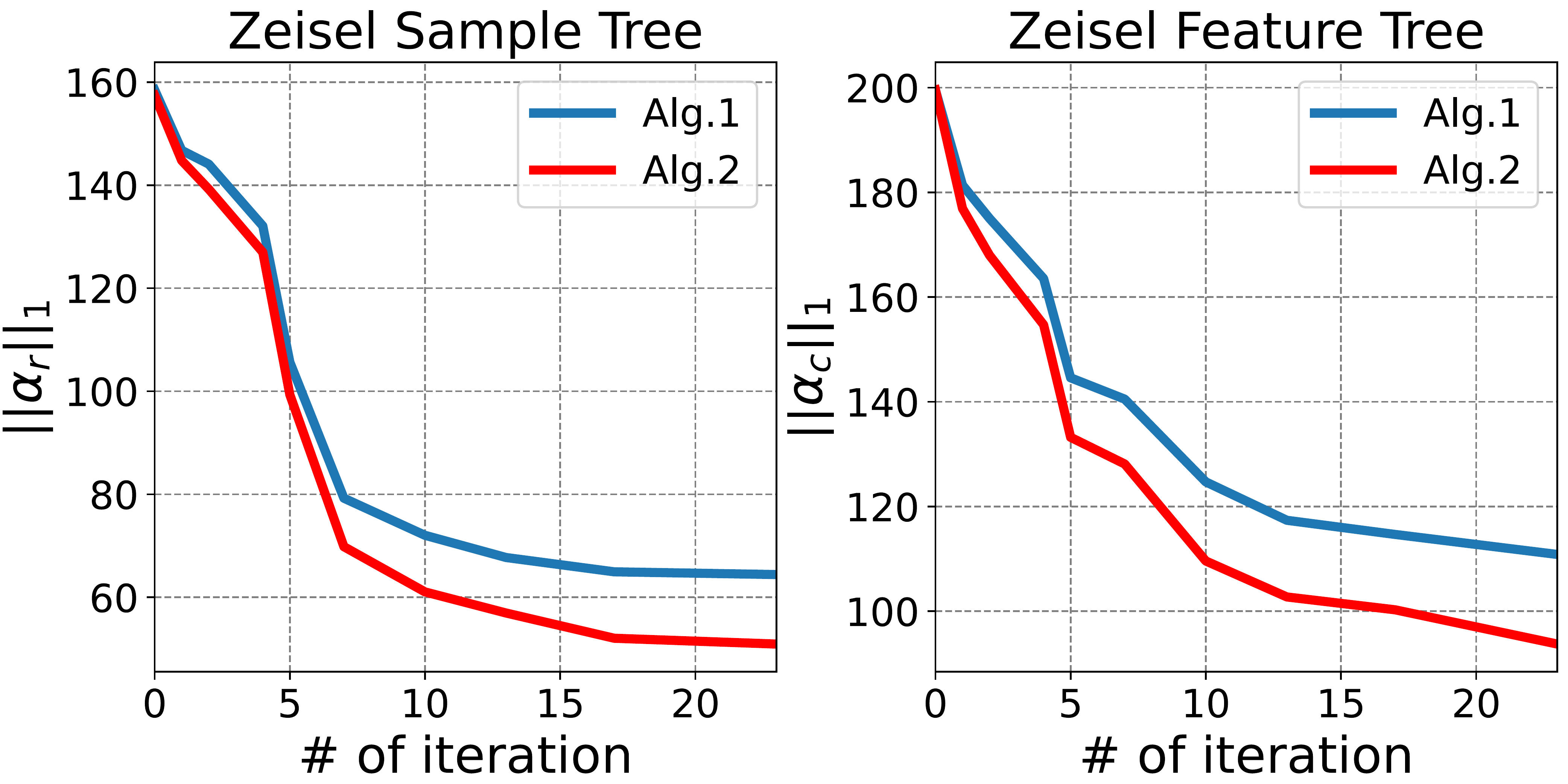}
    \caption{The $L_1$ norm of the Haar coefficients from the sample tree (left) and the feature tree (right) during the sparse approximation task across iterations on the ZEISEL dataset.
    }
    \label{fig:ZEISEL_sparse_approximation}
\end{wrapfigure} 
Tab.~\ref{tab:sparse_approximation} reports the $L_1$ norm of the Haar coefficients across all samples and all features, respectively.
We report the value after convergence for our methods, and for baselines, we either report it after convergence or, if convergence is not achieved, the obtained value after 25 iterations.
We see that our methods provide a more efficient representation, showing that the sparsity and quality of the trees produced by our methods are superior and outperform baselines. 
Fig.~\ref{fig:ZEISEL_sparse_approximation} shows the $L_1$ norm  of the Haar expansion coefficients
%
obtained by the proposed methods across iterations on ZEISEL dataset, where we observe that the $L_1$ norm is iteratively reduced and reaches convergence.
Note that this is not the objective we are minimizing but a consequence of our method of learning well hierarchical representations of the data.
Notably, Alg.~\ref{alg:wavelet_co-HD} consistently achieves better sparse approximation performance than Alg.~\ref{alg:Co_HD}.
We attribute this improvement to the wavelet filtering step, which jointly considers the data and the structure and further improves the quality of the learned hierarchies at each iteration.
One might ask whether wavelet filters could similarly benefit other TWD-based baselines. In App.~\ref{app:more_exp_results}, we test this by applying wavelets using trees constructed from various TWD baselines. The results show that it does not consistently improve the quality of the tree representations. We argue that this is because the trees in these methods are primarily designed to approximate the Wasserstein distance as the ground metric, rather than to represent the hierarchical structure of the data. Therefore, these trees are not faithful hierarchical representations of the data, and thus, applying wavelet filters that depend on both data and tree structure fails to improve hierarchical representation learning.

\subsection{Document and single-cell classification using learned TWD}\label{sec:exp_metric_learning}

We further demonstrate the effectiveness of the TWDs obtained by our methods through document and cell classification tasks.
We compare our results with the same competing methods used in Sec.~\ref{sec:exp_sparse_approximation}, and additionally include WSV \citep{huizing2022unsupervised} that learns unsupervised ground metrics as a baseline.
Classification is performed using $k$NN based on the obtained distances, with cross-validation over five trials. Each trial randomly splits the dataset into $70\%$ training and $30\%$ testing sets.


Tab.~\ref{tab:classification} shows the document and single-cell classification accuracy.
The accuracy of our methods is based on the TWDs after convergence. For the baselines, the accuracy is reported either based on the distance obtained after convergence or the distance after 25 iterations if convergence is not achieved.
We see that our methods outperform the baselines by a large margin.
This indicates that the TWDs, learned through our iterative scheme, effectively capture the interplay of the hierarchical structures between rows and columns.
In addition, we observe that the classification accuracy improves with each iteration of our methods (see Fig.~\ref{fig:classification} in App.~\ref{app:more_exp_results}).
While the competing methods also show marginal improvement through the iterative procedure, our methods consistently achieve better performance.
Our approaches exhibit fast convergence, typically within 10-14 iterations in all the tested datasets.


\subsection{Link prediction and node classification for hierarchical graph data}\label{sec:nc_lp_tasks}

\begin{table*}[t]
\caption{ROC AUC for LP, F1 score for the DISEASE dataset, and accuracy for the AIRPORT, PUBMED, and CORA datasets for NC tasks (the highest in bold and the second highest underlined).
}
\begin{center}
\resizebox{1\textwidth}{!}{%
\begin{tabular}{lccccccccccccccr}
\toprule
  & \multicolumn{2}{c}{DISEASE} & \multicolumn{2}{c}{AIRPORT} & \multicolumn{2}{c}{PUBMED} &  \multicolumn{2}{c}{CORA} \\
 \cmidrule{2-9} 
   &  LP & NC &  LP & NC &   LP & NC  &  LP & NC \\
\midrule
\textsc{Euc}  &59.8$\pm$2.0 & 32.5$\pm$1.1  &92.0$\pm$0.0& 60.9$\pm$3.4 & 83.3$\pm$0.1 & 48.2$\pm$0.7 & 82.5$\pm$0.3 & 23.8$\pm$0.7 \\ 
\textsc{Hyp}  &63.5$\pm$0.6 &  45.5$\pm$3.3  & 94.5$\pm$0.0& 70.2$\pm$0.1 & 87.5$\pm$0.1 & 68.5$\pm$0.3 & 87.6$\pm$0.2 & 22.0$\pm$1.5  \\
\textsc{Euc-MIXED}  &49.6$\pm$1.1 & 35.2$\pm$3.4 &   91.5$\pm$0.1 &68.3$\pm$2.3 & 86.0$\pm$1.3 & 63.0$\pm$0.3& 84.4$\pm$0.2 & 46.1$\pm$0.4 \\ 
\textsc{Hyp-MIXED}  & 55.1$\pm$1.3 &  56.9$\pm$1.5 & 93.3$\pm$0.0& 69.6$\pm$0.1 & 83.8$\pm$0.3 & 73.9$\pm$0.2& 85.6$\pm$0.5 & 45.9$\pm$0.3 \\  
\midrule
MLP  &72.6$\pm$0.6 & 28.8$\pm$2.5  &  89.8$\pm$0.5 & 68.6$\pm$0.6 & 84.1$\pm$0.9 & 72.4$\pm$0.2 & 83.1$\pm$0.5 & 51.5$\pm$1.0 \\ 
HNNs &75.1$\pm$0.3 & 41.0$\pm$1.8   & 90.8$\pm$0.2 & 80.5$\pm$0.5 & 94.9$\pm$0.1 & 69.8$\pm$0.4 & 89.0$\pm$0.1 & 54.6$\pm$0.4\\
\midrule
GCN  &64.7$\pm$0.5 &  69.7$\pm$0.4 &  89.3$\pm$0.4 & 81.4$\pm$0.6 & 91.1$\pm$0.5 & 78.1$\pm$0.2 & 90.4$\pm$0.2 & 81.3$\pm$0.3 \\ 
GAT  &69.8$\pm$0.3  & 70.4$\pm$0.4  &  90.5$\pm$0.3 & 81.5$\pm$0.3 & 91.2$\pm$0.1 & 79.0$\pm$0.3 & {93.7}$\pm$0.1 &  {83.0}$\pm$0.7\\ 
\textsc{Graph}SAGE  &65.9$\pm$0.3  &  69.1$\pm$0.6 &  90.4$\pm$0.5 & 82.1$\pm$0.5 & 86.2$\pm$1.0 & 77.4$\pm$2.2 & 85.5$\pm$0.6 & 77.9$\pm$2.4  \\ 
SGC &65.1$\pm$0.2  &  69.5$\pm$0.2    & 89.8$\pm$0.3 & 80.6$\pm$0.1 & 94.1$\pm$0.0 &  78.9$\pm$0.0 & 91.5$\pm$0.1 & 81.0$\pm$$0.1$ \\ 
\midrule
HGCNs &  90.8$\pm$0.3  & 74.5$\pm$0.9 & \underline{96.4$\pm$0.1} & \underline{90.6$\pm$0.2} & 96.3$\pm$0.0 & 80.3$\pm$0.3 & 92.9$\pm$0.1 & 79.9$\pm$0.2 \\
H2H-GCN  & \underline{97.0$\pm$0.3} & \underline{88.6$\pm$1.7}  & \underline{96.4$\pm$0.1} & 89.3$\pm$0.5 & \underline{96.9$\pm$0.0} & 79.9$\pm$0.5 & \underline{95.0$\pm$0.0} & 82.8$\pm$0.4 \\
\midrule
HGCN-Alg.~\ref{alg:Co_HD} & 93.2$\pm$0.6 & 87.9$\pm$0.7 &  93.7$\pm$0.2 & 89.9$\pm$0.4 & 94.1$\pm$0.7 & \underline{81.7$\pm$0.2}  & 93.1$\pm$0.1 & \underline{82.9$\pm$0.3}  \\
HGCN-Alg.~\ref{alg:wavelet_co-HD} &   \textbf{98.4}$\pm$0.4 &  \textbf{89.4}$\pm$0.3 &  \textbf{97.2}$\pm$0.1 &  \textbf{92.1}$\pm$0.3 &  \textbf{97.2}$\pm$0.2 &  \textbf{83.6}$\pm$0.4 &  \textbf{96.9}$\pm$0.3 &  \textbf{83.9}$\pm$0.2\\
\bottomrule
\end{tabular} 
}
\label{tab:hgcn}
\end{center}
\end{table*}

Finally, we show the utility of our methods as a pre-processing step for HGCNs \citep{chami2019hyperbolic, dai2021hyperbolic}, evaluated on LP and NC tasks. 
We adhere to the experimental setups and baselines used in these works to maintain consistency. 
For LP task, we use a Fermi-Dirac decoder \citep{krioukov2010hyperbolic, nickel2017poincare} to compute probability scores for edges. 
Then, the networks are trained by minimizing cross-entropy loss with negative sampling.
The performance of LP is assessed by measuring the area under the ROC curve (AUC).
For NC task, we employ a centroid-based classification method \citep{liu2019hyperbolic}, where softmax classifiers and cross-entropy loss functions are utilized. 
Additionally, an LP regularization objective is integrated into the NC task \citep{chami2019hyperbolic, dai2021hyperbolic}.
The NC task is evaluated using the F1 score for binary-class datasets and accuracy for multi-class datasets.
We test four datasets \citep{sen2008collective, yang2016revisiting, chami2019hyperbolic}, including CORA, PUBMED,  DISEASE and AIRPOT. 
Descriptions of these datasets and their splits are included in App.~\ref{app:more_exp_details}.
We compare our methods with two shallow methods: Euclidean embedding (\textsc{Euc}) and Poincaré embedding (\textsc{Hyp}) \citep{nickel2017poincare}. We also include comparisons with the concatenation of shallow embeddings and node features, denoted as \textsc{Euc-MIXED} and \textsc{Hyp-MIXED}.
Furthermore, we include multi-layer perceptron (MLP) and its hyperbolic extension, HNNs \citep{ganea2018hyperbolic}, as well as four GNNs: GCN \citep{kipf2016semi}, GAT \citep{velivckovic2017graph}, \textsc{Graph}SAGE \citep{hamilton2017inductive}, and SGC \citep{wu2019simplifying}. Lastly, we include HGCNs \citep{chami2019hyperbolic} and H2H-GCN \citep{dai2021hyperbolic}.


Tab.~\ref{tab:hgcn} shows the performance of integrating our methods with HGCNs on the LP and NC tasks compared to the competing methods. 
We repeat the random split process 10 times and report the average performance and standard deviation.
Our methods consistently outperform the competing baselines across both tasks.
Similar to Tab.~\ref{tab:sparse_approximation} and Tab.~\ref{tab:classification}, we observe in Tab.~\ref{tab:hgcn} that using wavelet filters demonstrates superior performance by a significant margin.
This indicates that it effectively represents the hierarchical structure of the graph data, improving the expressiveness of both GNNs and hyperbolic embeddings.
A natural question is whether wavelet filters alone (i.e., without our iterative scheme) could benefit HGCNs.
We investigate it in App.~\ref{app:more_exp_results} and find that such integration does not yield comparable outcomes.
To our knowledge, our work is the first to incorporate wavelets with TWD and apply them to HGCNs.
We note that our methods could have been integrated within HGCNs in other ways. However, as shown in Tab.~\ref{tab:hgcn}, the straightforward integration already delivers favorable results. Thus, we opt to explore more complex integration techniques for future work.

\section{Conclusions}\label{sec:conclustions}

This work introduces an iterative framework for jointly learning hierarchical representations of both samples and features using TWD. 
The proposed method begins by constructing a tree for one mode (either features or samples), which is then used to compute TWD and infer the tree construction for the other mode.
The process alternates between modes, with each tree informing the inference of the other through pairwise TWD computations. 
To further improve the quality of the tree representations, we apply wavelet filters derived from the learned trees to the data at each iteration, which effectively suppress noise and filter out nuisance components.
%
%
%
We show theoretically that the procedure converges and empirically that the trees and TWDs are refined across the iterations.
Specifically, empirical evaluations on word-document and scRNA-seq datasets show that the resulting tree representations and TWDs lead to meaningful hierarchical representations.
We further demonstrate that the proposed method can serve as a preprocessing step for HGCNs applied to hierarchical graph data, improving performance in hierarchical graph-based learning problems on link prediction and node classification.

\paragraph{Limitations and future work.} 
One limitation of our approach lies in the use of trees to represent data geometry. 
While this representation aligns with our assumption that the data exhibits underlying hierarchical structures, it may not generalize well to data supported on more complex geometries, e.g., spherical manifolds \citep{bonet2022spherical}, spaces with mixed curvatures \citep{gu2018learning}, asymmetric data \cite{dages2025finsler, dages2024metric, weber2024finsler}, or general graphs \cite{bronstein2017geometric}.
In future work, we plan to explore more flexible geometric representation methods that can accommodate a broader class of data geometries, while utilizing the proposed iterative procedure between samples and features. 
We also plan to explore a differentiable variant of TWD, such as the soft TWD \cite{takezawa2021supervised}, which could enable integration of our iterative process into neural architectures. 
We will further incorporate supervision or task-specific signals into the learning process. Finally, we plan to extend the approach to multi-way data (e.g., tensor-valued inputs).

\section*{Acknowledgments}
We thank the anonymous reviewers for their insightful feedback.
We thank Gal Maman for proofreading this manuscript.
The work of YEL and RT was supported by the European Union’s Horizon 2020 research and innovation programme under grant agreement No. 802735-ERC-DIFFOP.
The work of GM was supported by NSF CCF-2217058 and CCF-2403452.
The work of RC was supported by AFOSR MURI 052721 and DARPA-PA-23-03.

\bibliographystyle{unsrt}
\bibliography{library}

\newpage
\appendix

\counterwithin{theorem}{section}
\counterwithin{proposition}{section}
\counterwithin{lemma}{section}
\begin{center}
    \LARGE  \textbf{Supplementary Material}    
\end{center}
\vspace{3 mm}
This supplemental material is organized as follows:
\begin{itemize}

    \item 
    \textbf{Appendix}~\ref{app:more_background} contains the additional background to the proposed method. 
    We begin by reviewing the fundamentals of tree structures.
    Next, we present key concepts from hyperbolic and diffusion geometry, which are used to construct trees from high-dimensional data.
    We then describe the construction of a Haar wavelet basis on a given tree and introduce the Wasserstein distance and tree-Wasserstein distance.
    Lastly, we briefly discuss graph convolutional networks and their hyperbolic counterpart.
    \item \textbf{Appendix}~\ref{app:additional_related_work} provides additional related work relevant to our method.
    It includes manifold learning based on the Wasserstein distance metric, further discussion of existing TWD methods, the sliced-Wasserstein distance, and additional background on co-manifold learning and unsupervised ground metric learning.
    \item \textbf{Appendix}~\ref{app:proof} contains the proofs of Thm.~\ref{thm:convergence} and Thm.~\ref{thm:convergence_update_X}, along with the supporting lemmas.
    \item \textbf{Appendix}~\ref{app:hgcn} describes how the learned hierarchical representations obtained by our method can be incorporated into hyperbolic graph convolutional networks as an initialization step.
    \item \textbf{Appendix}~\ref{app:more_exp_details} provides details about the experiments presented in Sec.~\ref{sec:exp}.
    We first describe the datasets used and their statistics.
    We then report the initial pairwise distances, scaling factors, Haar wavelet thresholds, and other hyperparameters used in our experiments.
    The norm regularization terms applied in the iterative learning scheme are also specified.
    We explain how our method is scaled to handle large datasets.
    We provide details of the experimental setups for document classification and cell-type classification, as well as for link prediction and node classification on hierarchical graph data.
    \item \textbf{Appendix}~\ref{app:more_exp_results} presents additional experimental results supporting our method.
    We first show a synthetic toy example simulating a video recommendation system for visualization.
    We then demonstrate the empirical uniqueness of the learned hierarchical representations under strong regularization.
    An ablation study evaluates the role of the iterative joint learning scheme in classification performance and contrasts it with applying wavelet filtering only after convergence.
    We further assess the effect of integrating wavelet filtering with alternative TWD variants, as well as with HGCNs without the iterative updates.
    We provide details on the sparse approximation analysis using Haar coefficients across iterations, as well as report the classification performance over the iterative learning scheme. 
    Additionally, we include runtime analysis and explore the use of alternative regularizers within the iterative learning scheme.
    We include additional experiments on co-clustering tasks and incorporating fixed hierarchical distances in the iterative learning scheme for HGCNs.
    We evaluate the effectiveness of our method as a preprocessing step across different neural network backbones.
    We further include a discussion and comparison with other tree distance metrics.
    Finally, we present the visualization of different iterations of the learned trees and how they evolve in our method on the toy example.

    \item \textbf{Appendix}~\ref{app:additional_remark} provides additional notes on the motivation for incorporating wavelet filters into our iterative learning scheme.
    We explain why wavelet filters are considered over Laplacian-based filtering in our context.
    We clarify the objective of our method and highlight the key differences between our approach and unsupervised ground metric learning with trees (Tree-WSV).
    We discuss the interpretability of TWDs. 
\end{itemize}

\section{Additional background}\label{app:more_background}
We include supplementary background in the appendix.
\subsection{Graph and tree}

Trees are fundamental structures in graph theory. 
A tree is a type of graph characterized by hierarchical organization.
Below, we outline key concepts and definitions of trees and their associated metrics.

\paragraph{Graph and shortest path metric.}
A graph \( G = (V, E, \mathbf{A}) \) consists of a set of nodes \( V \), edges \( E \), and a edge weight matrix \( \mathbf{A} \in \mathbb{R}^{m \times m} \). For any two nodes \( j, j' \in V \), the shortest path metric \( d_G(j, j') \) represents the minimum sum of edge weights along the path connecting node \( j \) and node \( j' \).

\paragraph{Tree metric.}
A metric \( d: V \times V \rightarrow \mathbb{R} \) is a tree metric if there exists a tree \( T = (V, E, \mathbf{A}) \) such that \( d(j, j') \) corresponds to the shortest path metric \( d_T(j, j') \) on \( T \). Notably, tree metrics exhibit \( 0 \)-hyperbolicity \citep{gromov1987hyperbolic}, making them well-suited for representing hierarchical data.

\paragraph{Binary trees.}
Binary trees are a specific class of trees where each internal node has exactly two child nodes. In a balanced binary tree, internal nodes (except the root) have a degree of 3, while leaf nodes have a degree of 1. 
%
A rooted and balanced binary tree is a further refinement where one internal node, designated as the root, has a degree of 2, and all other internal nodes maintain a degree of 3.
Note that a flexible tree can be represented as a binary tree through a transformation, e.g., left-child right-sibling \citep{cormen2022introduction}.

\begin{lemma}
    Any tree metric (a metric derived from a tree graph where the distance between two nodes is the length of the unique path connecting them) can be isometrically embedded into an $\ell_1$ space \citep{bridson2013metric}. 
\end{lemma}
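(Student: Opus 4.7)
The plan is to give an explicit isometric embedding $\phi\colon V \to \mathbb{R}^{|E|}$ (with the $\ell_1$ norm on the codomain) of the weighted tree $T=(V,E,\mathbf{A})$ equipped with its shortest-path metric $d_T$. The guiding principle I would exploit is that distances in a tree admit a clean cut description: the unique path between any two vertices $u$ and $v$ consists precisely of those edges whose removal separates $u$ from $v$. This description is perfectly aligned with the $\ell_1$ norm, which is itself a sum of halfspace indicators, so it suggests indexing coordinates by edges and using subtree membership as the indicator.

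Concretely, I would fix an arbitrary root $r \in V$ and, for each edge $e \in E$ of weight $w_e$, let $T_e$ denote the component of $T\setminus\{e\}$ not containing $r$ (the subtree hanging below $e$). The proposed embedding is
\begin{equation*}
\phi(v)_e \;=\; w_e \cdot \mathbbm{1}[\,v \in T_e\,], \qquad v\in V,\ e\in E.
\end{equation*}
The main computation is to verify that $\|\phi(u)-\phi(v)\|_1 = d_T(u,v)$ for all $u,v \in V$. For a fixed edge $e$, the $e$-th coordinate of $\phi(u)-\phi(v)$ has absolute value $w_e$ exactly when $\mathbbm{1}[u \in T_e]\neq \mathbbm{1}[v \in T_e]$, i.e., when $u$ and $v$ lie in different components of $T\setminus\{e\}$, and is $0$ otherwise. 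Summing over $e \in E$ would then recover $\sum_{e \in \mathrm{path}(u,v)} w_e = d_T(u,v)$, as desired.

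The only substantive step is the equivalence ``edge $e$ separates $u$ from $v$'' iff ``$e$ lies on the unique $u$-$v$ path in $T$,'' which follows from the basic structural fact that removing any single edge from a tree disconnects it into exactly two components, and the path from $u$ to $v$ enters and exits each such component at most once. I do not anticipate a serious obstacle: no compactness, continuity, or limiting arguments are required, the embedding dimension $|E|$ is explicit, and the construction is independent of the choice of root $r$ up to a global translation. If one wishes to allow infinite trees, the same formula produces a map into $\ell^1(E)$, with $\phi(u)-\phi(v)$ supported on the finitely many edges of $\mathrm{path}(u,v)$, so absolute summability holds automatically and the isometry argument goes through verbatim.
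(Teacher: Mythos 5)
Your proposal is correct. The paper itself offers no proof of this lemma; it simply cites an external reference (Bridson--Haefliger), so there is no in-paper argument to compare against. Your construction is the standard one: index coordinates of $\mathbb{R}^{|E|}$ by edges, set $\phi(v)_e = w_e\,\mathbbm{1}[v\in T_e]$ where $T_e$ is the component of $T\setminus\{e\}$ not containing the root, and observe that the $e$-th coordinate of $\phi(u)-\phi(v)$ contributes $w_e$ exactly when $e$ separates $u$ from $v$, which for a tree is equivalent to $e$ lying on the unique $u$--$v$ path. Summing over edges gives $\|\phi(u)-\phi(v)\|_1=d_T(u,v)$, and every step you invoke (an edge of a tree is a cut edge; a path crosses each cut at most once) is elementary. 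The extension to infinite trees also works as you describe, since each $\phi(v)$ is supported on the finitely many edges of the root-to-$v$ path.

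One minor quibble: changing the root does not merely translate the embedding. For an edge $e$ whose ``below'' component flips when the root moves, the indicator becomes $1-\mathbbm{1}[\cdot]$, which is a coordinate reflection composed with a translation. That is still an isometry of $\ell_1$, so the conclusion is unaffected, but ``up to a global translation'' is slightly too strong; ``up to an isometry of $\ell_1$'' is the accurate statement.
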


\subsection{Hyperbolic geometry}\label{app:hyperbolic_geometry}
A hyperbolic geometry is a non-Euclidean geometry with a constant negative curvature.  It is widely used in modeling structures with hierarchical or tree-like relationships \citep{ratcliffe1994foundations}. 
There are four models commonly used to describe hyperbolic spaces.
The Poincaré disk model maps the entire hyperbolic plane inside a unit disk, with geodesics represented as arcs orthogonal to the boundary or straight lines through the center. 
The Poincaré half-plane model uses the upper half of the Euclidean plane, where geodesics are semicircles orthogonal to the horizontal axis or vertical lines. 
The Klein model represents hyperbolic space in a unit disk but sacrifices angular accuracy, making it more suitable for some specific calculations. 
The hyperboloid model relies on a higher-dimensional Lorentzian inner product, offering computational efficiency.

In this work, we consider two equivalent models of hyperbolic space \citep{ratcliffe1994foundations}: the Poincar\'{e}  half-space and the hyperboloid model. 
The Poincar\'{e} half-space was used for embedding that reveals the hierarchical structure underlying high-dimensional data \citep{lin2023hyperbolic}, while the hyperboloid model is advantageous for its simple closed-form Riemannian operations \citep{nickel2018learning, lin2021hyperbolic} used in hyperbolic graph convolutional networks \citep{chami2019hyperbolic, dai2021hyperbolic}.

The $d$-dimensional Poincar\'{e} half-space is defined as 
\begin{equation}
    \mathbb{H}^d = \{ \mathbf{a} \in\mathbb{R}^d \big| \mathbf{a}(d) >0 \}    
\end{equation}
with the  Riemannian metric tensor $ds^2 =  (d\mathbf{a}^2(1) + \ldots +  d\mathbf{a}^2(d))/\mathbf{a}^2(d)$ \citep{beardon2012geometry}. 
Let $\mathbb{L}^d$ denote the hyperboloid manifold in $d$ dimensions, defined by 
\begin{equation}
    \mathbb{L}^d = \{\mathbf{b}\in \mathbb{R}^{d+1}|\ip{\mathbf{b}}{\mathbf{b}}_\mathcal{L}= -1, \mathbf{b}(1)>0\},    
\end{equation}
where $\ip{\cdot}{\cdot}_\mathcal{L}$ is the Minkowski inner product $\ip{\mathbf{b}}{\mathbf{b}}_\mathcal{L} =  \mathbf{b}^\top[-1, \mathbf{0}^\top;\mathbf{0}, \mathbf{I}_d]\mathbf{b}$.
Let $\mathcal{T}_{\mathbf{b}}\mathbb{L}^d$ be the tangent space at point $\mathbf{b} \in \mathbb{L}^d$, given by $\mathcal{T}_{\mathbf{b}}\mathbb{L}^d = \{\mathbf{v} \in \mathbb{R}^{d+1} | \ip{\mathbf{v}}{\mathbf{v}}_\mathcal{L} = 0\}$. 
We denote $\norm{\mathbf{v}}_\mathcal{L} = \sqrt{\ip{\mathbf{v}}{\mathbf{v}}_\mathcal{L}}$ as the norm of $\mathbf{v} \in \mathcal{T}_\mathbf{b}\mathbb{L}^d$.
For two different points $\mathbf{b}_1, \mathbf{b}_2\in\mathbb{L}^d$ and $\mathbf{0}\neq\mathbf{v}\in\mathcal{T}_{\mathbf{b}_1}\mathbb{L}^d$, the exponential and logarithmic maps of $\mathbb{L}^d$ are respectively given by 
\begin{align}
    & \mathrm{Exp}_{\mathbf{b}_1}(\mathbf{v})  = \cosh(||\mathbf{v}||_{\mathcal{L}})\mathbf{b}_1 +\sinh(||\mathbf{v}||_{\mathcal{L}})\mathbf{v}/||\mathbf{v}||_{\mathcal{L}}, \\
    & \mathrm{Log}_{\mathbf{b}_1}(\mathbf{b}_2)  = \cosh^{-1}(\eta)(\mathbf{b}_2 - \eta\mathbf{b}_1)/\sqrt{\eta^2 - 1},    
\end{align}
where $\eta = - \ip{\mathbf{b}_1}{\mathbf{b}_2}_\mathcal{L}$.
The parallel transport ($\mathrm{PT}$) of a vector $\mathbf{v} \in \mathcal{T}_{\mathbf{b}_1} \mathbb{L}^d$ along the geodesic path from $\mathbf{b}_1 \in \mathbb{L}^d$ to $\mathbf{b}_2 \in \mathbb{L}^d$ is given by 
\begin{equation}
    \mathrm{PT}_{\mathbf{b}_1 \to \mathbf{b}_2}(\mathbf{v}) = \mathbf{v} + \frac{\langle \mathbf{b}_2 - \lambda \mathbf{b}_1, \mathbf{v} \rangle_{\mathcal{L}}}{\lambda + 1} (\mathbf{b}_1 + \mathbf{b}_2),    
\end{equation}
where $\lambda = -\langle \mathbf{b}_1, \mathbf{b}_2 \rangle_{\mathcal{L}}$, while keeping the metric tensor unchanged.
Due to the equivalence between the Poincaré half-space and the Lorentz model, there exists a diffeomorphism $\mathcal{P}:\mathbb{H}^d \rightarrow \mathbb{L}^d$ that maps points from the Poincaré half-space $\mathbf{a}\in \mathbb{H}^d$ to the Lorentz model $\mathbf{b}\in \mathbb{L}^d$ by 
\begin{equation}
\mathbf{b}=\mathcal{P}(\mathbf{a}) = \frac{ ( 1+ \lVert \mathbf{c} \rVert^2, 2\mathbf{c}(1), \ldots, 2\mathbf{c}(d+1) )}{1 - \lVert \mathbf{c} \rVert^2},
    \label{eq:transforming_to_Lorentz}
\end{equation}
where 
\begin{equation*}
    \mathbf{c} =  \frac{(2\mathbf{a}(1), \ldots, 2\mathbf{a}(d), \lVert \mathbf{a} \rVert^2 -1 )}{\lVert \mathbf{a} \rVert^2 + 2\mathbf{a}(d+1) + 1}.
\end{equation*}

\subsection{Diffusion geometry}\label{sec:diffusion_geometry}
Diffusion geometry \citep{coifman2006diffusion} is a mathematical framework that analyzes high-dimensional data by capturing intrinsic geometric structures (i.e., manifolds with non-negative curvature) through diffusion processes. It is rooted in the study of diffusion propagation on graphs, manifolds, or general data spaces, where the spread of information or heat over time reflects the underlying connectivity and geometry.
Diffusion geometry represents data as nodes in a graph and models their relationships using a diffusion operator. By simulating diffusion processes over this structure, the method identifies meaningful relationships based on proximity and connectivity in the diffusion space.
We introduce the construction of the diffusion operator and its desired property below.

Consider a set of high-dimensional points $\mathcal{Z} = \{\mathbf{z}_j\in\mathbb{R}^n\}_{j=1}^m$ lying on a low-dimensional manifold. 
Let $\mathbf{K} = \exp(-\mathbf{M}^{\circ 2}/\epsilon) \in \mathbb{R}^{m \times m}$ be an affinity matrix, where $\mathbf{M} \in\mathbb{R}^{m \times m}$ is a suitable pairwise distance matrix between the points $\{\mathbf{z}_j\}_{j=1}^m$, $\circ^\cdot$ is the Hadamard power, and $\epsilon > 0$ is the scale parameter.
Note that the matrix $\mathbf{K}$ can be interpreted as an undirected weighted graph $G = (\mathcal{Z}, \mathbf{K})$, where $\mathcal{Z}$ is the node set and $\mathbf{K}$ represents the edge weights.

The diffusion operator \citep{coifman2006diffusion} is then constructed by $ \mathbf{P} = \mathbf{K}\mathbf{D}^{-1}$, 
where $\mathbf{D}$ is the diagonal degree matrix with entries $\mathbf{D}(j,j) = \sum_{l} \mathbf{K}(j,l)$.
We remark that a density normalization affinity matrix can be considered to mitigate the effects of non-uniform data sampling \cite{coifman2006diffusion}. 
Note that the diffusion operator $\mathbf{P}$ is column-stochastic, allowing it to be used as a transition probability matrix of a Markov chain on the graph $G$. 
Specifically, the vector $\mathbf{p}^t_j=\mathbf{P}^t \bm{\delta}_j$ is the propagated density after diffusion time $t\in \mathbb{R}_+$ of a density $\bm{\delta}_j$ initially concentrated at point $j$. 

The diffusion operator has been demonstrated to have favorable convergence \citep{coifman2006diffusion}. As $m \to \infty$ and  $\epsilon \to 0$, the operator $\mathbf{P}^{t/\epsilon}$ converges pointwise to the Neumann heat kernel associated with the underlying manifold at the limits. This convergence indicates that the diffusion operator can be viewed as a discrete approximation of the continuous heat kernel, thereby effectively capturing the geometric structure of the underlying manifold in a finite-dimensional setting \citep{coifman2006diffusion, singer2006graph, belkin2008towards}.

\subsection{Tree construction based on hyperbolic and diffusion geometries}

Recently, the work \citep{lin2023hyperbolic} introduced a tree construction method that recovers the latent hierarchical structure underlying high-dimensional data $\mathcal{Z}$ based on hyperbolic embeddings and diffusion geometry. 
Specifically, this latent hierarchical structure can be modeled by a weighted tree $T$, which can be viewed as a discretization of an underlying manifold $\mathcal{J}$.
This manifold 
$\mathcal{J}$ is assumed to be a complete, simply connected Riemannian manifold with \emph{negative} curvature, embedded in a high-dimensional ambient space $\mathbb{R}^n$, and equipped with a geodesic distance $d_\mathcal{J}$.

Consider the diffusion operator constructed as in App.~\ref{sec:diffusion_geometry}, the multi-scale diffusion densities $\bm{\mu}_j^k = \mathbf{P}^{2^{-k}}\bm{\delta}_j$ are considered with dyadic diffusion time steps $t = 2^{-k}$ for $k\in\mathbb{Z}_0^+$ \citep{lin2023hyperbolic, lavi2025hyperbolic}, and are embedded into a Poincar\'{e} half-space by 
\begin{equation}
    (j,k) \mapsto \mathbf{y}_j^k = \left[\left({\bm{\mu}_j^k}^{\circ 1/2}\right)^\top, 2^{k/2-2}\right]^\top  \in  \mathbb{H}^{m+1},  
\end{equation}
where $ {\bm{\mu}_j^k}^{\circ 1/2}$ is the Hadamard root of $\bm{\mu}^{k}_j$.
Note that diffusion geometry is effective in recovering the underlying manifold \citep{coifman2006diffusion}; however, the study \citep{lin2023hyperbolic} demonstrated that considering propagated densities with diffusion times on dyadic grids can capture the hidden hierarchical structures by incorporating local information from exponentially expanding neighborhoods around each point. 
The multi-scale hyperbolic embedding is a function $\mathtt{Embedding}: \mathcal{Z} \rightarrow \mathcal{M}$ defined by
\begin{equation}
    \mathtt{Embedding}(\mathbf{z}_j) = \left[\left(\mathbf{y}^0\right)^\top, \left(\mathbf{y}^1\right)^\top ,  ,\ldots, \left( \mathbf{y}^K\right)^\top\right]^\top,    
\end{equation}
where $\mathcal{M}= \mathbb{H}^{m+1} \times \mathbb{H}^{m+1} \times \ldots \times \mathbb{H}^{m+1}$ of $(K+1)$ elements. 
The geodesic distance induced in $\mathcal{M}$ is the $\ell_1$ distance on the product manifold $\mathcal{M}$, given by 
\begin{equation}\label{eq:HDD}
  d_{\mathcal{M}}(j, j') =  \hspace{-1mm}\sum_{k=0}^{K}  ~  2\sinh^{-1} \hspace{-1mm}\left(2^{-k/2 + 1}\left\lVert \mathbf{y}^k_j - \mathbf{y}^k_{j'} \right\rVert_2\right). 
\end{equation}
\begin{theorem}[Theorem 1 \citep{lin2023hyperbolic}]\label{thm:HDD}
For sufficiently large $K$, the embedding distance is bilipschitz equivalent to the underlying tree distance, i.e.,  $d_{\mathcal{M}} \simeq d_{T}$.
\end{theorem}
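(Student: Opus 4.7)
The plan is to establish the bilipschitz equivalence $d_{\mathcal{M}} \simeq d_T$ by linking the multi-scale Poincaré half-space contributions to dyadic levels of the underlying hierarchical geometry. The main workhorse will be the pointwise convergence of $\mathbf{P}^{t/\epsilon}$ to the Neumann heat kernel on $\mathcal{J}$ (cited in App.~\ref{sec:diffusion_geometry}): at diffusion time $t = 2^{-k}$, the density $\bm{\mu}_j^k$ concentrates in a geodesic ball of radius $r_k \asymp 2^{-k/2}$ around $j$ on $\mathcal{J}$. Since $T$ is assumed to be a discretization of $\mathcal{J}$, I will also invoke a standard comparison $d_T \asymp d_\mathcal{J}$ on the vertex set (exploiting the negative curvature of $\mathcal{J}$, which forces geodesic neighborhoods to branch tree-like).

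The next step is to control each summand of $d_{\mathcal{M}}$ in terms of the Hellinger-type quantity $\|(\bm{\mu}_j^k)^{\circ 1/2} - (\bm{\mu}_{j'}^k)^{\circ 1/2}\|_2$. The explicit Poincaré half-space formula for two points sharing the same height $2^{k/2-2}$ yields the form $2\sinh^{-1}(2^{-k/2+1}\|\mathbf{y}_j^k - \mathbf{y}_{j'}^k\|_2)$ in Eq.~\eqref{eq:HDD}; this is already the distance in $\mathbb{H}^{m+1}$ between two points lying on the same horosphere. I would then split the range of scales $k \in \{0, 1, \ldots, K\}$ into three regimes relative to $d_T(j,j')$:
\begin{itemize}
    \item Coarse scales ($r_k \gg d_T(j,j')$): the heat kernels nearly coincide, so $\|\mathbf{y}_j^k - \mathbf{y}_{j'}^k\|_2 \lesssim d_T(j,j')\cdot 2^{k/2}$ via a Lipschitz estimate on $\bm{\mu}^k$, and the $\sinh^{-1}$ is in its linear regime.
    \item Fine scales ($r_k \ll d_T(j,j')$): the densities have nearly disjoint supports, so the Hellinger distance saturates near $\sqrt{2}$, while the prefactor $2^{-k/2+1}$ makes the $\sinh^{-1}$ term geometrically small and summable.
    \item Transition scales ($r_k \asymp d_T(j,j')$): a bounded number of summands, each of order $1$, which together constitute the ``bulk'' of the sum.
\end{itemize}
Summing the geometric series in the coarse and fine regimes shows they contribute a bounded multiple of the transition contribution, so the total $d_{\mathcal{M}}(j,j')$ is proportional (up to constants) to the depth $\log_2(1/d_T(j,j'))$ of the dyadic level at which $j$ and $j'$ separate in the hierarchy. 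Composing with $d_T \asymp \log(1/(\cdot))$ at that dyadic resolution—standard for a negatively curved metric encoded hierarchically—yields $d_{\mathcal{M}} \asymp d_T$.

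The main obstacle is the transition regime. Concretely, I expect the hard technical lemma to be a two-sided bound of the form $c_1 \min(1, d_T(j,j')/r_k) \le \|(\bm{\mu}_j^k)^{\circ 1/2} - (\bm{\mu}_{j'}^k)^{\circ 1/2}\|_2 \le c_2 \min(1, d_T(j,j')/r_k)$ that is uniform in $k$ and $j,j'$. The upper bound follows from heat-kernel gradient estimates, but the lower bound requires a nondegeneracy argument—controlling how quickly two heat kernels separate as their base points move apart on $\mathcal{J}$—and this is where negative curvature is used crucially. The requirement that $K$ be ``sufficiently large'' is needed precisely so that the fine regime extends below the smallest scale present in $T$, making the tail beyond $K$ negligible. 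Once this key estimate is in hand, assembling the three regimes and passing from $d_\mathcal{J}$ to $d_T$ through the discretization hypothesis produces the bilipschitz equivalence.
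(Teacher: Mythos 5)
First, a point of orientation: the paper never proves this statement. It is quoted verbatim as Theorem~1 of \citep{lin2023hyperbolic} in the background appendix (App.~\ref{app:more_background}), and no argument for it appears anywhere in the text, so there is no in-paper proof to compare your proposal against; what follows assesses your plan on its own merits.

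Your multiscale decomposition around the separation scale is the right general shape for results of this type, but the coarse-regime bookkeeping contains an error that is fatal to the bilipschitz \emph{upper} bound, and it is baked into your proposed key lemma. Write $H_k = \norm{(\bm{\mu}_j^k)^{\circ 1/2} - (\bm{\mu}_{j'}^k)^{\circ 1/2}}_2$ and $d = d_T(j,j')$. Your lemma posits $H_k \asymp \min(1, d/r_k)$ with $r_k \asymp 2^{-k/2}$, i.e.\ $H_k \asymp d\,2^{k/2}$ in the coarse regime. Then each coarse summand of \eqref{eq:HDD} is $2\sinh^{-1}(2^{-k/2+1} H_k) \asymp 2\sinh^{-1}(2d) \asymp d$: the prefactor $2^{-k/2+1}$ \emph{exactly cancels} the $2^{k/2}$ growth of $H_k$, so the coarse terms are constant in $k$ — not a geometric series, as you claim. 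Since there are $\asymp \log_2(1/d)$ coarse scales, their sum is $\asymp d\log(1/d)$, so $d_{\mathcal{M}} \le C\,d_T$ fails as $d \to 0$; worse, if your two-sided lemma were true it would force $d_{\mathcal{M}} \asymp d\log(1/d)$, which \emph{contradicts} the theorem. The Gaussian-type estimate $\min(1, d/\sqrt{t})$ is precisely the borderline exponent at which a dyadic sum of this form picks up a logarithm. The theorem can only hold because, for diffusion on a genuinely hierarchical (tree-like) structure, once both points lie in a common cluster at scale $2^{-k}$ the densities equilibrate \emph{within} that cluster, so $H_k$ decays strictly faster than $d\,2^{k/2}$ as the scale coarsens; that structural input — and not generic negative curvature plus heat-kernel gradient bounds — is the crux, and it is absent from your plan. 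Two smaller but symptomatic inconsistencies: the transition terms are of order $d$ (the prefactor at the transition scale is $2^{-k^*/2+1}\asymp d$), not of order $1$ as you state; and your final assembly, "$d_{\mathcal{M}}$ proportional to the depth $\log_2(1/d_T)$, then compose with $d_T$," asserts a logarithmic relation $d_{\mathcal{M}} \asymp \log(1/d_T)$, which is not bilipschitz equivalence, so even the intended conclusion is not assembled correctly.
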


The multi-scale hyperbolic embedding is used to construct a binary tree $\mathcal{T}(\mathbf{M})$ with $m$ leaves \citep{lin2025tree}, where each leaf corresponds to a data point in $\mathcal{Z}$.
Pairs of points are merged based on the Riemannian mean \citep{frechet1948elements} of the radius of the geodesic connecting the hyperbolic embeddings $\mathbf{y}_j^k$ and $\mathbf{y}_{j'}^k$ for $k=\{0,\ldots, K\}$. 
Specifically, at the $k$-th hyperbolic embedding, the orthogonal projection on the $m+1$-axis in $\mathbb{R}^{m+1}$, which is the radius of the geodesic of $\mathbf{y}_j^k$ and $\mathbf{y}_{j'}^k$, is given by 
\begin{equation}
    \mathtt{proj}(\mathbf{y}_j^k \vee  \mathbf{y}_{j'}^k) =\left\lVert \left[\frac{1}{2}\left({\bm{\mu}_j^k}^{\circ 1/2} -{\bm{\mu}_{j'}^k}^{\circ 1/2}\right)^\top, 2^{k/2-2} \right]^\top\right\rVert_2.
\end{equation}

\begin{algorithm}[t]
\caption{Tree Construction Using Hyperbolic and Diffusion Geometry \cite{lin2025tree}}
\label{alg:decoded_tree}
\begin{algorithmic}
\State \textbf{Input:} Pairwise distance matrix $\mathbf{M}\in\mathbb{R}^{m \times m}$, the maximal scale $K$, and the scale parameter $\epsilon$
\State \textbf{Output:} A binary tree $B$ with $m$ leaf nodes
\vspace{3 mm}
\State \textbf{function} $\mathcal{T}(\mathbf{M})$
\State \quad $\mathbf{K} \gets \exp(-\mathbf{M}^{\circ 2}/\epsilon)$, $\mathbf{D} \gets \mathtt{diag}(\mathbf{K})$, $\mathbf{P} \gets \mathbf{K}\mathbf{D}^{-1}$
\State \quad \textbf{for} $k \in \{0, 1, \ldots, K\}$ and $j \in [m]$ \textbf{do}
    \State \quad \quad $\bm{\mu}_j^k \gets \mathbf{P}^{2^{-k}}\bm{\delta}_j$
    \State \quad \quad $\mathbf{y}_j^k \gets \left[\left({\bm{\mu}_j^k}^{\circ 1/2}\right)^\top, 2^{k/2-2}\right]^\top$
\State \quad $B \gets \mathtt{leaves}(\{j\}: j\in [m])$
\State \quad \textbf{for} $j,j' \in [m]$ \textbf{do}
    \State \quad \quad \textbf{for} $k \in \{0,1,\ldots, K\}$ \textbf{do}
        \State \quad \quad \quad $\mathtt{proj}(\mathbf{y}_j^k \vee \mathbf{y}_{j'}^k) \gets \left\lVert \left[\frac{1}{2}\left({\bm{\mu}_j^k}^{\circ 1/2} - {\bm{\mu}_{j'}^k}^{\circ 1/2}\right)^\top, 2^{k/2-2} \right]^\top \right\rVert_2$
    \State \quad \quad $a_{j, j'} \gets \sqrt[\leftroot{0}\uproot{5}(K+1)]{\mathtt{proj}(\mathbf{y}_j^0 \vee \mathbf{y}_{j'}^0) \cdots \mathtt{proj}(\mathbf{y}_j^{K} \vee \mathbf{y}_{j'}^{K})}$
\State \quad $\mathcal{S} \gets \{(j,j') \mid \text{sorted by } a_{j,j'}\}$
\State \quad \textbf{for} $(j, j') \in \mathcal{S}$ \textbf{do}
    \State \quad \quad \textbf{if} $j$ and $j'$ are not in the same subtree
        \State \quad \quad \quad $\mathcal{I}_j \gets$ internal node for node $j$, $\mathcal{I}_{j'} \gets$ internal node for node $j'$
        \State \quad \quad \quad add an internal node for $\mathcal{I}_j$ and $\mathcal{I}_{j'}$, and assign the geodesic edge weight
\State \textbf{return} $B$
\end{algorithmic}
\end{algorithm}

The Riemannian mean of the orthogonal projection on the $m+1$-axis in $\mathbb{R}^{m+1}$ across all $K+1$ hyperbolic embeddings has a closed form
\begin{equation}
    \overline{\mathbf{h}}_{j, j'} = [0, \ldots, 0, a_{j, j'}], \text{ where } a_{j, j'} = \sqrt[\leftroot{0}\uproot{5}(K+1)]{ \mathtt{proj}(\mathbf{y}_j^0  \vee \mathbf{y}_{j'}^0) \cdots \mathtt{proj}(\mathbf{y}_j^{K}  \vee \mathbf{y}_{j'}^{K})}.
\end{equation}
The values $a_{j, j'}$ serve as hierarchical linkage scores and guide the merging process in constructing the binary tree $B$. Tree edge weights are then assigned using the $\ell_1$ distance on the manifold  $\mathcal{M}$, which corresponds to the geodesic distance. 
The tree construction based on hyperbolic embedding and diffusion densities is summarized in Alg.~\ref{alg:decoded_tree}. 

\begin{theorem}[Theorem 1 \citep{lin2025tree}.]\label{thm:bilipschitz_tree_metric}
    For sufficiently large $K$ and $m$, the tree metric $d_{B}$ in Alg.~\ref{alg:decoded_tree} is bilipschitz equivalent to the underlying tree metric $d_{T}$, where $T$ is the ground truth latent tree and $d_T$ is the hidden tree metric between features defined as the length of the shortest path on $T$.
\end{theorem}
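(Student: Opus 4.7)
My plan is to establish the bilipschitz equivalence in two stages. First, I would show that the tree metric $d_B$ produced by Alg.~\ref{alg:decoded_tree} is bilipschitz equivalent to the manifold distance $d_{\mathcal{M}}$ defined in Eq.~\eqref{eq:HDD}. Second, I would combine this with Thm.~\ref{thm:HDD}, which already guarantees $d_{\mathcal{M}} \simeq d_T$ for sufficiently large $K$, so that bilipschitz equivalence is transitive and the desired $d_B \simeq d_T$ follows.

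For the first stage, the key observation is that in the Poincar\'e half-space $\mathbb{H}^{m+1}$, the quantity $\mathtt{proj}(\mathbf{y}_j^k \vee \mathbf{y}_{j'}^k)$ equals the Euclidean height of the apex of the geodesic joining $\mathbf{y}_j^k$ and $\mathbf{y}_{j'}^k$, which serves as a natural proxy for the lowest common ancestor (LCA) in a tree-like embedding. I would argue that at each scale $k$, the LCA structure derived from $\{\mathbf{y}_j^k\}$ is consistent with the hierarchy of the underlying tree $T$, because the multiscale diffusion densities $\bm{\mu}_j^k$ separate points in an exponentially expanding neighborhood, mirroring the dyadic levels of $T$. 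Taking the geometric mean $a_{j,j'}$ over the $K+1$ scales then yields a linkage score that is bilipschitz comparable, up to additive constants, to the geodesic length $d_{\mathcal{M}}(j,j')$ summed across scales. Concretely, for any pair $(j,j')$ one has $2\sinh^{-1}(a_{j,j'})$ on the same order as the individual contributions $2\sinh^{-1}(2^{-k/2+1}\|\mathbf{y}_j^k - \mathbf{y}_{j'}^k\|)$, which control $d_{\mathcal{M}}$.

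Next, I would verify that the agglomerative procedure in Alg.~\ref{alg:decoded_tree}, which merges pairs in order of increasing $a_{j,j'}$ and assigns edge weights drawn from the geodesic distance on $\mathcal{M}$, produces a tree whose shortest-path metric recovers these linkage scores up to multiplicative constants. Because edge weights are explicitly taken from the $\ell_1$ distance on the product manifold, for any two leaves the path length in $B$ telescopes along the merge history to a quantity of the order of $d_{\mathcal{M}}(j,j')$. Combining the upper and lower bounds gives $d_B \simeq d_{\mathcal{M}}$, and composing with Thm.~\ref{thm:HDD} yields the claim.

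The main obstacle I anticipate is controlling the agglomerative single-linkage-style merging so that it does not produce inversions or mis-orderings that would distort tree distances beyond constant factors. This requires showing that the triple $a_{j,j'}, a_{j,j''}, a_{j',j''}$ approximately satisfies a four-point condition compatible with the LCA levels of $T$. For sufficiently large $K$, the geometric mean over scales concentrates around the true LCA height in $T$, and for sufficiently large $m$, the diffusion operator approximates the continuous Neumann heat kernel well enough that the finite-sample fluctuations responsible for potential mis-merges become negligible, as in the proof of Thm.~\ref{thm:HDD}. Balancing the two asymptotic regimes ($K \to \infty$ jointly with $m \to \infty$ and $\epsilon \to 0$) so that both sources of distortion remain uniformly bounded is the delicate technical step of the argument.
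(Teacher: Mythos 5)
The first thing to note is that this paper never proves the statement: it is imported verbatim as Theorem~1 of \citep{lin2025tree} and used purely as a black box (for instance in the proof of Lemma~\ref{lemma:decoded_tree_bound}, where it is chained with Thm.~\ref{thm:HDD} to conclude $d_B \simeq d_{\mathcal{M}}$). There is therefore no in-paper proof to compare your attempt against; what can be assessed is whether your sketch would stand on its own as a proof of the cited result.

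Your two-stage decomposition --- first $d_B \simeq d_{\mathcal{M}}$, then compose with Thm.~\ref{thm:HDD} ($d_{\mathcal{M}} \simeq d_T$) --- is the natural route, and it simply inverts the chaining this paper performs in Lemma~\ref{lemma:decoded_tree_bound}. However, as written the proposal has a structural gap: stage~1 cannot be established in isolation. For a generic point configuration in $\mathcal{M}$, an agglomeration ordered by the linkage scores $a_{j,j'}$ (a single-linkage-style procedure) can distort pairwise distances by unbounded factors --- the classical chaining pathology; what excludes this is precisely that $d_{\mathcal{M}}$ is within constant distortion of a genuine tree metric, i.e., the content of Thm.~\ref{thm:HDD} together with the latent-tree model assumption. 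So the merge-order consistency and approximate four-point condition that you flag as ``the main obstacle'' are not a technical afterthought appended to stage~1; they are the point at which stage~2's input must already be invoked, and your outline never actually derives them. The assertions that the geometric mean $a_{j,j'}$ ``concentrates around the true LCA height'' for large $K$, and that finite-sample fluctuations become negligible for large $m$, are the theorem's content restated rather than proven. Two smaller issues: bilipschitz equivalence demands multiplicative control, so your ``comparable up to additive constants'' only suffices once two-sided positive bounds on all pairwise distances are in hand (the kind of bounds supplied by Prop.~\ref{prop:hdd_bound} and Lemma~\ref{lemma:decoded_tree_bound}), and this conversion should be made explicit; and the telescoping of edge weights in Alg.~\ref{alg:decoded_tree} along the merge history requires the merge heights to be monotone along root-to-leaf paths, which is again a consequence of approximate tree-likeness rather than something automatic for the algorithm.
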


\subsection{Tree-Based wavelet}\label{app:tree_based_wavelet}

\begin{wrapfigure}[19]{r}{0.4\textwidth}  
       \centering
     \includegraphics[width=0.3\textwidth]{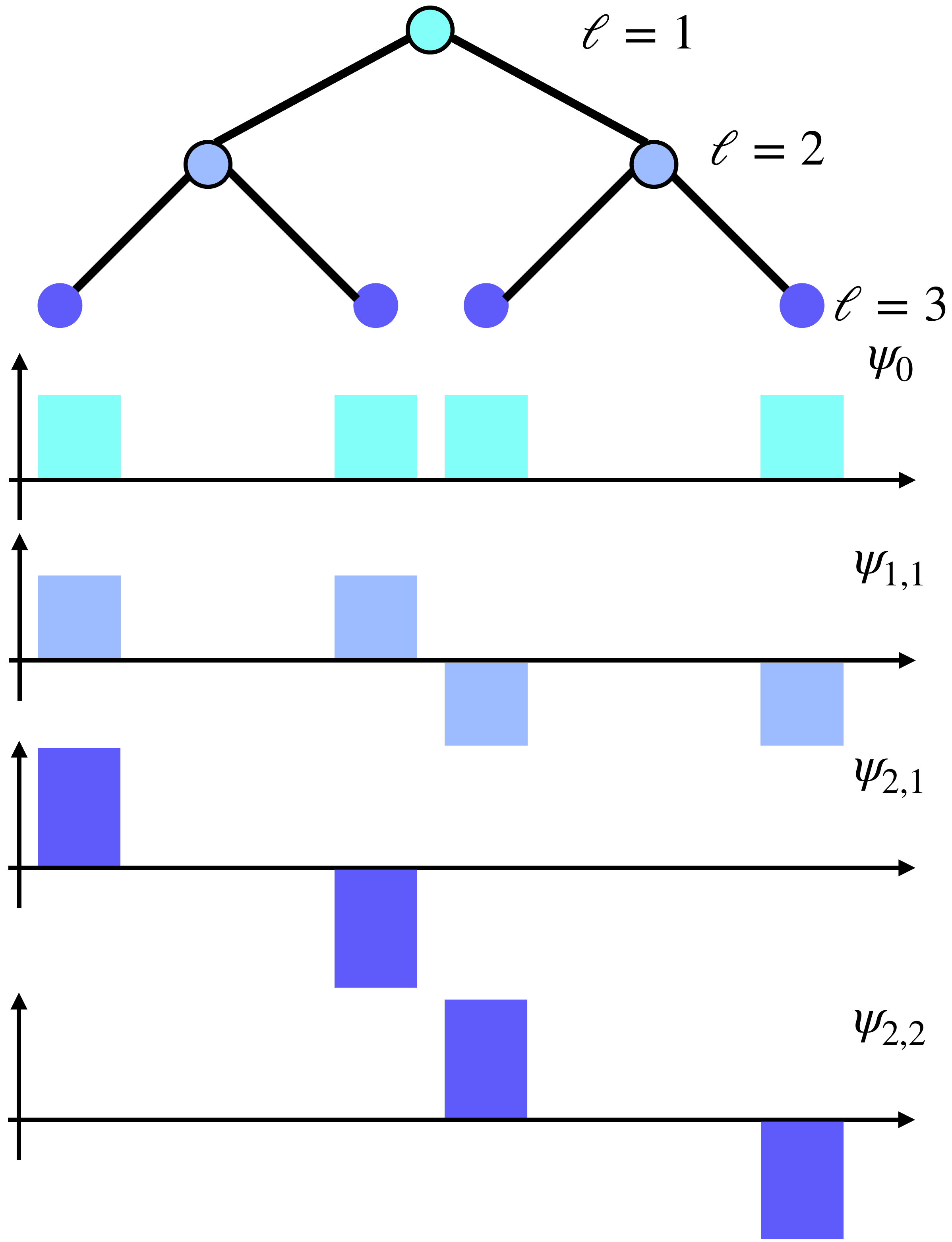}
    \caption{An illustration of a Haar basis induced by a binary tree. 
    }
    \label{fig:haar_basis_illutration}
\end{wrapfigure}
Haar wavelets can be constructed directly and adaptively from trees in a straightforward manner \citep{gavish2010multiscale}.
Without loss of generality, we focus here on the Haar basis induced by the feature tree, where the resulting wavelet filter is subsequently applied to the samples.
Symmetrically, one can construct a Haar basis from the sample tree and apply the corresponding wavelet filter to the features.

Consider a scalar function $f:\mathbf{X}\rightarrow\mathbb{R}$ defined on the data matrix. 
Let  $S = \{f | f:\mathbf{X}\rightarrow \mathbb{R}\}$  be the space of all functions on the dataset.
For a complete binary tree $\mathcal{T}(\mathbf
{M}_c)$ with $m$ leaves,  let $\ell = 1, \ldots, L_c$ denote the levels in the tree, where $\ell = 1$ is the root level and  $\ell = L_c$ is the leaf level with $m$ leaves. 
Define $\widetilde{\Upsilon}(\ell, \varepsilon)$ be the set of all leaves in the $\varepsilon$-th subtree of $\mathcal{T}(\mathbf{M}_c)$ at level $\ell$.
Define $S_\ell$ as the space of features that are constant across all subtrees at level $\ell$, and let $\mathbf{1}_\bX$ be a constant function on $\bX$ with the value 1. 
We have the following hierarchy
\begin{equation}
    S^1 = \text{Span}(\mathbf{1}_\bX), \ S^{L_c} = S, \text{ and }S^1 \subset \ldots \subset S^{L_c}.
\end{equation}
Therefore, the space $S$ can be decomposed as 
\begin{equation}
    S = \left(\bigoplus_{l=1}^{L_c-1} Q^\ell\right) \bigoplus S^\ell,
\end{equation}
where $Q^\ell$ is the orthogonal complement of $S^\ell$.
As the tree $\mathcal{T}(\mathbf{M}_c)$ is a full binary tree, the Haar-like basis constructed from the tree is essentially the standard Haar basis \citep{haar1911theorie}, denoted by $\{\bm{\beta}_{\ell, \varepsilon} \in \mathbb{R}^m\}$. 
At level $\ell$, a subtree $\widehat{\Upsilon}_c(\ell, s)$ splits into two sub-subtrees $\widehat{\Upsilon}_c(\ell+1, s_1)$ and $\widehat{\Upsilon}_c(\ell+1, s_2)$. 
A zero-mean Haar wavelet $\bm{\beta}_{\ell, s} \in \mathbb{R}^m$ has non-zero values only at the indices corresponding to leaves in the sub-subtrees and is piecewise constant on each of them. 
The set of these Haar wavelets,  along with a constant vector $\bm{\beta}_0$, is complete and forms an orthonormal Haar basis. We collect all of these basis vectors as columns of a matrix, denoted by $\mathbf{B} \in \mathbb{R}^{m \times m}$.
Fig.~\ref{fig:haar_basis_illutration} illustrates a Haar basis induced by a binary tree.

\begin{proposition}[Function Smoothness and Coefficient Decay \citep{gavish2010multiscale}]\label{prop:smoothness}
Let $\bm{\beta}_{\ell, \varepsilon}$ be the Haar basis function supported on $\widetilde{\Upsilon}(\ell, \varepsilon)$. If the function $f$ is Lipschitz continuous, then for some $C>0$, 
\begin{equation}
    \abs{\ip{f}{\bm{\beta}_{\ell, \varepsilon}}_S} \leq 4 C d_{\mathcal{T}(\mathbf{M}_c)}(\widetilde{\Upsilon}(\ell, \varepsilon)),
\end{equation}
where $d_{\mathcal{T}(\mathbf{M}_c)}(\widetilde{\Upsilon}(\ell, \varepsilon))$ is the tree distance between the internal node rooted at $\varepsilon$-subtree to the leaves. 
\end{proposition}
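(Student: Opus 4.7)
I would proceed by directly unpacking the Haar wavelet structure and reducing the inner product to a controllable difference of averages. Recall that $\bm{\beta}_{\ell,\varepsilon}$ is supported on $\widetilde{\Upsilon}(\ell,\varepsilon)$ and, at the next level, splits into two sub-subtrees $\widetilde{\Upsilon}(\ell+1,s_1)$ and $\widetilde{\Upsilon}(\ell+1,s_2)$ on each of which it is a different constant. Write $\bm{\beta}_{\ell,\varepsilon}(u)=c_1$ for $u$ a leaf of $s_1$ and $\bm{\beta}_{\ell,\varepsilon}(u)=-c_2$ for $u$ a leaf of $s_2$, where the zero-mean constraint forces $c_1|s_1|=c_2|s_2|$ (and the Haar normalization determines $c_1,c_2$ in absolute size, e.g., $c_1=\sqrt{|s_2|/(|s_1|(|s_1|+|s_2|))}$ in the $\ell^2$ convention).

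Next I would express $\langle f,\bm{\beta}_{\ell,\varepsilon}\rangle_S$ as a rescaled difference of averages. Using the zero-mean identity $c_1|s_1|=c_2|s_2|=:\lambda$,
\begin{equation*}
\langle f,\bm{\beta}_{\ell,\varepsilon}\rangle_S \;=\; \lambda\Bigl[\tfrac{1}{|s_1|}\!\!\sum_{u\in s_1} f(u) \;-\; \tfrac{1}{|s_2|}\!\!\sum_{v\in s_2} f(v)\Bigr].
\end{equation*}
By the standard double-averaging trick, this inner bracket equals $\tfrac{1}{|s_1||s_2|}\sum_{u\in s_1,\,v\in s_2}(f(u)-f(v))$. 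Now invoke the Lipschitz hypothesis on $f$ (with constant $C$) with respect to the tree metric: $|f(u)-f(v)|\le C\,d_{\mathcal{T}(\mathbf{M}_c)}(u,v)$ for every pair $u,v$.

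The geometric step bounds $d_{\mathcal{T}(\mathbf{M}_c)}(u,v)$. Since both $u\in s_1$ and $v\in s_2$ lie in $\widetilde{\Upsilon}(\ell,\varepsilon)$, their unique path on the tree passes through the internal root of this subtree, so
\begin{equation*}
d_{\mathcal{T}(\mathbf{M}_c)}(u,v) \;\le\; 2\,d_{\mathcal{T}(\mathbf{M}_c)}\bigl(\widetilde{\Upsilon}(\ell,\varepsilon)\bigr),
\end{equation*}
where the right-hand side is the subtree's root-to-leaf distance. Plugging this into the averaged expression yields $|\langle f,\bm{\beta}_{\ell,\varepsilon}\rangle_S|\le 2\lambda C\,d_{\mathcal{T}(\mathbf{M}_c)}(\widetilde{\Upsilon}(\ell,\varepsilon))$.

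Finally I would absorb the normalization constant. Under the convention used in \citep{gavish2010multiscale}, $\lambda=c_1|s_1|=\sqrt{|s_1||s_2|/(|s_1|+|s_2|)}\le 2$ after the appropriate measure normalization on $S$ that accompanies the basis (or $\lambda\le 2$ directly under the max-normalized Haar convention), delivering the stated factor of $4$. The routine portions are Steps~1--3; the main obstacle I expect is bookkeeping the constant: matching the specific Haar normalization adopted in $\mathbf{B}$ to the inner product on $S$ so that $\lambda\le 2$. Once that convention is pinned down, the bound $4C\,d_{\mathcal{T}(\mathbf{M}_c)}(\widetilde{\Upsilon}(\ell,\varepsilon))$ follows immediately from the Lipschitz and diameter steps above.
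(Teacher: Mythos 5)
Your argument cannot be compared line-by-line against the paper's, because the paper does not prove this proposition at all: it is imported verbatim, with citation, from \citep{gavish2010multiscale}, and the surrounding text only discusses its consequences. What you have written is, in substance, the standard proof of that cited result, and the skeleton is sound: the zero-mean, two-level piecewise-constant structure of $\bm{\beta}_{\ell,\varepsilon}$ reduces the coefficient to $\lambda$ times a difference of averages over the two children; the double-averaging identity converts that difference into an average of $f(u)-f(v)$ over cross pairs; Lipschitz continuity with respect to the tree metric plus the fact that the $u$--$v$ path passes through the subtree's root gives the factor $2\,d_{\mathcal{T}(\mathbf{M}_c)}(\widetilde{\Upsilon}(\ell,\varepsilon))$.

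The one genuine gap is exactly the point you deferred: the bound $\lambda\le 2$ is \emph{not} a matter of bookkeeping under every convention, and under the convention the main text appears to use it is false. With $\mathbf{B}$ orthonormal for the plain Euclidean (counting-measure) inner product, one has $\lambda=c_1|s_1|=\sqrt{|s_1||s_2|/(|s_1|+|s_2|)}$, which grows like $\sqrt{\min(|s_1|,|s_2|)}$ (e.g.\ $\lambda=16$ when $|s_1|=|s_2|=512$), so no absolute constant absorbs it; the same is true under a max-normalized convention $|\bm{\beta}(j)|\le |\mathrm{supp}|^{-1/2}$, so that parenthetical fallback in your last step does not work either. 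The inequality closes only when $\langle\cdot,\cdot\rangle_S$ is the $L^2(\mu)$ inner product for a \emph{probability} measure $\mu$ on the leaves, which is the setting of \citep{gavish2010multiscale}: there $\lambda=\sqrt{\mu(s_1)\mu(s_2)/(\mu(s_1)+\mu(s_2))}\le\sqrt{\min(\mu(s_1),\mu(s_2))}\le 1$, and your chain yields $\abs{\ip{f}{\bm{\beta}_{\ell,\varepsilon}}_S}\le 2\lambda C\,d_{\mathcal{T}(\mathbf{M}_c)}(\widetilde{\Upsilon}(\ell,\varepsilon))\le 4C\,d_{\mathcal{T}(\mathbf{M}_c)}(\widetilde{\Upsilon}(\ell,\varepsilon))$, recovering (with room to spare) the cited constant, which in the original reference arises as $2C\sqrt{\mu(V)}\cdot\mathrm{diam}(V)$ with $\sqrt{\mu(V)}\le 1$ and $\mathrm{diam}(V)\le 2\,d_{\mathcal{T}(\mathbf{M}_c)}(V)$. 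So: state the normalized-measure hypothesis explicitly, delete the counting-measure and max-normalized fallbacks, and your proof is complete.
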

Prop.~\ref{prop:smoothness} indicates that the smoothness of the samples $\{\bX_{i,:}\}$  leads to an exponential decay rate of their wavelet coefficients as a function of the tree level. Consequently, the wavelet coefficients can serve as a measure to assess the quality of the tree representation $\mathcal{T}(\mathbf{M}_c)$.

\begin{proposition}[$L_1$ Sparsity \citep{gavish2010multiscale}]\label{prop:Sparsity}
    Consider a Haar basis $\bm{\beta}_\Theta$, where $\Theta \subset S$ and such that $\abs{\bm{\beta}_\Theta(j)} \leq 1/\abs{\Theta}^{1/2}$. 
    Let $f = \sum_\Theta \alpha_\Theta \bm{\beta}_\Theta$ and assume $\sum_\Theta \abs{\bm{\beta}_\Theta} \leq C$.
    For any $\kappa >0$, the approximation $\widetilde{f} = \sum_{\abs{\Theta}<\kappa} \alpha_\Theta h_\Theta$, then 
    \begin{equation}
        \norm{f - \widetilde{f}}_1 = \sum_{j \in [m]} \abs{f(j) -\widetilde{f}(j)} \leq C \sqrt{\kappa}.
    \end{equation}
\end{proposition}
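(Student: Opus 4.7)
My plan is to bound the $L_1$ approximation error by a straightforward triangle inequality applied to the omitted part of the Haar expansion, and then to estimate the $L_1$ norm of each individual basis vector appearing in the tail using only the pointwise size hypothesis. Writing the error explicitly as $f - \widetilde{f} = \sum_{\Theta \in \Omega^c} \alpha_\Theta \bm{\beta}_\Theta$, where $\Omega^c$ denotes the index set of discarded wavelets, the triangle inequality in $L_1$ yields
\begin{equation*}
\|f - \widetilde{f}\|_1 \le \sum_{\Theta \in \Omega^c} |\alpha_\Theta| \, \|\bm{\beta}_\Theta\|_1 .
\end{equation*}

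The core of the argument is then a single elementary estimate: since each Haar wavelet $\bm{\beta}_\Theta$ is supported on a subtree whose leaf set has cardinality at most $|\Theta|$, and is pointwise controlled by $1/\sqrt{|\Theta|}$, summing the pointwise bound over the support gives $\|\bm{\beta}_\Theta\|_1 \le |\Theta| \cdot |\Theta|^{-1/2} = \sqrt{|\Theta|}$. Restricting to the tail set where $|\Theta| \le \kappa$ then gives the uniform control $\|\bm{\beta}_\Theta\|_1 \le \sqrt{\kappa}$, and combining with the coefficient summability hypothesis $\sum_\Theta |\alpha_\Theta| \le C$ closes the bound:
\begin{equation*}
\|f - \widetilde{f}\|_1 \le \sqrt{\kappa} \sum_{\Theta \in \Omega^c} |\alpha_\Theta| \le C\sqrt{\kappa}.
\end{equation*}

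The main subtlety I expect is reconciling the truncation convention in the statement. As literally written, $\widetilde{f}$ retains the indices with $|\Theta| < \kappa$ and discards those with $|\Theta| \ge \kappa$, whereas the argument above needs the discarded indices to satisfy $|\Theta| \le \kappa$, so that $\sqrt{|\Theta|} \le \sqrt{\kappa}$ sits on the error side of the inequality. I read this either as the inequality direction being implicitly swapped (coarse / large-support wavelets discarded) or as $|\Theta|$ playing the role of a coefficient-magnitude threshold in a weak-$\ell_1$ style truncation; similarly, the hypothesis $\sum_\Theta |\bm{\beta}_\Theta| \le C$ only type-checks once interpreted as $\sum_\Theta |\alpha_\Theta| \le C$. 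Under either reconciliation, the real content of the proposition is just the pointwise-to-$L_1$ conversion $\|\bm{\beta}_\Theta\|_1 \le \sqrt{|\Theta|}$ together with triangle inequality and $\ell_1$ summability, so no further technical work is required.
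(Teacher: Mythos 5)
Your proof is correct. The paper itself gives no proof of this proposition---it is stated as cited background from Gavish et al.---so the only comparison available is with that reference, and your argument (triangle inequality on the discarded tail, the pointwise-to-$L_1$ conversion $\norm{\bm{\beta}_\Theta}_1 \leq \abs{\Theta}\cdot\abs{\Theta}^{-1/2} = \sqrt{\abs{\Theta}} \leq \sqrt{\kappa}$, then coefficient summability) is exactly the standard proof there. You are also right that the statement as printed is garbled and that your reconciliation is the one under which the claim holds: the hypothesis $\sum_\Theta \abs{\bm{\beta}_\Theta} \leq C$ must be read as $\sum_\Theta \abs{\alpha_\Theta} \leq C$, the symbol $h_\Theta$ as $\bm{\beta}_\Theta$, and the truncation must discard the \emph{small}-support wavelets ($\abs{\Theta} \leq \kappa$); under the literal convention, the discarded wavelets could have supports as large as $m$, and the same argument would only yield a bound of order $C\sqrt{m}$ rather than $C\sqrt{\kappa}$.
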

Prop.~\ref{prop:Sparsity} implies that with $L_1$ approximation, it is sufficient to estimate the coefficients for function approximation. 
In the Haar domain, estimating these coefficients can be achieved using the fast wavelet transform \citep{beylkin1991fast}.

Note that when two trees share the same branching, assigning different weights to their leaves changes the inner product that defines orthogonality.
The inner product changes, each wavelet rebalances the weighted masses of its children, and coefficient magnitudes scale with the square roots of those masses.
Because the basis is built by successive weighted contrasts, altering the weights cascades through all levels: coarse vectors that once averaged equal-sized parts may now emphasize one side, while fine-scale vectors adjust in the opposite direction to keep the basis orthonormal. 
Consequently, although the supports of the basis vectors coincide, the weighted tree produces a non-equivalent orthonormal basis and, in turn, different expansions for the data.

\subsection{Wasserstein distance}

The Wasserstein distance \citep{villani2009optimal} measures the discrepancy between two probability distributions over a given metric space. It is related to optimal transport (OT) theory \citep{monge1781memoire,kantorovich1942translocation}, where the goal is to quantify the cost of transforming one distribution into another. 

Consider two probability distributions $\mu$ and $\nu$ defined on a metric space $(\mathcal{X}, d)$, where $d$ is the ground metric.
The Wasserstein distance is defined as
\begin{equation}
\mathrm{OT}(\mu, \nu, d) =  \inf_{\gamma \in \Gamma(\mu, \nu)} \int_{\mathcal{X} \times \mathcal{X}} d(x, y) \, \mathrm{d}\gamma(x, y),
\end{equation}
where $\Gamma(\mu, \nu)$ denotes the set of all joint probability measures on $\mathcal{X} \times \mathcal{X}$ with marginals $\mu$ and $\nu$, i.e.,
\begin{equation}
\gamma(A \times \mathcal{X}) = \mu(A), \quad \gamma(\mathcal{X} \times B) = \nu(B),
\end{equation}
for all measurable sets $A, B \subseteq \mathcal{X}$.
It represents the minimal ``cost'' to transport the mass of $\mu$ to $\nu$, which is also known as ``earth mover's distance'' \citep{peyre2019computational, gavish2010multiscale}.

In the discrete setting, the Wasserstein distance is commonly applied to two discrete probability distributions $\mu = \sum_{i=1}^n \mu_i \delta_{x_i}$ and $\nu = \sum_{j=1}^m \nu_j \delta_{y_j}$, where $\delta_{x}$ denotes the Dirac delta function at $x$, and $\mathbf{x} = \{x_i\}_{i=1}^n, \mathbf{y} = \{y_j\}_{j=1}^m$ are the support points of $\mu$ and $\nu$, respectively. The distributions satisfy $\sum_{i=1}^n \mu_i = \sum_{j=1}^m \nu_j = 1$.
In this case, the Wasserstein distance can be formulated as the solution to the following linear programming problem:
\begin{equation}
\mathrm{OT}(\mu, \nu, d) = \min_{\mathbf{\Gamma} \in \Gamma(\mu, \nu)} \sum_{i=1}^n \sum_{j=1}^m d(x_i, y_j) \Gamma_{ij},
\end{equation}
where $\mathbf{\Gamma} \in \mathbb{R}^{n \times m}_{\geq 0}$ is the transport matrix satisfying the marginal constraints:
\begin{equation}
\sum_{j=1}^m \Gamma_{ij} = \mu_i, \quad \sum_{i=1}^n \Gamma_{ij} = \nu_j, \quad \forall i, j.
\end{equation}

When computing the Wasserstein distance between discrete probability distributions, the computational bottleneck often lies in constructing and processing the ground pairwise distance matrix. This matrix encodes the distances between every pair of discrete support points in the distributions, requiring $O(m^2)$ storage and $O(m^3 \log m)$ computation \citep{peyre2019computational}. This complexity arises due to solving a linear program for the optimal transport problem, which scales poorly with the number of points $m$. Consequently, applying optimal transport directly becomes infeasible for large-scale datasets \citep{bonneel2011displacement}.

Several approximations have been proposed to reduce this computational complexity. The Sinkhorn distance \citep{cuturi2013sinkhorn, chizat2020faster} introduces an entropy regularization term to the objective function, enabling the use of iterative matrix scaling algorithms in quadratic.
Graph-based methods exploit sparsity in the transport graph to simplify the problem structure, while sampling-based approaches approximate distributions using a subset of support points, reducing computational demand \citep{gasteiger2021scalable, sommerfeld2019optimal, altschuler2017near}.

\subsection{Tree-Wasserstein distance}
The Wasserstein distance requires solving a linear programme whose computational cost is quadratic in the number of support points. 
A widely used strategy to mitigate this computation cost is to approximate the ground metric with a tree metric, thereby defining the tree-Wasserstein distance (TWD) \citep{IndykQuadtree2003}. Because transport on a tree admits a closed-form solution, TWD can be computed in $O(m)$ time for a tree with $m$ leaves, which makes it attractive for large-scale applications.

Let $T = (V, E, \mathbf{A})$ be a tree with $N_{\text{leaf}}$ leaves. 
The tree distance $d_T: V \times V \rightarrow \mathbb{R}$ is the sum of weights of the edges on the shortest path between any two nodes on $T$. 
Let $\Upsilon_T(v)$ be the set of nodes in the subtree of the tree $T$ rooted at the node $v\in V$. 
For any node $u\in V$, there exists a unique parent $v$ s.t. the weight is defined by the tree distance, i.e., $w_u = d_T(u,v)$ \cite{bondy2008graph}.
Given two discrete distributions $\bm{\rho}_1, \bm{\rho}_2 \in \mathbb{R}^{N_{\text{leaf}}}$ supported on the tree $T$, the TWD is formally defined by 
\begin{equation}
     \mathrm{TW}(\bm{\rho}_1, \bm{\rho}_2, T) = \textstyle\sum_{v\in V}  w_v \hspace{-0.5 mm} \abs{\sum_{u \in \Upsilon_{T}(v)}\left(\bm{\rho}_1(u) - \bm{\rho}_2 (u) \right)},
\end{equation}
where inner sum represents the net mass that must cross the edge $(v, \mathtt{parent(v)})$, weighting by $w_v$, which accumulates the transport cost across the tree $T$. 

Given a tree $T$ and probability measures supported on the tree $T$, the TWD computing using $T$ is the Wasserstein distance on the tree metric $d_T$ \citep{le2019tree, yamada2022approximating}, i.e., 
\begin{equation}
    \mathrm{OT}(\bm{\rho}_1, \bm{\rho}_2, d_T)  = \mathrm{TW}(\bm{\rho}_1, \bm{\rho}_2, T). 
\end{equation}

However, it is important to note that most of the TWD methods \citep{IndykQuadtree2003, evans2012phylogenetic, yamada2022approximating, le2019tree, chen2024learning} are designed to approximate a Wasserstein distance with an \emph{Euclidean} ground metric. 
Specifically,  their goal is 
\begin{equation}
    \min_T \; \norm{\mathrm{OT}(\bm{\rho}_1, \bm{\rho}_2, d_E) - \mathrm{TW}(\bm{\rho}_1, \bm{\rho}_2, T)}_2, 
\end{equation}
where $d_E$ denotes the Euclidean metric. 
Therefore, the tree construction in these methods is used to approximate the \emph{Euclidean} ground metric.
This design choice, while useful for accelerating computation, inherently biases the tree structure toward approximating Euclidean distances rather than representing the hierarchical structure of the data, which is the primary focus of our work.

\subsection{Graph convolutional networks}
Graph Convolutional Networks (GCNs) \citep{kipf2016semi} have gained significant attention in graph machine learning, where nodes in a graph are typically assumed in Euclidean spaces.
By generalizing convolutional operations to graphs, GCNs effectively capture the dependencies between nodes. This capability enables high accuracy in tasks such as node classification, link prediction, and graph classification, making GCNs useful in applications like social networks \citep{hamilton2017inductive}, molecular structures \citep{duvenaud2015convolutional, gilmer2017neural}, knowledge graphs \citep{wang2019heterogeneous}, recommendation systems \citep{ying2018graph}, and drug discovery \citep{wu2018moleculenet}.
A brief overview of the GCN framework is presented below.

Consider a graph $G = (V, E, \mathbf{A})$ with node features $\{\mathbf{x}_i\in \mathbb{R}^m\}_{i=1}^n$, where $V$ is the vertex set containing $n$ nodes, $E\subset V \times V$ is the edge set, and $\mathbf{A}\in\mathbb{R}^{n \times n }$ is the adjacency matrix. 
Each node $i \in V$ is associated with a feature vector $\mathbf{x}_i \in \mathbb{R}^m$, representing $m$-dimensional node attributes.
In each layer of GCN message passing, the graph convolution can be divided into two steps: feature transformation and neighborhood aggregation.
Specifically, the feature transform is defined by 
\begin{equation}
    \mathbf{h}_i^{(\ell)} = \mathbf{J}^{(\ell)}\mathbf{h}_i^{(\ell-1)},
\end{equation}
where $\mathbf{h}_i^{(0)} = \mathbf{x}_i$ is the initial feature, and $\mathbf{J}^{(\ell)}$ is a learnable weight matrix.
The neighborhood aggregation then updates the representation by
\begin{equation}
    \mathbf{h}_i^{(\ell)}  =\sigma\left(\mathbf{h}^{(\ell )}_i + \sum_{s\in [[i]]} w_{is}\mathbf{h}^{(\ell)}_s\right),
\end{equation}
where $[[i]]$ denotes the neighbors of node $i$, $w_{is}$ is a weight associated with the edge between nodes $i$ and $s$, and $\sigma(\cdot)$ is a non-linear activation function.

By stacking multiple such layers, GCNs propagate information through the graph, enabling each node representation to integrate signals from multi-hop neighborhoods. The feature transformation step learns task-specific embeddings, while the aggregation step incorporates structural information from the graph topology.

\subsection{Hyperbolic graph convolutional networks}

Hyperbolic Graph Convolutional Networks (HGCNs) \citep{chami2019hyperbolic, dai2021hyperbolic} generalize Graph Convolutional Networks (GCNs) \citep{kipf2016semi} to hyperbolic spaces, using the inductive bias of hyperbolic geometry to better capture hierarchical structures in graph data. By embedding nodes in a negatively curved space, HGCNs provide improved representational capacity for graphs with hierarchies, while preserving the scalability of standard GCNs. This extension requires adapting the steps in GCNs, including feature transformation and neighborhood aggregation, to conform with the geometry of hyperbolic space.
HGCNs have shown empirical advantages across a range of tasks, including social network analysis, recommendation systems, and biological network modeling, demonstrating the effectiveness of geometric inductive biases in deep learning on non-Euclidean domains.
We briefly review the HGCNs framework below.

HGCNs extend GCNs to hyperbolic spaces (e.g., Lorentz model) by embedding node features in hyperbolic geometry and redefining core operations to respect its geometric structure. Below, we outline the key components of the HGCN architecture.

\paragraph{Mapping Euclidean node features to hyperbolic representations. }
HGCNs \citep{chami2019hyperbolic} begin by mapping input Euclidean node features to a Lorentz model using the exponential map at the origin. Specifically, given a Euclidean feature vector $\mathbf{x}_i \in \mathbb{R}^m$, its hyperbolic embedding is initialized as
\begin{equation}
    \mathbf{x}_i^{H, (0)} = \mathrm{Exp}_{\mathbf{0}} \left(\left[0, \mathbf{x}_i^\top \right]^\top\right) \in \mathbb{L}^{m}, 
\end{equation}
where the $\mathrm{Exp}$ is the exponential map.

\paragraph{Hyperbolic feature transformation. }
To perform feature transformations in hyperbolic space, HGCNs \citep{chami2019hyperbolic} use the logarithmic and exponential maps to move between the manifold and the tangent space at the origin. Let $\mathbf{W} \in \mathbb{R}^{m' \times m}$ be a learnable weight matrix. The hyperbolic equivalent of matrix multiplication is defined as
\begin{equation}
    \mathbf{W}\otimes \mathbf{x}_i^H = \mathrm{Exp}_\mathbf{0}(\mathbf{W}(\mathrm{Log}_\mathbf{0}(\mathbf{x}_i^H))).
\end{equation}
To incorporate a bias term $\mathbf{b} \in \mathbb{R}^{m'}$, HGCNs use parallel transport and define hyperbolic bias addition as
\begin{equation}
     \mathbf{x}_i^H \oplus \mathbf{b} = \mathrm{Exp}_{\mathbf{x}_i^H} (\mathrm{PT}_{\mathbf{0} \rightarrow \mathbf{x}_i^H} (\mathbf{b})),
\end{equation}
where $\mathrm{PT}$ is the parallel transport operator.
Then, the full hyperbolic feature transformation is defined as
\begin{equation}
    \mathbf{x}_i^{(l), H} =  \left(\mathbf{W}^{(l)}\otimes \mathbf{x}_i^{(l-1), H} \right) \oplus \mathbf{b}^{(l)}. 
\end{equation}

\paragraph{Hyperbolic neighborhood aggregation. }
Neighborhood aggregation in HGCNs is performed in the tangent space and leverages hyperbolic attention to account for the hierarchical relationships within the graph. Given hyperbolic embeddings $\mathbf{x}_i^H$ and $\mathbf{x}_j^H$, both are first mapped to the tangent space at the origin. The attention weights are computed via an Euclidean  Multi-Layer Perceptron ($\mathrm{MLP}$) applied to the concatenation of the tangent vectors
\begin{equation}
    w_{ij} = \mathrm{softmax}_{j \in [[i]]} (\mathrm{MLP} (\mathrm{Log}_\mathbf{0}(\mathbf{x}_i^H) || \mathrm{Log}_\mathbf{0}(\mathbf{x}_j^H)) ),
\end{equation}
where $[[i]]$ denotes the neighbors of node $i$, and $||$ indicates concatenation.
Aggregation is then performed via a hyperbolic weighted average in the tangent space of $\mathbf{x}_i^H$
\begin{equation}
    \mathrm{AGG}(\mathbf{x}_i^H) = \mathrm{Exp}_{\mathbf{x}_i^H} \left(\sum_{j \in [[i]]} w_{ij} \mathrm{Log}_{\mathbf{x}_i^H} (\mathbf{x}_j^H)\right).
\end{equation}
Then, a non-linear activation is applied in hyperbolic space to complete the layer's update.

\begin{remark}
    \emph{In the hyperbolic feature transformation step of HGCNs \citep{chami2019hyperbolic}, the standard approach involves mapping points between the hyperbolic manifold and the Euclidean tangent space via logarithmic and exponential maps. This allows feature transformations to be performed using Euclidean operations, such as linear transformations followed by bias addition. 
    However, this design only partially uses the advantage of the geometry of hyperbolic space, as the actual transformation takes place in the flat tangent space rather than directly on the curved manifold. 
    To exploit the structure of hyperbolic space, \citep{dai2021hyperbolic} proposed a fully hyperbolic feature transformation. Specifically, they introduce a Lorentz linear transformation that operates directly in the hyperbolic space}
    \begin{equation}
        \mathbf{x}_i^{(l), H} =  \mathbf{J}^{(l)}\mathbf{x}_i^{(l-1), H} \text{ s.t. } \mathbf{J}^{(l)} = \begin{bmatrix}
          1, &\mathbf{0}^\top\\
         \mathbf{0}, & \widetilde{\mathbf{J}}^{(l)}
     \end{bmatrix} \text{ and }  (\widetilde{\mathbf{J}}^{(l)})^\top  \widetilde{\mathbf{J}}^{(l)} = \mathbf{I}.
    \end{equation}
    \emph{Here, $\widetilde{\mathbf{J}}^{(l)}$ is an orthogonal matrix, while the first coordinate (the time-like component in Lorentz space) remains unchanged. This formulation preserves the hyperbolic structure and allows feature transformations to be conducted entirely within the manifold. When integrating our method into the HGCN framework, we adopt this hyperbolic-to-hyperbolic feature transformation to maintain geometric consistency.}
\end{remark}

\section{Extended related works}\label{app:additional_related_work}


\paragraph{Manifold learning with  Wasserstein distance.}
Manifold learning is a nonlinear dimensionality reduction approach designed for high-dimensional data that intrinsically lie on a lower-dimensional manifold \citep{fefferman2016testing, coifman2006diffusion, belkin2008towards}. Most classical methods derive manifold representations from discrete data samples by constructing a graph, where nodes correspond to data points and edges reflect pairwise similarities. These similarities are often defined using a Gaussian affinity kernel based on the Euclidean norm between the data samples.
Recent studies have demonstrated the benefits of using the Wasserstein distance as a more geometry-aware alternative to Euclidean distances in manifold learning frameworks \citep{gavish2010multiscale, ankenman2014geometry}, where the Wasserstein distance incorporates the feature relationship.
This direction has shown promising results in applications such as matrix organization \citep{mishne2017data}, analysis of neuronal activity patterns \citep{mishne2016hierarchical}, and molecular shape space modeling \citep{kileel2021manifold}.
Building on these ideas, our work explores the integration of the diffusion operator \citep{coifman2006diffusion} with the TWD to jointly learn hierarchical representations for both samples and features.

\paragraph{Tree-Wasserstein distance. }
The Tree-Wasserstein Distance (TWD) offers a computationally efficient alternative to the Wasserstein distance by approximating the ground cost metric with a tree structure. 
It is designed to compare probability distributions by quantifying the amount of mass that must be transported between them, where the transport is governed by distances along a tree. 
By approximating the original Euclidean ground metric with a tree metric, TWD enables faster computation while maintaining a close approximation to the true Wasserstein distance. This trade-off between efficiency and fidelity has been validated in a range of studies \citep{IndykQuadtree2003, evans2012phylogenetic, yamada2022approximating, le2019tree}, where TWD has been shown to provide a reliable and scalable surrogate for optimal transport in Euclidean settings.
For instance, the Quadtree \citep{IndykQuadtree2003} recursively partitions the ambient space into hypercubes to build random trees, which are then used to calculate the TWD. 
Flowtree \citep{backurs2020scalable} refines this approach by focusing on optimal flow and its cost within the ground metric. Similarly, the Tree-Sliced Wasserstein Distance (TSWD) \citep{le2019tree} further improves robustness by averaging TWD values computed over multiple randomly sampled trees, with variants such as TSWD-1, TSWD-5, and TSWD-10 corresponding to the number of sampled trees used.
Recent advancements include WQTWD and WCTWD \citep{yamada2022approximating}, which employ Quadtree or clustering-based tree structures and optimize tree weights to approximate the Wasserstein distance. UltraTree \citep{chen2024learning} introduces an ultrametric tree by minimizing OT regression cost, aiming to approximate the Wasserstein distance while respecting the original metric space.

Most recently,  a tree construction method \citep{lin2025tree} in TWD literature that differs fundamentally from existing TWD approaches \citep{IndykQuadtree2003, evans2012phylogenetic, yamada2022approximating, le2019tree}. Whereas conventional TWD methods focus on how TWD can be close to the true Wasserstein distance with the Euclidean ground metric, this method \citep{lin2025tree} aims to efficiently compute a Wasserstein distance with a ground metric that recovers a latent hierarchical structure underlying hierarchical high-dimensional features.
Unlike prior TWD methods that treat tree construction as a heuristic or approximation step to Euclidean distances, this approach \citep{lin2025tree} treats the tree as a geometric object that reflects intrinsic hierarchical relationships among features. In particular, the constructed tree is not merely a proxy for an ambient Euclidean metric, but rather a data-driven structure whose tree distances approximate geodesic paths on a latent, unobserved hierarchical metric space. This allows the resulting TWD to encode meaningful relationships in settings where the data are governed by hidden hierarchies.
In our work, we adopt this tree construction method \citep{lin2025tree}, as it provides a principled and geometry-aware foundation for learning hierarchical representations of both samples and features.

\paragraph{Sliced-Wasserstein distance. }
Another widely used approach for efficiently approximating the Wasserstein distance is the Sliced-Wasserstein Distance (SWD) \cite{rabin2012wasserstein, bonneel2015sliced}.
Instead of solving the high-dimensional optimal transport problem directly, SWD projects the input distributions onto one-dimensional subspaces along random directions, where the Wasserstein distance admits a closed-form solution.
Averaging the results over many such slices provides a meaningful measure of similarity while retaining much of the geometric structure inherent in the distributions. This approach is advantageous in high-dimensional settings, where computational efficiency is critical, making it popular in applications such as generative modeling \cite{kolouri2019generalized}, domain adaptation \cite{bonet2023sliced}, and statistical inference \cite{bonet2023hyperbolic}.

\paragraph{Co-Manifold learning. }
Co-manifold learning aims to simultaneously uncover the manifolds of both samples and features in a data matrix by treating their relationships as mutually informative. That is, the geometry of the features is informed by the samples, and vice versa. This joint modeling approach has been widely explored in tasks such as joint embedding \citep{ankenman2014geometry, gavish2012sampling, yair2017reconstruction, leeb2016holder} and dimensionality reduction \citep{mishne2016hierarchical, shahid2016fast, mishne2019co}, under the assumption that both samples and features lie on smooth, low-dimensional manifolds. By iteratively refining representations across both modes, co-manifold methods aim to capture the manifolds of both samples and features.
These approaches typically assume that the manifolds involved are Riemannian and characterized by non-negative curvature \citep{leeb2016holder, fefferman2016testing}. However, such assumptions may be limiting in scenarios where the data exhibit hierarchical organization, which is more naturally modeled in negatively curved spaces such as hyperbolic geometry. 

In our work, we adopt the co-manifold learning perspective, using the geometry of the samples to inform the structure of the features and vice versa.
However, unlike existing co-manifold approaches that primarily focus on smooth manifolds with non-negative curvature, our focus lies in capturing the hierarchical nature of both samples and features. By modeling these hierarchies explicitly through trees and employing the corresponding TWDs for cross-mode inference, we address a distinct setting where the data is high-dimensional with hierarchical structures.

\paragraph{Unsupervised ground metric learning. }
Ground metric learning \citep{cuturi2014ground} focuses on learning a distance function that defines the cost matrix in the optimal transport problem, thereby capturing meaningful relationships between data elements.
The choice of ground metric plays a critical role in downstream tasks \citep{heitz2021ground, zen2014simultaneous} such as clustering, transport-based inference, and dimensionality reduction.
Many existing approaches rely on prior knowledge of the cost matrix structure or access to label information to guide the learning process \citep{wang2012supervised, li2019learning, stuart2020inverse}. 
In many cases, however, labeled data is unavailable, making it necessary to develop unsupervised methods that rely solely on the data itself.

Unsupervised ground metric learning seeks to infer a distance metric that captures meaningful pairwise relationships directly from the data, without relying on external labels or prior knowledge. A recent work in this direction is the Wasserstein Singular Vectors (WSV) method \citep{huizing2022unsupervised}, which jointly computes Wasserstein distances over samples and features by using the Wasserstein distance matrix in one domain (e.g., samples) as the ground metric for computing distances in the other (e.g., features).
However, the method is computationally demanding due to the repeated evaluation of high-dimensional Wasserstein distances.
To address this issue, the recent work proposed Tree-WSV \citep{dusterwald2025fast}, which uses tree-based approximations of the Wasserstein distance to reduce computational cost. Both WSV and Tree-WSV adopt an alternating scheme that iteratively updates distances across rows and columns, a strategy that aligns conceptually with our framework.

When working with hierarchical data, which is the central focus of our work, the Wasserstein metric is not inherently a hierarchical metric. As a result, methods such as WSV and Tree-WSV, which rely respectively on the Wasserstein metric and its tree-based approximation, are limited in their ability to represent hierarchical structures underlying samples and features. While Tree-WSV introduces tree metrics to improve computational efficiency, the trees are constructed to approximate Wasserstein distances, not to represent the hierarchical metric space.

In contrast, our method is explicitly designed to represent the hierarchical structures for both rows and columns. The key distinction lies in our use of the tree construction method \citep{lin2025tree} in the TWD: 
we construct the tree which employs hyperbolic embeddings and diffusion densities to reflect meaningful hierarchical relationships among data points.
We then integrate this specific tree-based TWD into the construction of a diffusion operator, whose diffusion densities are used to generate hyperbolic embeddings that encode the joint structure of both samples and features. In doing so, we not only model each hierarchical structure but also capture their mutual influence through an alternating learning process.

Although WSV and Tree-WSV share a similar alternating framework, their focus remains on metric approximation for ground metric learning rather than representation learning. In contrast, our approach is centered on jointly learning hierarchical representations for samples and features and enhancing them through wavelet filtering and integration within HGCNs. This provides additional insights into the hierarchical data that are not present in the WSV or Tree-WSV frameworks.

\section{Proofs}\label{app:proof}
We present the proofs supporting our theoretical analysis in Sec.~\ref{sec:proposed_method}.
To aid in the proofs, we include several separately numbered propositions and lemmas.

\paragraph{Notation.}\label{paragraph:notation}
Let $\mathcal{H}_m\subset\mathbb{R}_+^{m \times m}$ be a set of pairwise tree distance matrices for $m$ points.
That is, given $\mathbf{H}\in \mathcal{H}_m$, the following properties satisfy:
(i) there is a weighted tree $T = (V, E, \mathbf{A})$ such that $\mathbf{H}(j,j') = d_T(j,j') \; \forall \; j, j'\in [m]$, (ii) $\mathbf{H}(j,j) = 0 \; \forall \; j\in[m]$, (iii) $\mathbf{H}(j_1, j_2) \leq \mathbf{H}(j_1,j_3 ) + \mathbf{H}(j_3, j_2) \; \forall \; j_1, j_2, j_3 \in [m]$, and (v) $\mathbf{H}(j, j') = \mathbf{H}(j', j) \; \forall \; j, j' \in [m]$.
Let $\mathcal{W}_{m\leadsto n} \subset \mathbb{R}_+^{n \times n}$ represent the set of pairwise OT distance matrices for $n$ points, where the ground metric is given by the tree distance matrix $\mathbf{H} \in \mathcal{H}_m$ with $m$ points.
That is, given $\mathbf{W} \in \mathcal{W}_{m\leadsto n}$, the following properties satisfy: (i) $\mathbf{W}(i, i') = \mathrm{OT}(\bm{\mu
}_i, \bm{\mu}_{i'}, \mathbf{H}) = \mathrm {TW}(\bm{\mu
}_i, \bm{\mu}_{i'}, T)$ for any two probability distributions $\bm{\mu}_i, \bm{\mu}_{i'} \in \mathbb{R}^m$ supported on the tree $T$,  (ii) $\mathbf{W}(i,i) = 0 \; \forall \; i\in[n]$, (iii) $\mathbf{W}(i_1, i_2) \leq \mathbf{W}(i_1, i_3) + \mathbf{W}(i_3, i_2)$ for any probability distributions  $\bm{\mu}_{i_1}, \bm{\mu}_{i_2}, \bm{\mu}_{i_3} \in \mathbb{R}^m$ supported on the tree $T$ , and (v) $\mathbf{W}(i, i') = \mathbf{W}(i', i)$ for any two probability distributions $\bm{\mu}_i, \bm{\mu}_{i'} \in \mathbb{R}^m$ supported on the tree $T$.

\subsection{Proof of Theorem~\ref{thm:convergence}}

\paragraph{Theorem \ref{thm:convergence}.}
\textit{The sequences $\mathbf{W}_r^{(l)}$ and $\mathbf{W}_c^{(l)}$ generated by  Alg.~\ref{alg:Co_HD} have at least one limit point, and all limit points are fixed points if $\gamma_r, \gamma_c >0$.}

\begin{proof}

Consider the set $\mathcal{S}_m = \{ \mathbf{H}_c\in\mathcal{H}_m | \left\lVert \mathbf{H}_c\right\rVert_\infty \leq \theta_c \text{ and }\mathbf{H}_c(j,j')\geq \widetilde{\theta}_c\; \forall j \neq j'\}$, the Wasserstein distance between samples using a hierarchical ground metric $\mathbf{H}_c$ is bounded by \cite{huizing2022unsupervised}
\begin{equation*}
    \frac{\widetilde{\theta}_c}{2}  \left\lVert \mathbf{r}_{i} - \mathbf{r}_{i'}\right\rVert_1 \leq \mathrm{OT}(\mathbf{r}_{i},  \mathbf{r}_{i'}, \mathbf{H}_c)\leq \frac{\theta_c}{2}  \left\lVert \mathbf{r}_{i} - \mathbf{r}_{i'}\right\rVert_1,
\end{equation*}
where $\mathbf{r}_i = \bX_{i, :}^\top/\norm{\bX_{i, :}}_1$.
Similarly, consider $\mathcal{S}_n = \{ \mathbf{H}_r\in\mathcal{H}_n | \left\lVert \mathbf{H}_r\right\rVert_\infty \leq \theta_r \text{ and }\mathbf{H}_r(i, i')\geq \widetilde{\theta}_r\; \forall i \neq i'\}$, the Wasserstein distance between features using $\mathbf{H}_r$ as the hierarchical ground metric is bounded by 
\begin{equation*}
    \frac{\widetilde{\theta}_r}{2}  \left\lVert \widetilde{\mathbf{c}}_{j} - \mathbf{c}_{j'} \right\rVert_1 \leq \mathrm{OT}(\widetilde{\mathbf{c}}_{j} ,  \mathbf{c}_{j'}, \mathbf{H}_r )\leq \frac{\theta_r}{2}  \left\lVert \widetilde{\mathbf{c}}_{j} - \mathbf{c}_{j'} \right\rVert_1, 
\end{equation*}
where $\mathbf{c}_j = \bX_{:,j}/\norm{\bX_{:,j}}_1$.
Therefore,  the updated TWDs in Eq.~\eqref{eq:alternative_update}, where we consider a norm regularizer $\zeta(\cdot)$, can be respectively bounded by 
\begin{equation*}
     \frac{\widetilde{\theta}_c}{2}  \left\lVert \mathbf{r}_{i} - \mathbf{r}_{i'}\right\rVert_1  + \gamma_r\zeta(\mathbf{r}_{i} -  \mathbf{r}_{i'})\leq \mathbf{W}_r^{(l)} (i, i') \leq \frac{\theta_c}{2}  \left\lVert \mathbf{r}_{i} - \mathbf{r}_{i'}\right\rVert_1  + \gamma_r\zeta(\mathbf{r}_{i}-\mathbf{r}_{i'})
\end{equation*}
and 
\begin{equation*}
    \frac{\widetilde{\theta}_r}{2}  \left\lVert \mathbf{c}_{j} - \mathbf{c}_{j'} \right\rVert_1  
    + \gamma_c\zeta(\mathbf{c}_{j} -  \mathbf{c}_{j'})\leq \mathbf{W}_c^{(l)}  (j, j') \leq  \frac{\theta_r}{2}  \left\lVert \mathbf{c}_{j} - \mathbf{c}_{j'} \right\rVert_1 + \gamma_c\zeta(\mathbf{c}_{j} - \mathbf{c}_{j'}).
\end{equation*}
Moreover, from Prop.~\ref{prop:hdd_bound} and Lemma~\ref{lemma:decoded_tree_bound}, the pairwise tree distances $\mathbf{H}_r$ and $\mathbf{H}_c$ constructed via Eq.~\eqref{eq:HDD} are bounded by the Wasserstein metrics:
\begingroup
\allowdisplaybreaks 
\begin{equation*}
       \widetilde{\varrho}_{\mathrm{OT}(\cdot, \cdot, \mathbf{H}_r)}\leq  \mathbf{H}_c (j, j') \leq \varrho_{\mathrm{OT}(\cdot, \cdot, \mathbf{H}_r)} \text{ and }     \widetilde{\varrho}_{\mathrm{OT}(\cdot, \cdot, \mathbf{H}_c)}\leq  \mathbf{H}_r (i, i') \leq \varrho_{\mathrm{OT}(\cdot, \cdot, \mathbf{H}_c)}.
\end{equation*}
 \endgroup 
Let 
\begin{equation*}
    \mathcal{P}_n = \left\{ \mathbf{W}_r^{(l)}\in\mathcal{W}_{m\leadsto n} \,\middle|\, \left\lVert \mathbf{W}_r^{(l)}\right\rVert_\infty \leq \varrho_{\mathrm{OT}(\cdot, \cdot, \mathbf{H}_c)} \text{ and } \mathbf{W}_r^{(l)}(i, i')\geq  \widetilde{\varrho}_{\mathrm{OT}(\cdot, \cdot, \mathbf{H}_c)}\; \forall \; i \neq i'\right\}, 
\end{equation*}
and 
\begin{equation*}
    \mathcal{P}_m = \left\{  \mathbf{W}_c^{(l)}\in\mathcal{W}_{n\leadsto m} \,\middle|\, \left\lVert \mathbf{W}_c^{(l)}\right\rVert_\infty \leq \varrho_{\mathrm{OT}(\cdot, \cdot, \mathbf{H}_r)} \text{ and  } \mathbf{W}_c^{(l)}(j, j')\geq  \widetilde{\varrho}_{\mathrm{OT}(\cdot, \cdot, \mathbf{H}_r)}\; \forall \; j \neq j' \right\}.
\end{equation*}
The product set
\begin{equation*}
    \mathcal{K} = (\mathcal{S}_n \times \mathcal{S}_m)\times  (\mathcal{P}_n \times \mathcal{P}_m)
\end{equation*}
is closed, bounded, and convex; hence compact in finite dimensions.
Each update step of the iterative scheme remains in $\mathcal{K}$:
\begin{equation*}
\mathbf{W}_r^{(l+1)}=\Phi(\widehat{\mathbf{X}}_r;\mathcal{T}(\mathbf{W}_c^{(l)})), \quad \mathbf{W}_c^{(l+1)}=\Phi(\widehat{\mathbf{X}}_c;\mathcal{T}(\mathbf{W}_r^{(l)})).
\end{equation*}
is contained in a compact set.
We define the alternating step as the self-map
\begin{equation*}
   F(\mathbf{W}_r, \mathbf{W}_c) := \left (\Phi(\widehat{\mathbf{X}}_r;\mathcal{T}(\mathbf{W}_c)),  \Phi(\widehat{\mathbf{X}}_c;\mathcal{T}(\mathbf{W}_r))\right).
\end{equation*}
Note that the tree decoder function $\mathcal{T}$ is marginal-separate: there exists $\varepsilon>0$ such that for every iteration $l$ and every distance pair $(i_1, i_2) \neq (i_1', i_2')$ 
\begin{equation*}
    \abs{\mathbf{W}_{\cdot}^{(l)}(i_1, i_2) - \mathbf{W}_{\cdot}^{(l)}(i_1', i_2')} \geq \varepsilon.
\end{equation*}
On this $\varepsilon$-subset of $\mathcal{K}$, the decoded binary tree is locally constant and thus continuous.
Therefore, $F$ is  continuous on $\mathcal{K}$ and $F$ is non-expansive \citep{bauschke2017correction}. Non-expansiveness, combined with bounded iterates, implies asymptotic regularity. By Opial’s lemma, every cluster point is a fixed point. Since the sequence is confined to the compact set $\mathcal{K}$, at least one limit point exists. Therefore, every limit point of the iteration is a fixed point, and the sequences generated by Alg.~\ref{alg:Co_HD} admit at least one convergent subsequence whose limit is a fixed point of the alternating scheme.

\end{proof}

\begin{proposition}\label{prop:hdd_bound}
    Let $\|\cdot\|_{\mathcal R}$ be a norm on $\mathbb{R}^m$, and let $\mathcal{X} = \{\mathbf{x}_i\}_{i=1}^n \subset \mathbb{R}^m$ be a set of data points. Suppose the diffusion operator is constructed using a Gaussian kernel defined with respect to $\left\lVert \cdot \right\rVert_\mathcal{R}$. Then, for all $i \neq i'$, the embedding distance $d_{\mathcal{M}}(i,i')$ in Eq.~\eqref{eq:HDD} admits the bounds
    \begin{equation}
        \widetilde{\varrho}_\mathcal{R} \leq d_{\mathcal{M}}(i,i') \leq \varrho_\mathcal{R},
    \end{equation}
    where $\varrho_\mathcal{R}, \widetilde{\varrho}_\mathcal{R} \in \mathbb{R}_{>0}$ are constants determined by the norm $\left\lVert \cdot \right\rVert_\mathcal{R}$.
\end{proposition}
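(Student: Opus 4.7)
}

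The plan is to bound each summand in
\[
d_{\mathcal{M}}(i,i') = \sum_{k=0}^{K} 2\sinh^{-1}\!\left(2^{-k/2 + 1}\left\lVert \mathbf{y}^k_i - \mathbf{y}^k_{i'} \right\rVert_2\right)
\]
from both sides, exploiting that each $\bm{\mu}_i^{k}$ is a probability vector and that the auxiliary coordinate $2^{k/2-2}$ cancels inside every difference $\mathbf{y}^k_i-\mathbf{y}^k_{i'}$.

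For the upper bound I would use $\lVert \bm{\mu}_i^{k}\rVert_1 = 1$, which gives $\lVert (\bm{\mu}_i^{k})^{\circ 1/2}\rVert_2 = 1$ and hence, after the last coordinate cancels, $\lVert \mathbf{y}^k_i - \mathbf{y}^k_{i'}\rVert_2 \le 2$ for every $k$. Substituting into the definition and using monotonicity of $\sinh^{-1}$, together with the elementary bound $\sinh^{-1}(x)\le x$ for the small-argument tail (large $k$), yields a convergent geometric-like series whose sum provides a universal upper constant $\varrho_\mathcal{R}$; the dependence on $\lVert\cdot\rVert_\mathcal{R}$ enters only through the explicit form of the Gaussian kernel that built $\mathbf{P}$, and can be absorbed into the constant for the finite-scale case $k\in\{0,\dots,K\}$.

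For the lower bound, the key step is to show that the summands cannot vanish simultaneously when $i \ne i'$. Since the Gaussian kernel $\mathbf{K}(i,j) = \exp(-\lVert \mathbf{x}_i - \mathbf{x}_j\rVert_\mathcal{R}^2/\epsilon)$ is strictly positive and the diagonal entries are $1$, the columns of the column-stochastic operator $\mathbf{P}$ are pairwise distinct whenever the $n$ input points are pairwise distinct. Consequently $\bm{\mu}_i^{k} \ne \bm{\mu}_{i'}^{k}$ at every scale $k$, so $\lVert (\bm{\mu}_i^{k})^{\circ 1/2} - (\bm{\mu}_{i'}^{k})^{\circ 1/2}\rVert_2 > 0$ and each term $2\sinh^{-1}(2^{-k/2+1}\lVert\mathbf{y}_i^k-\mathbf{y}_{i'}^k\rVert_2)$ is strictly positive. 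Taking the minimum over the finite set of ordered pairs $\{(i,i') : i\ne i'\}$ yields a strictly positive constant $\widetilde{\varrho}_\mathcal{R}$; its dependence on the chosen norm is inherited from the kernel entries that determine $\mathbf{P}$.

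The main obstacle I anticipate is making the lower bound quantitative rather than purely existential: the argument above shows positivity, but pinning down an explicit functional dependence on $\lVert\cdot\rVert_\mathcal{R}$ (e.g., via the minimum off-diagonal entry of $\mathbf{P}$ or the spectral gap of $\mathbf{P}^{2^{-k}}$) requires a more careful spectral or perturbation estimate on the diffusion semigroup. One way around this is to observe that for a finite point set $\mathcal{X}$ the constants $\varrho_\mathcal{R}$ and $\widetilde\varrho_\mathcal{R}$ are determined once $\lVert\cdot\rVert_\mathcal{R}$, $\epsilon$, and $K$ are fixed, so the statement of the proposition only requires their existence, not explicit formulas.
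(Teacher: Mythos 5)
Your proof is correct as far as the proposition's literal statement goes, but it takes a genuinely different and more elementary route than the paper. The paper works through heat-kernel estimates: it assumes $c_\mathcal{R}\le\|\mathbf{x}_i-\mathbf{x}_{i'}\|_\mathcal{R}^2\le C_\mathcal{R}$, converts these into entrywise Gaussian bounds on $\mathbf{P}^t(i,i')$ via results of Grigor'yan, and then bounds the multiscale Hellinger-type distances $\bigl\lVert\sqrt{\mathbf{P}^t(i,:)}-\sqrt{\mathbf{P}^t(i',:)}\bigr\rVert_2$ above and below (the lower bound additionally routes through Lemma~\ref{lemma:Hellinger_than_L1} relating $\sinh^{-1}$ of the Hellinger distance to the $L_1$ distance). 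This buys explicit constants whose dependence on $c_\mathcal{R}$, $C_\mathcal{R}$, $\epsilon$, $n$, and $K$ is visible, which is the sense in which the proposition's constants are ``determined by the norm.'' Your upper bound, by contrast, uses only $\|\bm{\mu}_i^k\|_1=1$ to get $\|\mathbf{y}_i^k-\mathbf{y}_{i'}^k\|_2\le 2$ and sums the resulting finite (indeed geometrically decaying) series; this is cleaner and actually yields a norm-independent universal constant, which is harmless for the downstream compactness argument in Theorem~\ref{thm:convergence}. Your lower bound is existential rather than quantitative: positivity of each summand plus a minimum over the finitely many ordered pairs. That does establish $\widetilde\varrho_\mathcal{R}>0$ for a fixed finite dataset, and you are right that the proposition as stated asks for no more. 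Two small points to tighten: (i) the claim that $\bm{\mu}_i^k\neq\bm{\mu}_{i'}^k$ at \emph{every} scale needs the fractional powers $\mathbf{P}^{2^{-k}}$ to be injective, which holds because the Gaussian kernel matrix is positive definite so $\mathbf{P}$ has strictly positive eigenvalues — worth stating, although for mere positivity of the sum the $k=0$ term (where $\mathbf{P}^{2^{-0}}=\mathbf{P}$ has pairwise distinct columns by the argmax argument) already suffices; (ii) your existential constant gives no control that is uniform as the ground metric changes across iterations of Alg.~\ref{alg:Co_HD}, whereas the paper's explicit dependence on $c_\mathcal{R},C_\mathcal{R}$ at least makes such uniformity discussable — this matters for how the proposition is invoked inside the convergence proof, even though neither argument fully resolves it.
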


\begin{proof}
Let $c_\mathcal{R}, C_\mathcal{R} > 0$ be constants such that
\[
c_\mathcal{R} \leq \|\mathbf{x}_i - \mathbf{x}_{i'}\|_\mathcal{R}^2 \leq C_\mathcal{R}, \quad \forall i \neq i'.
\]
This ensures that the Gaussian affinity matrix
\[
\mathbf{K}(i, i') = \exp\left(- \frac{\|\mathbf{x}_i - \mathbf{x}_{i'}\|_\mathcal{R}^2}{\epsilon} \right)
\]
is bounded as
\[
\exp\left(-\frac{C_\mathcal{R}}{\epsilon}\right) \leq \mathbf{K}(i, i') \leq \exp\left(-\frac{c_\mathcal{R}}{\epsilon}\right).
\]

Since the resulting graph is fully connected and the node degrees are uniformly bounded below (i.e., $\deg(i) \geq d_{\min}$), the continuous-time diffusion operator $\mathbf{P}^t = \exp(-t \mathbf{L})$ satisfies the following heat kernel bound \citep{grigoryan2009heat}:
\[
\mathbf{P}^t(i, i') \geq \frac{c_1}{n} \exp\left(-\frac{C_\mathcal{R}}{\epsilon t}\right),
\]
for some constant $c_1 > 0$, where $\mathbf{L}$ is the Laplacian matrix.

Similarly, using the spectral decomposition of $\mathbf{L}$ and the exponential decay of eigenvalues, we can upper bound
\[
\mathbf{P}^t(i, i') \leq \frac{c_2}{n} \exp\left(-\frac{c_\mathcal{R}}{\epsilon t}\right)
\]
for a constant $c_2 > 0$.

Using the bounds above, we obtain:
\begin{align*}
\left\|\sqrt{\mathbf{P}^t(i,:)} - \sqrt{\mathbf{P}^t(i',:)} \right\|_2
&\leq \sqrt{n} \cdot \left( \sqrt{\frac{c_2}{n} \exp\left(-\frac{c_\mathcal{R}}{\epsilon t}\right)} - \sqrt{\frac{c_1}{n} \exp\left(-\frac{C_\mathcal{R}}{\epsilon t}\right)} \right)
\end{align*}
and, by Jensen’s inequality
\begin{equation*}
    \left\|\sqrt{\mathbf{P}^t(i,:)} - \sqrt{\mathbf{P}^t(i',:)} \right\|_2
\geq \sqrt{2} \left(1 - \sqrt{\frac{c_2}{n} \exp\left(-\frac{c_\mathcal{R}}{\epsilon t}\right)\frac{c_1}{n} \exp\left(-\frac{C_\mathcal{R}}{\epsilon t}\right)}  \right)^{1/2}
\end{equation*}
and similarly, an upper bound 
\[
\left\|\mathbf{P}^t(i,:) - \mathbf{P}^t(i',:) \right\|_1 \leq n \left( \frac{c_2}{n} \exp\left(-\frac{c_\mathcal{R}}{\epsilon t}\right) - \frac{c_1}{n} \exp\left(-\frac{C_\mathcal{R}}{\epsilon t}\right)\right).
\]

The embedding distance in the hyperbolic spaces is constructed from multiscale Hellinger-type distances of the form
\[
d_{\mathcal{M}}(i,i') = \sum_{k = 0}^K 2\sinh^{-1} \left( 2^{-k/2+1} \left\|\sqrt{\mathbf{P}^{2^{-k}}(i,:)} - \sqrt{\mathbf{P}^{2^{-k}}(i',:)} \right\|_2 \right).
\]

Substituting into the definition of $d_{\mathcal{M}}$ and use the monotonicity of we obtain:
\[
d_{\mathcal{M}}(i,i') \leq \sum_{k = 0}^K 2\sinh^{-1}\left( 2^{-k/2+1} \cdot \sqrt{n} \cdot \left( \sqrt{\frac{c_2}{n} \exp\left(-\frac{c_\mathcal{R}}{\epsilon t}\right)} - \sqrt{\frac{c_1}{n} \exp\left(-\frac{C_\mathcal{R}}{\epsilon t}\right)} \right)\right) =: \varrho_\mathcal{R}.
\]

For the lower bound, we use the inequality in Lemma \ref{lemma:Hellinger_than_L1} and  based on the result \citep[ Lemma 3]{leeb2016holder}:

\[
d_{\mathcal{M}}(i,i') \geq \sum_{k = 0}^K 2^{-k/2} \cdot \sqrt{2} \left(1 - \sqrt{\frac{c_2}{n} \exp\left(-\frac{c_\mathcal{R}}{\epsilon t}\right)\frac{c_1}{n} \exp\left(-\frac{C_\mathcal{R}}{\epsilon t}\right)}  \right)^{1/2} =: \widetilde{\varrho}_\mathcal{R}.
\]

These define constants $\varrho_\mathcal{R}$ and $\widetilde{\varrho}_\mathcal{R}$ that depend only on the ambient norm and the Gaussian kernel.

\end{proof}

\begin{lemma}\label{lemma:Hellinger_than_L1}
    Let $p, q$ be two probability distributions on $\mathcal{X}$.
    For a constant $k\in\mathbb{Z}_{\geq 0}$, we have 
    \begin{equation}
        2 \sinh^{-1} \left(2^{-k/2+1}\left\lVert \sqrt{p} - \sqrt{q} \right\rVert_2 \right) \geq 2^{-k/2}\lVert p-q\rVert_1, 
    \end{equation}
where $\left\lVert \sqrt{p} - \sqrt{q} \right\rVert_2$ is the unnormalized Hellinger distance between $p$ and $q$.
\end{lemma}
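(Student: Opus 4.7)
The plan is to chain two ingredients: the classical Cauchy-Schwarz bound relating the $L_1$ distance to the Hellinger distance, and a scalar linearization of $\sinh^{-1}$. Write $H := \lVert \sqrt{p} - \sqrt{q}\rVert_2$ for brevity. Since $p$ and $q$ are probability vectors we have $\lVert \sqrt{p}\rVert_2 = \lVert \sqrt{q}\rVert_2 = 1$ and $\langle \sqrt{p}, \sqrt{q}\rangle \leq 1$ by Cauchy-Schwarz, so $H^2 = 2 - 2\langle \sqrt{p}, \sqrt{q}\rangle \leq 2$ and $H \leq \sqrt{2}$. This a priori bound is crucial, since it will guarantee that the argument $2^{-k/2+1} H$ stays inside $[0, 2\sqrt{2}]$ for every $k \geq 0$.

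First I would establish the auxiliary inequality $\lVert p - q\rVert_1 \leq 2H$. Factoring $|p_i - q_i| = (\sqrt{p_i}+\sqrt{q_i})\,|\sqrt{p_i}-\sqrt{q_i}|$ and applying Cauchy-Schwarz coordinate-wise gives
\begin{equation*}
\lVert p - q\rVert_1 \;\leq\; \lVert \sqrt{p}+\sqrt{q}\rVert_2 \cdot \lVert \sqrt{p} - \sqrt{q}\rVert_2 \;\leq\; 2H,
\end{equation*}
since $\lVert \sqrt{p}+\sqrt{q}\rVert_2^2 = 2 + 2\langle \sqrt{p}, \sqrt{q}\rangle \leq 4$. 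Next I would prove the scalar estimate $\sinh^{-1}(x) \geq x/2$ on $[0, 2\sqrt{2}]$. Let $g(x) := \sinh^{-1}(x) - x/2$; then $g(0) = 0$ and $g'(x) = (1+x^2)^{-1/2} - 1/2$ is positive on $[0, \sqrt{3})$ and negative on $(\sqrt{3}, \infty)$, so $g$ is unimodal on $[0,\infty)$, rising from $0$ and then falling. Hence nonnegativity on $[0, 2\sqrt{2}]$ reduces to the single endpoint check $g(2\sqrt{2}) \geq 0$, i.e.\ $\sinh(\sqrt{2}) \leq 2\sqrt{2}$; this follows from the elementary bound $\sinh(\sqrt{2}) \leq \tfrac{1}{2}e^{\sqrt{2}} \leq \tfrac{1}{2}e^{3/2} < 2.25 < 2\sqrt{2}$.

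Combining the two bounds is then immediate. Applying the scalar inequality with $x = 2^{-k/2+1} H \in [0, 2\sqrt{2}]$ and multiplying by $2$ yields
\begin{equation*}
2\sinh^{-1}\!\bigl(2^{-k/2+1} H\bigr) \;\geq\; 2^{-k/2+1} H \;=\; 2^{-k/2}\cdot 2H \;\geq\; 2^{-k/2}\,\lVert p-q\rVert_1,
\end{equation*}
which is the claim. The only nonroutine step is the scalar estimate: since $\sinh^{-1}$ grows logarithmically, $\sinh^{-1}(x) \geq x/2$ cannot hold globally, so one must verify the interval of validity. The main obstacle is therefore the bookkeeping that ensures the factor $2^{-k/2+1}$ combined with the probability bound $H \leq \sqrt{2}$ keeps the argument within the regime where the linearization and the factor-of-two $L_1$/Hellinger inequality align exactly.
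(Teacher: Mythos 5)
Your proof is correct, and it is worth noting that the paper itself never proves this lemma: it is stated bare and invoked in the proof of Prop.~A.1 alongside a citation to Leeb--Coifman, so your argument supplies a self-contained derivation where the paper relies on external results. Your route is the natural one: the Le Cam--type bound $\lVert p-q\rVert_1 \leq 2\lVert\sqrt{p}-\sqrt{q}\rVert_2$ via the factorization $|p_i-q_i|=(\sqrt{p_i}+\sqrt{q_i})|\sqrt{p_i}-\sqrt{q_i}|$ and Cauchy--Schwarz, combined with the scalar estimate $\sinh^{-1}(x)\geq x/2$ on $[0,2\sqrt{2}]$, which you correctly reduce (by unimodality of $g(x)=\sinh^{-1}(x)-x/2$) to the single endpoint check $\sinh(\sqrt{2})\leq 2\sqrt{2}$; the bookkeeping with $2^{-k/2+1}$ and $H\leq\sqrt{2}$ then closes the argument, and the worst case $k=0$ is exactly where the interval $[0,2\sqrt{2}]$ is needed. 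One small repair: your justification of $H\leq\sqrt{2}$ cites Cauchy--Schwarz ($\langle\sqrt{p},\sqrt{q}\rangle\leq 1$), but that inequality points the wrong way for this purpose; since $H^2 = 2 - 2\langle\sqrt{p},\sqrt{q}\rangle$, what you need is $\langle\sqrt{p},\sqrt{q}\rangle\geq 0$, which holds trivially because all entries of $\sqrt{p}$ and $\sqrt{q}$ are nonnegative. (Cauchy--Schwarz is correctly used, by contrast, in bounding $\lVert\sqrt{p}+\sqrt{q}\rVert_2\leq 2$.) With that one-line fix the proof is complete and elementary.
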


\begin{lemma}\label{lemma:decoded_tree_bound}
    Fix a norm $\|\cdot\|_{\mathcal R}$ on $\mathbb{R}^m$ and a data cloud 
    $\mathcal X=\{\mathbf x_i\}_{i=1}^n\subset\mathbb{R}^m$.
    Let $d_B$ denote the tree metric output by Alg.~\ref{alg:decoded_tree}.  
    Then there exist positive constants 
    $\widetilde{\varrho}_{\mathcal R}',\;\varrho_{\mathcal R}'$, depending only on 
    $\|\cdot\|_{\mathcal R}$ and the kernel bandwidth, such that
    \[
        \widetilde{\varrho}_{\mathcal R}'\;\le\;
        d_B(i,i')\;\le\;
        \varrho_{\mathcal R}',
        \qquad\forall\, i\neq i'.
    \]
\end{lemma}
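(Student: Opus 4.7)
The plan is to bound $d_B(i,i')$ by relating it to the embedding distance $d_{\mathcal{M}}(i,i')$ already controlled in Proposition~\ref{prop:hdd_bound}. Inspecting Alg.~\ref{alg:decoded_tree}, each edge of $B$ is inserted together with a weight given by the geodesic distance on the product manifold $\mathcal{M}$ between the representatives of the two merged sub-trees; consequently, for any leaves $i \neq i'$, the value $d_B(i,i')$ is a finite sum of such multi-scale hyperbolic geodesic lengths along the unique path in $B$ connecting $i$ and $i'$. This reduction is the key step, because it turns the task into bounding each edge weight by Proposition~\ref{prop:hdd_bound} and then controlling the length of the path.

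For the upper bound, I would observe that every edge weight along the path is of the form $d_{\mathcal{M}}(\cdot,\cdot)$ applied to a pair of embedded centroids, each of which is a convex combination of leaf embeddings; the same heat-kernel bounds used in the proof of Proposition~\ref{prop:hdd_bound} apply verbatim to these centroids, yielding edge weights $\le \varrho_{\mathcal R}$. Since $B$ is a binary tree with $m$ leaves, the path between any two leaves uses at most $2(m-1)$ edges. Setting $\varrho'_{\mathcal R} := 2(m-1)\varrho_{\mathcal R}$ gives the upper estimate.

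For the lower bound, the path contains at least one edge, and the same argument applied to the lower heat-kernel bound in Proposition~\ref{prop:hdd_bound} shows that the weight of any edge connecting two distinct merged blocks is bounded below by a positive constant depending only on $\|\cdot\|_{\mathcal R}$ and $\epsilon$. Taking this constant to be $\widetilde{\varrho}'_{\mathcal R}$ gives the lower estimate, concluding the proof. An alternative, cleaner route would be to chain the bilipschitz equivalences $d_B \simeq d_T$ (Thm.~\ref{thm:bilipschitz_tree_metric}) and $d_{\mathcal{M}} \simeq d_T$ (Thm.~\ref{thm:HDD}) to obtain $d_B \simeq d_{\mathcal{M}}$ directly and then apply Proposition~\ref{prop:hdd_bound}.

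The main obstacle will be the lower bound: while Proposition~\ref{prop:hdd_bound} gives separation between leaf embeddings, its argument must be extended to the internal Riemannian-mean representatives produced by Alg.~\ref{alg:decoded_tree}. Concretely, I need to show that merging cannot collapse two distinct sub-tree representatives to the same point in $\mathcal M$, which amounts to verifying that the Hadamard square root of any convex combination of diffusion densities still preserves the Hellinger separation guaranteed at the leaves. Alternatively, the bilipschitz route sidesteps this issue but requires verifying that the hypotheses of Thm.~\ref{thm:HDD} and Thm.~\ref{thm:bilipschitz_tree_metric} (in particular, a sufficiently large scale $K$ and the existence of a latent tree $T$) are legitimately available in the current setting rather than presupposed.
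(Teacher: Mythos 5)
The paper's argument is exactly the ``alternative, cleaner route'' you relegate to a closing remark: it chains Thm.~\ref{thm:HDD} ($d_{\mathcal M}\simeq d_T$) with Thm.~\ref{thm:bilipschitz_tree_metric} ($d_B\simeq d_T$) to conclude $d_B\simeq d_{\mathcal M}$, and then transfers the explicit bounds $\widetilde{\varrho}_{\mathcal R}\le d_{\mathcal M}\le\varrho_{\mathcal R}$ of Prop.~\ref{prop:hdd_bound} through the bi-Lipschitz chain. Your primary route --- decomposing $d_B(i,i')$ into the edge weights along the leaf-to-leaf path and bounding each edge by the heat-kernel estimates --- is genuinely different and more self-contained: it does not presuppose a latent tree $T$ or the ``sufficiently large $K$ and $m$'' hypotheses that the paper silently imports. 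However, as you yourself concede, your route is not closed: the lower bound hinges on showing that the Riemannian-mean representatives of merged subtrees remain uniformly separated in $\mathcal M$, and your claim that the bounds of Prop.~\ref{prop:hdd_bound} apply ``verbatim'' to these centroids does not hold as stated --- those estimates are derived for the diffusion densities $\mathbf P^t(i,:)$ at actual data points, and the Hadamard square root of a convex combination of densities is not itself such a density, so the Hellinger separation must be re-derived. Two smaller points: your upper constant $\varrho'_{\mathcal R}=2(m-1)\varrho_{\mathcal R}$ carries a factor of the path length and hence of the number of leaves, which strictly contradicts the lemma's claim that the constants depend only on $\|\cdot\|_{\mathcal R}$ and the bandwidth (though the paper's own constants in Prop.~\ref{prop:hdd_bound} already depend on $n$, so this matches the paper's level of rigor rather than falling below it); and the tree here has $n$ leaves indexed by the samples, so the path-length bound should be stated in terms of $n$, not $m$. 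In short: your fallback route is the paper's proof and is acceptable modulo the same asymptotic caveats the paper accepts; your primary route would be a stronger, more explicit argument, but only once the internal-node separation lemma is actually proved rather than flagged.
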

    Thm.~\ref{thm:HDD} shows that the embedding distance in the hyperbolic spaces
    $d_{\mathcal M}$ is bi-Lipschitz equivalent to the intrinsic tree
    distance $d_T$ constructed from it:
    $d_T \simeq d_{\mathcal M}$.
    Thm.~\ref{thm:bilipschitz_tree_metric} in turn relates the decoded
    tree metric $d_B$ produced by Alg.~\ref{alg:decoded_tree} to
    $d_T$, so that $d_B \simeq d_T$.
    Combining these relations gives $d_B \simeq d_{\mathcal M}$.
    Prop.~\ref{prop:hdd_bound} provides explicit upper and lower
    bounds on $d_{\mathcal M}$ expressed through  
    $\widetilde{\varrho}_{\mathcal R}$ and $\varrho_{\mathcal R}$.
    Transferring those bounds through the bi-Lipschitz chain yields the
    stated inequalities for $d_B$.

\subsection{Proof of Theorem~\ref{thm:convergence_update_X}}
\paragraph{Theorem \ref{thm:convergence_update_X}.}
\textit{The sequences $\widehat{\mathbf{W}}_r^{(l)}$ and $\widehat{\mathbf{W}}_c^{(l)}$ generated by Alg.~\ref {alg:wavelet_co-HD} have at least one limit point, and all limit points are fixed points if $\gamma_r, \gamma_c >0$.
}
\begin{proof}

The discrete Haar transform is an orthonormal linear operator $H$.
At each step the algorithm keeps only a subset of Haar coefficients chosen by an
$L_1$-based rule and applies soft-thresholding with thresholds $\vartheta_c$ and $\vartheta_r$.
Soft-thresholding $S_\vartheta(\cdot)$ is $1$-Lipschitz and non-expansive.
Therefore each filter
\begin{equation*}
    \Psi(\cdot;\mathcal{T}(\widehat{\mathbf{W}}_c^{(l)})) = H^{-1} \circ S_{\vartheta_c} \circ P^{(l)} \circ H, 
\end{equation*}
is a composition of operators whose spectral norms do not exceed $1$, where $P^{(l)}$ is the projection (or masking) operator acting on the vector of Haar coefficients at the $l$-th iteration.
It is analogously for the feature mode.
Therefore, for every $l$, 
\begin{equation*}
    \norm{\Psi(\bX^{(l)};\mathcal{T}(\widehat{\mathbf{W}}_c^{(l)}))}_F \leq \norm{\bX^{(l)}}_F\, \forall \; l.
\end{equation*}
Because $\bX^{(0)}$ and $\bZ^{(0)}$ are finite, all subsequent filtered data
and, through the convex updates that solve the regularized least-squares sub-problems,
all matrices $\widehat{\mathbf W}_r^{(l)}$, $\widehat{\mathbf W}_c^{(l)}$
remain in a ball of finite radius, which respectively form sets that is convex, closed, and bounded, therefore compact.

Because each of $\Psi(\bX^{(l)};\mathcal{T}(\widehat{\mathbf{W}}_c^{(l)}))$ and $\Psi(\bZ^{(l)};\mathcal{T}(\widehat{\mathbf{W}}_r^{(l)})) $ is firmly non-expansive, their composition is non-expansive, and it yields asymptotic regularity \citep{bauschke2017correction}.
Together with boundedness, the asymptotic regularity activates Opial’s lemma, which guarantees that every cluster point of the sequence is a fixed point. 
The sequence $\widehat{\mathbf{W}}_r^{(l)}$ and $\widehat{\mathbf{W}}_c^{(l)}$ lies in the compact set.
By Bolzano–Weierstrass it has at least one convergent sequence.

\end{proof}

\section{Incorporating learned hierarchical representation within HGCNs}\label{app:hgcn}

We introduce an iterative learning scheme for joint hierarchical representation learning based on TWD, where the initial pairwise distance matrices $\mathbf{M}_r$ and $\mathbf{M}_c$ are typically computed using standard data-driven metrics, such as Euclidean distance or the distance based on cosine similarity.
A key advantage of our approach is its ability to incorporate prior structural knowledge when a hierarchical structure is available for one of the modes. 
In such cases, the distance metric of the known hierarchy can be used to initialize the corresponding distance matrix, allowing the iterative scheme to refine the structure of the other mode.
We demonstrate the applicability of this formulation to hierarchical graph data and show how it can be integrated into HGCNs \citep{chami2019hyperbolic, dai2021hyperbolic}.

In hierarchical graph data, the input is modeled as a hierarchical graph $H = ([n], E, \mathbf{A})$, where $[n] = \{1, \ldots, n\}$ represents the nodes, and each node is associated with an $m$-dimensional attribute vector. 
We denote the collection of node features as a data matrix $\mathbf{X} \in \mathbb{R}^{n \times m}$, consistent with the high-dimensional data matrix setting in our framework. 
In hierarchical graph data, the node hierarchy (corresponding to the sample hierarchy in the data matrix) is known, while the feature structure remains implicit.
Recent works have proposed developing graph convolutional networks in hyperbolic space \citep{chami2019hyperbolic, liu2019hyperbolic, dai2021hyperbolic, zhang2021hyperbolic}, where node features are projected from Euclidean to hyperbolic space.

We show that the hierarchical representations learned by our method can be used as a preprocessing step for HGCNs. Specifically, we first derive multi-scale hyperbolic embeddings from the data matrix $\mathbf{X}$ and use them as hyperbolic node features. 
During the neighborhood aggregation step, we replace the predefined hierarchical graph with the sample tree obtained from our iterative learning scheme after convergence for neighborhood inference. 
It enables hierarchy-aware message passing, where the learned tree reflects structured relations among samples and features in a data-driven manner.
We provide technical details of this integration below.

\subsection{Hyperbolic feature transformation}
It is important to note that the used tree construction \cite{lin2025tree} decodes a binary tree from multi-scale hyperbolic embeddings (see App.~\ref{app:more_background}). As a result, after convergence, the produced hyperbolic embeddings can be directly used as initial hyperbolic node representations in HGCNs.
For each node $i$, the multi-scale hyperbolic embedding is denoted by 
\begin{equation}
    [(\mathbf{y}_i^0)^\top, \ldots, (\mathbf{y}_i^K)^\top]^\top,
\end{equation}
where each $\mathbf{y}_i^k$ lies in a Poincaré half-space $\mathbb{H}^{m+1}$, forming a point in the product manifold of hyperbolic spaces $\mathbb{H}^{m+1} \times \ldots \times \mathbb{H}^{m+1}$.
This representation serves as a hyperbolic transformation of the original Euclidean node features and provides a geometry-aware initialization for downstream hyperbolic graph learning tasks.

The Poincaré half-space $\mathbb{H}^{m+1}$ is advantageous for representing the exponentially increasing scale of diffusion densities in hyperbolic space \citep{lin2023hyperbolic}.
However, due to the computational advantages of performing Riemannian operations in the Lorentz model $\mathbb{L}^m$, and the known equivalence between the Poincaré and Lorentz models with diffeomorphism \citep{ratcliffe1994foundations, lin2024hyperbolic}, we adopt the Lorentz model for feature transformation  \citep{chami2019hyperbolic, dai2021hyperbolic}.
The hyperbolic feature transformation is performed by first transforming points from the Poincaré half-space to the Lorentz model, followed by applying the Lorentz linear transformation \citep{dai2021hyperbolic} and mapping back to the Poincaré half-space.

To enable efficient hyperbolic feature transformation, we first map each point $\mathbf{y}_i^k$ from the Poincaré half-space to the Lorentz model using 
\begin{equation}
 \widehat{\mathbf{y}}_i^k=\mathcal{P}(\mathbf{y}_i^k) = \frac{ ( 1+ \lVert \widetilde{\mathbf{y}}_i^k \rVert^2, 2\widetilde{\mathbf{y}}_i^k(1), \ldots, 2\widetilde{\mathbf{y}}_i^k(n+1) )}{1 - \lVert \widetilde{\mathbf{y}}_i^k \rVert^2},
\end{equation}
where 
\begin{equation*}
    \widetilde{\mathbf{y}}_i^k =  \frac{(2\mathbf{y}_i^k(1), \ldots, 2\mathbf{y}_i^k(n), \lVert \mathbf{y}_i^k \rVert^2 -1 )}{\lVert \mathbf{y}_i^k \rVert^2 + 2\mathbf{y}_i^k(n+1) + 1}.
\end{equation*}
The hyperbolic feature transformation is then applied via a Lorentz linear map \citep{dai2021hyperbolic}
\begin{equation}
     \mathbf{h}_i^k= \mathbf{J}\widehat{\mathbf{y}}_i^k \ \text{ s.t. } \mathbf{J} = \begin{bmatrix}
          1, &\mathbf{0}^\top\\
         \mathbf{0}, & \widetilde{\mathbf{J}}
     \end{bmatrix} \text{ and }  \widetilde{\mathbf{J}}^\top  \widetilde{\mathbf{J}} = \mathbf{I}.
\end{equation}
This transformation preserves the hyperbolic geometry (i.e., manifold-preserving) and corresponds to a linear feature map in hyperbolic space. After the transformation, the features $\mathbf{h}_i^k$ are projected back to the Poincaré half-space, completing the integration of our learned embeddings into HGCNs.

\subsection{Hyperbolic neighborhood aggregation}

Since the hyperbolic feature transformation is manifold-preserving, we use the neighborhood aggregation in hyperbolic space as a weighted Riemannian mean of neighboring points, followed by a non-linear activation function. Specifically, during the aggregation step in HGCNs, we replace the predefined hierarchical graph $H$ with the sample tree inferred by our method after convergence for neighborhood inference. This learned tree serves as a hierarchy-aware structure that guides message passing, capturing data-driven relations among samples and across features.

After mapping the transformed features back to the Poincaré half-space, the hyperbolic neighborhood aggregation for node $i$ at layer $\ell$ and scale $k$ is given by
\begin{equation}
    \widehat{\mathbf{h}}_i^{(\ell), k} = \sigma\left(\sum_{s\in[[i]]}\left(w_{is}\mathbf{h}_s^{(\ell), k}\right)\right),
\end{equation}
where $[[i]]$ denotes the neighborhood of node $i$, $w_{is}$ are aggregation weights, and $\sigma$ is a non-linear activation function.
In contrast to the Lorentz model formulation \citep{chami2019hyperbolic, dai2021hyperbolic}, applying non-linear activations in the Poincaré half-space does not violate manifold constraints, as the operations remain in the same space. Notably, the weighted sum within the activation function corresponds to a weighted Riemannian mean in the Poincaré half-space model.

The integrated hyperbolic feature transformation and aggregation are conducted directly on the manifold \citep{dai2021hyperbolic}.
This contrasts with the earlier method \citep{chami2019hyperbolic}, which relies on projecting points to the tangent space for linear operations and then mapping back. By operating entirely within the hyperbolic manifold, our integrated method maintains geometric fidelity throughout the learning process in HGCNs.

\section{Details on experimental study}\label{app:more_exp_details}
We provide a detailed description of the experimental setups and additional implementation details for the experimental study in Sec.~\ref{sec:exp}. 
The experiments are conducted on NVIDIA DGX A100. 

\begin{table}[t]
\caption{Summary of dataset statistics for word-document, single-cell RNA-sequencing (scRNA-seq), and hierarchical graph benchmarks.
}
\begin{center}
\resizebox{0.75\textwidth}{!}{%
\begin{tabular}{lcccccccr}
\toprule
& Dataset & $\#$ Samples / $\#$ Nodes  & $\#$ Classes & $\#$ Edges &  $\#$ Features  \\
\midrule
& BBCSPORT & 517 & 5 &  - & 13,243 BOW \\
& TWITTER &  2,176 & 3 & - &  6,344 BOW \\
& CLASSIC & 4,965  &  4 &  - & 24,277 BOW\\
& AMAZON &  5,600 &  4 &  - &  42,063 BOW\\
\midrule
& ZEISEL & 3,005  &  47 &  - &  4,000 Genes\\
& CBMC & 8,617  &  56 &  - &  500  Genes\\
\midrule
& DISEASE & 1,044 & 2 & 1,043 & 1,000  \\
& AIRPORT & 3,188 & 4 & 18,631 & 4  \\
& PUBMED & 19,717 & 3 & 88,651 & 500\\
& CORA & 2,708 & 7 & 5,429 & 1,433 \\
\bottomrule
\end{tabular}}
\end{center}
\label{tab:dataset}
\end{table}
\subsection{Datasets}\label{app:datasets}
We evaluate our methods on word-document data, single-cell RNA-sequencing (scRNA-seq data), and hierarchical graph datasets.
We report the details of the dataset statistics in Tab.~\ref{tab:dataset}.

\paragraph{Word-document benchmarks.} 
We evaluate our method on four standard word-document benchmarks commonly used in the word mover’s distance \citep{kusner2015word} and TWD literature \citep{yamada2022approximating, le2019tree, chen2024learning, lin2025tree}: 
(i) the BBCSPORT dataset, consisting of 13,243 bags of words (BOW) and 517 articles categorized into five sports types, 
(ii) the TWITTER dataset, comprising 6,344 BOW and 2,176 tweets classified into three types of sentiment,
(iii) the CLASSIC dataset, including 24,277 BOW and 4,965 academic papers from four publishers, and
(iv) the AMAZON dataset, containing 42,063 BOW and 5,600 reviews of four products.
The pre-trained Word2Vec embeddings  \cite{mikolov2013distributed} are considered as the word embedding vector, trained on the Google News dataset, which includes approximately 3 million words and phrases. Word2Vec represents these words and phrases as vectors in $\mathbb{R}^{300}$. 
The document types serve as labels for classification tasks in Sec.~\ref{sec:exp_metric_learning}.

\paragraph{Single-cell RNA-sequencing data. } 
Two scRNA-seq datasets \citep{dumitrascu2021optimal} are considered:(i) the ZEISEL dataset: From the mouse cortex and hippocampus \citep{zeisel2015cell}, comprising 4,000 gene markers and 3,005 single cells, and (ii) the CBMC dataset: from a cord blood mononuclear cell study \citep{stoeckius2017simultaneous}, consisting of 500 gene markers and 8,617 single cells.
We used the divisive biclustering method \citep{zeisel2015cell} to obtain 47 classes for Zeisel and 56 classes for the CBMC.
The Gene2Vec \citep{du2019gene2vec} is used as the gene embedding vectors \citep{bellazzi2021gene}. 
Cell types are used as classification labels in the classification experiments presented in Sec.~\ref{sec:exp_metric_learning}.

\paragraph{Hierarchical graph datasets. }
For hierarchical graph data, we consider the following datasets:
(i) the DISEASE dataset: constructed by simulating the SIR disease spreading model \citep{anderson1991infectious}, where node labels indicate infection status and node features indicate susceptibility to the disease \citep{chami2019hyperbolic}, (ii) the AIRPORT dataset: a flight network dataset where nodes represent airports and edges represent airline routes. The label of a node indicates the population of the country where the airport is located \citep{chami2019hyperbolic}, (iii) the CORA dataset: a citation network containing 2,708 nodes, 5,429 edges, and 1,433 features per node, with papers classified into seven machine learning categories \citep{sen2008collective}, and (iv) the PUBMED dataset: Another citation network with 19,717 nodes, 44,338 edges, and 500 features per node, encompassing three classes of medicine publications \citep{sen2008collective}.
The node labels are used for node classification experiments in Sec.~\ref{sec:nc_lp_tasks}.

\subsection{Hyperparameters settings}

We describe the components required to initialize and configure our method, including the choice of initial distance metrics, the hyperparameter configuration, and the norm regularization.

\paragraph{Initial distance metric. } 
The initial pairwise distance matrices $\mathbf{M}_r \in \mathbb{R}^{n \times n}$ and $\mathbf{M}_c \in \mathbb{R}^{m \times m}$ are typically data-driven and constructed from the observations using standard similarity measures such as Euclidean or cosine distances \citep{lindenbaum2020gaussian}.
These initial distances are then used to define a Gaussian kernel for the diffusion operator, as detailed in App.~\ref{app:more_background}.
For word-document and scRNA-seq datasets, the initial distance matrices, $\mathbf{M}_c \in \mathbb{R}^{m \times m}$ (feature distances) and $\mathbf{M}_r \in \mathbb{R}^{n \times n}$ (sample distances), are computed using cosine similarity in the ambient space, following prior work \citep{jaskowiak2014selection, kenter2015short}.
Empirically, cosine-based distances yield more stable and reliable tree representations compared to using Euclidean distances as initial distance, which were found to be less robust and led to decreased performance in our experiments.
While different initial distance metrics lead to different tree representations, we observe that the resulting trees from Alg.~\ref{alg:Co_HD} and Alg.~\ref{alg:wavelet_co-HD} converge to unique representations when the regularization parameters $\gamma_r$ and $\gamma_c$ are sufficiently large (see App.~\ref{app:more_exp_results}).
However, we emphasize that such large regularization does not necessarily produce the most meaningful hierarchical representations. When $\gamma_r$ and $\gamma_c$ are too large, the regularization terms dominate the distance computation in the iterative learning scheme, leading to trees that may lack meaningful hierarchical structure.

\paragraph{Hyperparameter configuration. } 
We explore a broad range of hyperparameters to accommodate variations across tasks.
For hyperparameter optimization, we conduct a grid search for each dataset using Optuna \citep{akiba2019optuna}.
The Gaussian kernel scale is selected from the set $\{0.1, 1, 2, 5, 10\} \times \chi$, where $\chi$ denotes the median of the pairwise distances \citep{zelnik2004self, ding2020impact}.
The number of hyperbolic components in the product manifold is set with the range $K \in \{0, 1, \ldots, 19\}$.
The regularization parameters $\gamma_r$ and $\gamma_c$ are chosen from the set
$\{10^{-3}, 5\times 10^{-3},  10^{-2}, 5 \times 10^{-2}, 10^{-1} , 5 \times 10^{-1}, 1, 5, 10^{1}, 5 \times 10^{1},  10^{2} \}$.
The thresholds $\vartheta_c$ and $\vartheta_r$ in the filtering step are set as follows.
At the first iteration ($l = 0$), the total $L_1$ norms of the Haar coefficients across all samples and all features are computed, denoted by $\eta_c$ and $\eta_r$, respectively.
Thresholds $\vartheta_c$ and $\vartheta_r$ are then respectively selected from the sets $\{0.1 \eta_c, 0.2 \eta_c, \ldots, 0.9 \eta_c\}$ and $\{0.1 \eta_r, 0.2 \eta_r, \ldots, 0.9 \eta_r\}$.
As the optimal hyperparameter configuration varies across tasks and there is no universal choice that works for all cases, Optuna \citep{akiba2019optuna} is used to systematically explore the parameter space and efficiently identify task-specific configurations.

\paragraph{Norm regularizer. } 
We adopt a norm regularizer based on the snowflake penalty \citep{mishne2019co}, given by 
\begin{equation}
    \zeta(\mathbf{r}_i - \mathbf{r}_{i'})= \frac{1}{2}\int_0^{\norm{\mathbf{r}_{i} - \mathbf{r}_{i'}}_2} \frac{1}{\sqrt{\xi} + 10^{-6}} d\xi.
\end{equation}
This smoothness-promoting regularizer increases monotonically over $[0, \infty)$ and imposes stronger penalties on small differences, thereby encouraging local regularity in the representations.
While alternative regularization strategies, such as entropic regularization \citep{cuturi2013sinkhorn}, could be considered, we demonstrate in App.~\ref{app:more_exp_results} that they are less effective.

\subsection{Scaling to large datasets}
In the tree construction \citep{lin2025tree}, one key step involves building diffusion operators \citep{coifman2006diffusion} between both samples and features. Although such construction can be computationally intensive, recent developments in diffusion geometry literature have introduced various techniques to significantly reduce the runtime and space complexity of this process. In particular, one can use the diffusion landmark approach \citep{shen2022scalability}, which enhances scalability by reducing the complexity from $O(n^3)$ to $O(n^{1+2c})$ for samples and from $O(m^3)$ to $O(m^{1+2c})$ for features, where $c<1$ represents the proportional size of the landmark set relative to the original dataset. We briefly describe the diffusion landmark approach below for the setup of the sample diffusion operator and note that it can be seamlessly applied to the feature diffusing operator. 

Given a set of samples  $\mathcal{X} = \{ \mathbf{x}_i\}_{i=1}^n \subseteq \mathbb{R}^m$ in an ambient space $\mathbb{R}^m$, let $\mathcal{X}'= \{ \mathbf{x}_i'\}_{i=1}^{i'} \subset \mathcal{X} \subseteq \mathbb{R}^m$ be the landmark set of $\mathcal{A}$, where $c = \log_n(n')<1$. 
The affinity matrix between the landmark set $\mathcal{X}'$ and the original set $\mathcal{X}$ is denoted by $\widehat{\mathbf{K}}= \exp(-\widehat{\mathbf{M}}^{\circ 2}/\epsilon)$, where $\widehat{\mathbf{M}}(i,i')$ represents a suitable distance between the sample  $\mathbf{x}_i\in\mathcal{X}$ and the sample $\mathbf{x}_{i}'\in\mathcal{X}'$, and $\epsilon>0$ is the scale parameter.
Let $\widehat{\mathbf{D}}$ be a diagonal matrix, where $\widehat{\mathbf{D}}(i, i) = \bm{\delta}_i^\top\widehat{\mathbf{K}}\widehat{\mathbf{K}}^\top \mathbf{1}_n$ and $\mathbf{1}_n = [1, \ldots, 1]\in\mathbb{R}^n$.
The landmark-affinity matrix $\widehat{\mathbf{Y}} = \widehat{\mathbf{K}}\widehat{\mathbf{K}}^\top \in\mathbb{R}^{n \times n}$ has an eigen-structure similar to the diffusion operator, which can be computed by applying SVD to the matrix $\widehat{\mathbf{D}}^{-1/2}\widehat{\mathbf{K}} =\widehat{\mathbf{U}}\widehat{\mathbf{\Lambda}}\widehat{\mathbf{V}}$.
To construct the diffusion operator on $\mathcal{X}$, one can use the landmark set $\mathcal{X}'$ and its eigenvectors.

In our method, for large datasets, we fix the landmark set size to $c = 0.1$, which reduces the computational complexity of tree construction at each iteration to $O(n^{1.2})$ for samples and $O(m^{1.2})$ for features. Additionally, the computation of the TWD is linear, resulting in an overall per-iteration complexity of $O(n^{1.2} + m^{1.2})$.
In contrast, a naive approach without diffusion landmarks leads to significantly higher complexity: $O(mn^3)$ for tree construction and $O(n^3 \log n)$ and $O(m^3 \log m)$ for computing Wasserstein distances over rows and columns, respectively. The total per-iteration cost becomes $O(mn^3 + nm^3 + n^3 \log n + m^3 \log m)$.
As shown in Sec.~\ref{sec:exp}, our method scales efficiently and can handle datasets with thousands of samples and features.

\subsection{Experimental setup}
We describe the experimental setups for sparse approximation to evaluate the hierarchical representations, for document and single-cell classification using the learned TWD, and for link prediction and node classification on hierarchical graph data.

\paragraph{Sparse approximation setup.} 
We assess the learned hierarchical representations using a sparse approximation criterion. Specifically, we compute the  $L_1$ norm of the Haar coefficients across all samples and features.
Let $\widetilde{\mathbf{B}}_c$ and $\widetilde{\mathbf{B}}_r$ denote the Haar bases induced by the feature tree and sample tree after convergence, respectively. We measure sparsity by computing:
\begin{equation}
\frac{1}{n} \sum_i \sum_j \left|(\bX_{i,:} \widetilde{\mathbf{B}}_c)^\top(j)\right|, \quad
\frac{1}{m} \sum_j \sum_i \left|(\bZ_{j,:} \widetilde{\mathbf{B}}_r)^\top(i)\right|.
\end{equation}
These quantities reflect the average sparsity of the transformed data in the respective Haar bases.
A lower $L_1$ norm indicates a more compact representation, where the signal is concentrated on fewer significant coefficients. This suggests that the learned tree structures more meaningfully represent the hierarchical organization of the data \citep{gavish2010multiscale, gavish2012sampling}.

\paragraph{Document and single-cell classification setup.} 
For document and single-cell classification tasks  (i.e., sample classification), we use the sample TWD matrix to perform classification. The dataset is split by partitioning the TWD matrix into 70\% training samples and 30\% testing samples.
We apply a $k$-nearest neighbors classifier with $k \in \{1, 3, \ldots,  19\}$. The random split is repeated five times \citep{kusner2015word, huang2016supervised}, and we report the best average classification accuracy across these runs.

\paragraph{Link prediction and node classification setup.}
We follow experimental setups \citep{chami2019hyperbolic, dai2021hyperbolic} to maintain consistency.
For the link prediction task,  the set of edges is split into 85\% for training, 5\% for validation, and 10\% for testing. A Fermi-Dirac decoder \citep{krioukov2010hyperbolic, nickel2017poincare} is used to compute probability scores for edges, and the model is trained using cross-entropy loss with negative sampling. 
The link prediction performance is assessed by the area under the ROC curve (AUC).
For the node classification task, dataset-specific splits are applied \cite{chami2019hyperbolic, dai2021hyperbolic}: 70\%/15\%/15\% for AIRPORT, 30\%/10\%/60\% for DISEASE, and the standard setup of 20 training examples per class for CORA and PUBMED. 
The node classification is performed using a centroid-based method \citep{liu2019hyperbolic}, where each class is associated with a prototype and predictions are made using a softmax classifier trained with cross-entropy loss. Additionally, an LP-based regularization objective is incorporated during training \citep{chami2019hyperbolic, dai2021hyperbolic}. 
The NC task is evaluated using the F1 score for binary-class datasets and accuracy for multi-class datasets:  F1 score for DISEASE, and accuracy for AIRPORT, CORA, and PUBMED.

\section{Additional experimental results}\label{app:more_exp_results}
We present additional experimental results supporting our method, including a toy problem, ablation studies, empirical convergence, runtime analysis, and co-clustering performance.
\subsection{Toy example: synthetic video recommendation system}
\label{app:toy_example}
\begin{figure*}[h]
    \centering
     \includegraphics[width=1\textwidth]{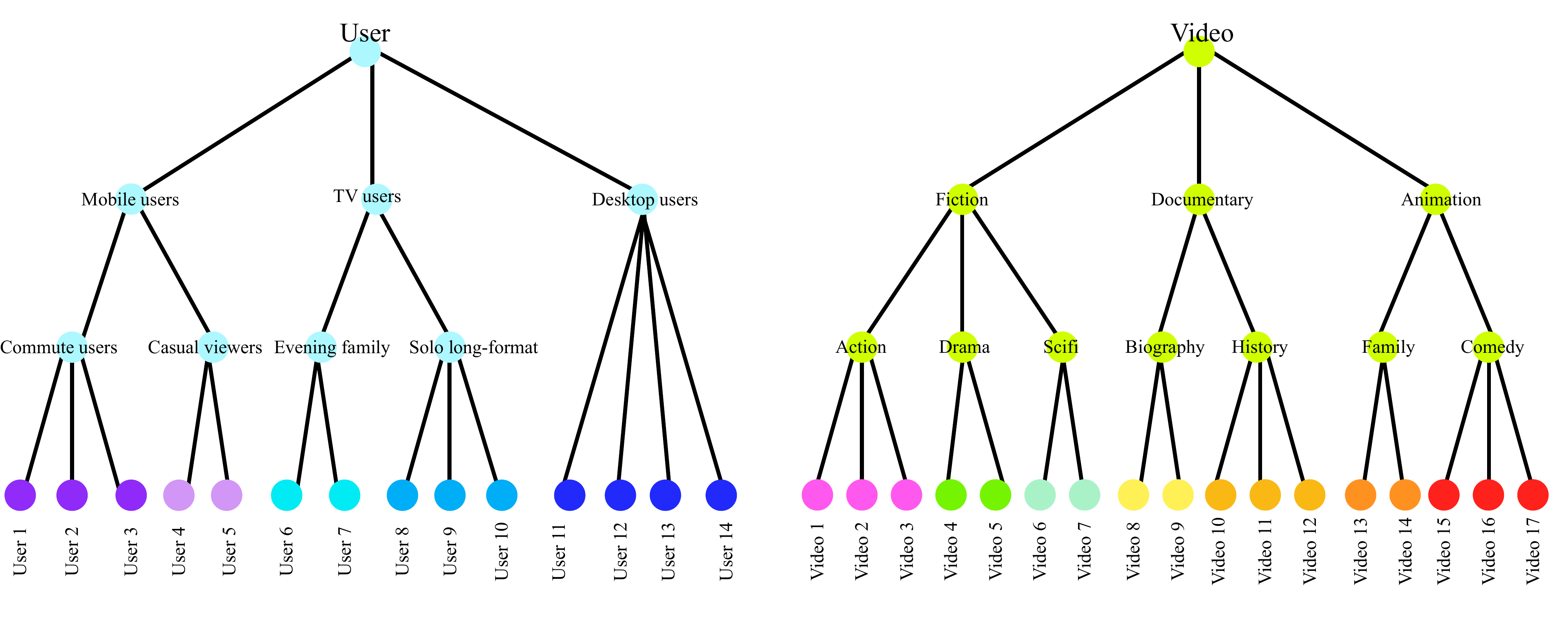}
    \caption{Hierarchical structure used in the toy video recommendation example. The right tree depicts the user hierarchy based on device type and viewing context. 
    The left tree represents the feature hierarchy of videos, branching by genre and subgenre (e.g., fiction → action, drama, sci-fi). 
    Users and videos are colored according to their first-level subcategory.
   }
   \label{fig:toy_tree}
\end{figure*}
We consider a toy example motivated by a video recommendation system, where samples correspond to users and features correspond to videos.
Both users and videos are assumed to follow hierarchical structures, and representations are generated through a probabilistic diffusion process over these trees.
The video hierarchy is based on content categories. The root node represents all videos and branches into three main types: fiction, documentary, and animation, as illustrated in Fig.~\ref{fig:toy_tree} (right). Each category further divides into subgenres. For example, fiction splits into action, drama, and sci-fi; documentary into biography and historical documentary; and animation into family and comedy.
We generate video embeddings $\{\mathbf{w}_j\}_{j=1}^{m} \subset \RR^{d}$ using a hierarchical probabilistic model. The root node embedding is drawn from a Gaussian prior, and embeddings for child nodes are recursively sampled from a Gaussian centered at their parent's embedding:
\begin{equation*}
    \mathbf{w}_{\text{root}} \sim \mathcal{N}(\mathbf{0}, \sigma_0^2\mathbf{I}), \mathbf{w}_c | \mathbf{w}_p \sim \mathcal{N} (\mathbf{w}_p, \sigma_c^2 \mathbf{I}).
\end{equation*}
This recursive diffusion continues until all leaf nodes (i.e., individual videos) receive their embeddings. As a result, videos belonging to the same (sub)category remain close in the space, reflecting their semantic proximity.
A similar process is used to generate user embeddings $\{\mathbf{s}_i\}_{i=1}^n \subset \mathbb{R}^{d}$, based on a hierarchical structure defined by user device and consumption context, shown in Fig.~\ref{fig:toy_tree} (left). The user embeddings are sampled as:
\begin{equation*}
    \mathbf{s}_{\text{root}} \sim \mathcal{N}(\mathbf{0}, \sigma_1^2\mathbf{I}), \mathbf{s}_c | \mathbf{s}_p \sim \mathcal{N} (\mathbf{s}_p, \sigma_r^2 \mathbf{I}).
\end{equation*}
Given user embeddings $\{\mathbf{s}_i\}_{i=1}^n$ and movie embeddings $\{\mathbf{w}_j\}_{j=1}^{m}$, the interaction matrix $\bY$ is defined based on their proximity:
\begin{equation*}
    \bY_{ij} = \norm{\mathbf{s}_i-\mathbf{w}_j}_2 + \epsilon_{ij}, \text{ where }  \epsilon_{ij} \sim \mathcal{N}(0, \sigma^2_{\text{noise}}).
\end{equation*}
Note that other functions of the distance can also be considered, e.g., 
elastic potential operator \citep{coifman2011harmonic}.
In our synthetic experiment, we set $\sigma_c = 1$, $\sigma_r = 0.6$, $\sigma_0= 0.5$, $\sigma_1 = 0.25$,   $\sigma_{\text{noise}}$, and $d = 30$. 
Finally, to simulate an unstructured observation, we apply random permutations \citep{mishne2017data, yair2017reconstruction} to the rows and columns of $\mathbf{Y}$ to obtain the observed data matrix $\mathbf{X}$.

\begin{figure*}[t]
    \centering
     \includegraphics[width=1\textwidth]{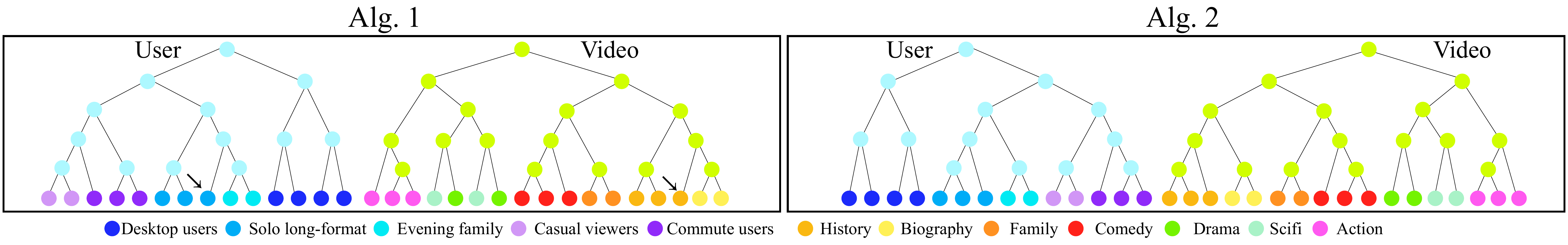}
    \caption{Learned tree representations for the toy video recommendation example using Alg.~\ref{alg:Co_HD} and Alg.~\ref{alg:wavelet_co-HD}.
    Users and videos are colored according to their first-level subcategory.
   }
   \label{fig:questionaire_toy}
\end{figure*}

Fig.~\ref{fig:questionaire_toy} presents the learned hierarchical representations for users and videos in the example of toy video recommendation. The trees are obtained by Alg.~\ref{alg:Co_HD} and Alg.~\ref{alg:wavelet_co-HD} after convergence.
Leaves are colored according to their first-level subcategory labels. 
The hierarchical representations learned by Alg.~\ref{alg:wavelet_co-HD} exhibit a clearer structure and a more distinct separation of subcategories, closely aligning with the ground-truth hierarchies shown in Fig.~\ref{fig:toy_tree}.
It highlights the benefit of incorporating wavelet filters into the joint hierarchical representation learning process. This visual evidence supports the effectiveness of filtering in learning meaningful hierarchical representations.
We remark that in the absence of noise, we observe that both algorithms produce comparable results.
In the presence of noise, the tree learned by Alg.~\ref{alg:Co_HD} deviates slightly from the ground-truth hierarchy, resulting in a less accurate representation (see arrow in Fig.~\ref{fig:questionaire_toy}). In contrast, Alg.~\ref{alg:wavelet_co-HD} effectively attenuates the noise, leading to a more accurate and coherent hierarchical structure, as shown in Fig.~\ref{fig:questionaire_toy}.

Fig.~\ref{fig:questionaire_toy_competing} shows the trees produced when alternative TWD constructions are integrated into our iterative learning scheme.
In contrast to the clear hierarchical structures produced by our approach, these trees fail to represent the hierarchical structures of the samples and the features.
This result further illustrates that when using alternative TWD methods in the iterative learning scheme, their trees are not designed for hierarchical representation learning and therefore produce disorganized structures that are not meaningful for visual comparison. 
In addition, we present quantitative results in Tab.~\ref{tab:toy_quantative} below, reporting the performance on sparse approximation and classification accuracy. 
The experimental setup follows that of Sec.~\ref{sec:exp}. For the classification tasks, we use the first-level subcategory labels as the ground truth labels.
Tab.~\ref{tab:toy_quantative}  shows that our method consistently outperforms baselines, validating its effectiveness in representing meaningful hierarchical structures.

\begin{table*}[h]
\caption{Quantitative results for the toy example. As in Sec.~\ref{sec:exp}, sparse approximation is measured by the $L_1$ norm of the Haar coefficient expansion, reported in the format $\mathrm{samples}$ / $\mathrm{features}$ (lowest values in bold, second-lowest values underlined). Classification accuracy for users and videos is reported in the same format, with the highest values in bold and the second-highest underlined.} 
\begin{center}
\resizebox{0.6\textwidth}{!}{%
\begin{tabular}{lccccccccccr}
\toprule   
   &  Sparse Approximation & Classification Accuracy  \\
\midrule
Co-Quadtree  & 9.2 / 9.9  & 90.1$\pm$0.8 / 88.4$\pm$0.4\\
Co-Flowtree  & 9.1 / 9.9  & 89.2$\pm$0.7 / 89.6$\pm$0.5\\
Co-WCTWD  & 9.3 / 9.8 & 82.6$\pm$1.0 / 86.4$\pm$0.8 \\
Co-WQTWD  & 9.3 / 9.7 & 83.1$\pm$0.8 / 85.9$\pm$0.7 \\
Co-UltraTree  & 8.9 / 9.0 & 88.5$\pm$1.4 / 82.3$\pm$0.9\\
Co-TSWD-1  & 10.3 / 11.4 & 85.6$\pm$1.4 / 83.2$\pm$1.1\\
Co-TSWD-5  & 10.1 / 10.5 & 87.2$\pm$0.8 / 84.9$\pm$0.9\\
Co-TSWD-10  & 9.7 / 10.0 & 89.7$\pm$0.8 / 87.4$\pm$0.8\\
Co-SWCTWD  & 8.2 / 8.9 & 89.4$\pm$1.0 / 88.3$\pm$0.4\\
Co-SWQTWD  & 8.4 / 8.6 & 88.6$\pm$0.2 / 88.4$\pm$0.9\\
Co-MST-TWD   & 10.5 / 10.9 & 82.1$\pm$1.7 / 80.6$\pm$1.3\\
Co-TR-TWD   & 9.8 / 10.2 & 92.6$\pm$0.4 / 93.8$\pm$0.7\\
Co-HHC-TWD  & 10.3 / 11.4  & 89.2$\pm$2.0 / 88.1$\pm$1.7\\
Co-gHHC-TWD  &  10.1 / 10.9 & 88.4$\pm$1.6 / 87.0$\pm$1.3\\
Co-UltraFit-TWD & 9.7 / 10.3   & 90.1$\pm$1.3 / 89.5$\pm$1.2\\
QUE & 7.6 / 8.1 & 93.4$\pm$1.7 / 92.5$\pm$0.5\\
WSV & -  & 92.5$\pm$0.8 / 91.6$\pm$0.6\\ 
Tree-WSV & 8.4 / 8.7 & 93.9$\pm$0.6 / 92.7$\pm$0.8\\
\midrule
Alg.~\ref{alg:Co_HD}  & \underline{3.2} / \underline{4.0} &  \underline{96.2$\pm$0.3} / \underline{95.3$\pm$0.4}\\
Alg.~\ref{alg:wavelet_co-HD}  & \textbf{2.5} / \textbf{3.1} & \textbf{99.4}$\pm$0.2 / \textbf{99.6}$\pm$0.1\\
\bottomrule
\end{tabular} 
}
\label{tab:toy_quantative}
\end{center}
\end{table*}

\begin{figure*}[t]
    \centering
     \includegraphics[width=1\textwidth]{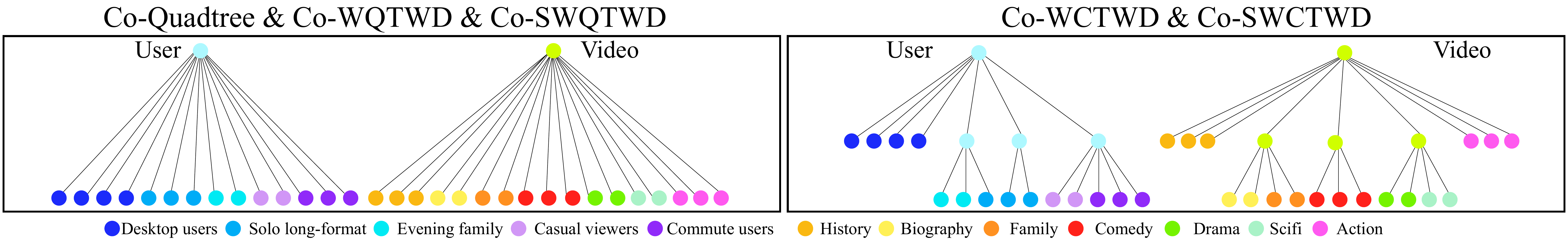}
    \caption{Learned tree representations for the toy video recommendation example using alternative TWD methods in our iterative learning scheme.
    Users and videos are colored according to their first-level subcategory.
   }
   \label{fig:questionaire_toy_competing}
\end{figure*}

\subsection{Empirical uniqueness of hierarchical representations under strong regularization}

\begin{figure*}[t]
    \centering
     \includegraphics[width=0.7\textwidth]{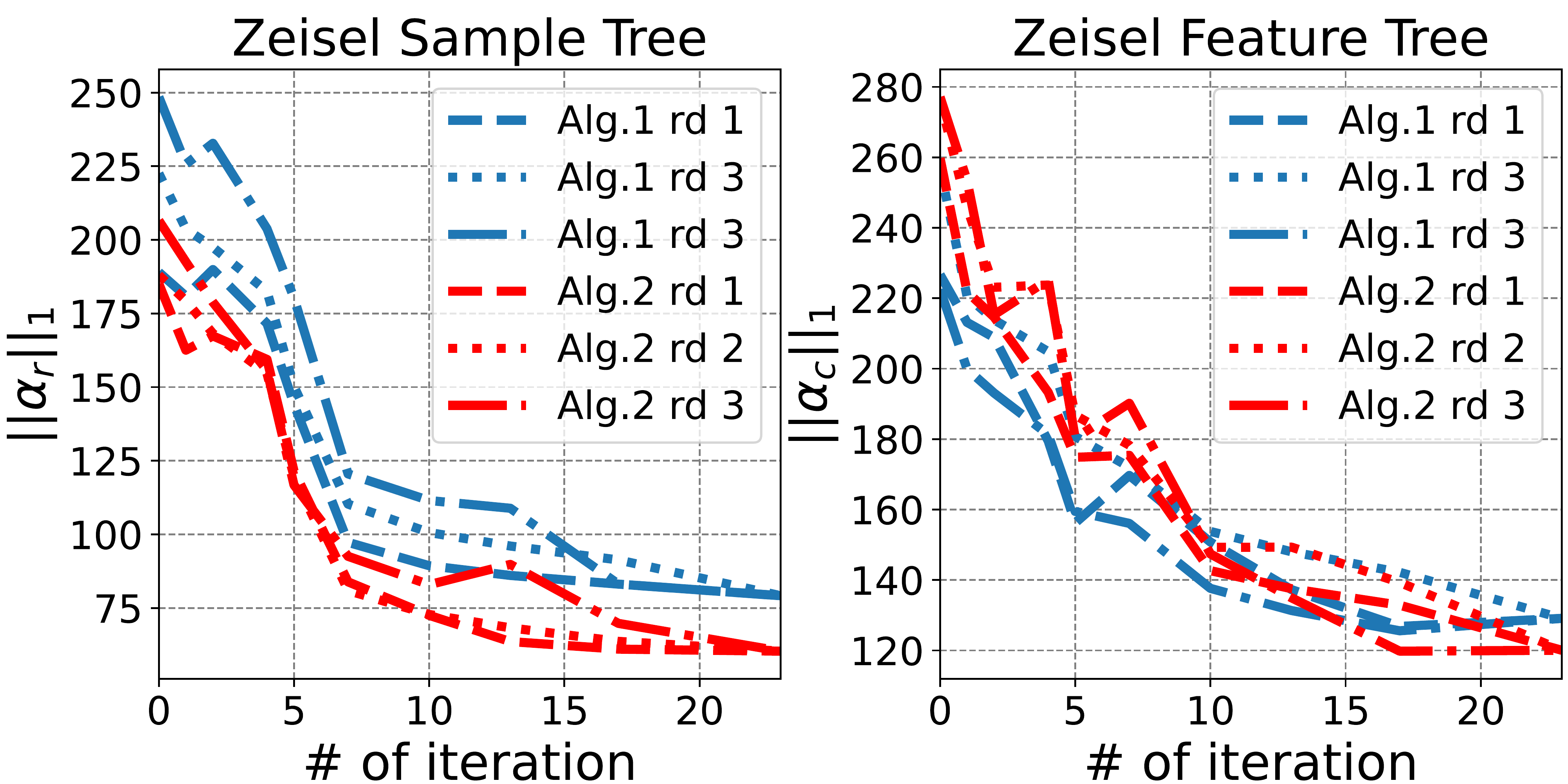}
    \caption{
    Illustration of empirical uniqueness under strong regularization for ZEISEL datasets. 
    The $L_1$ norm of the Haar coefficients from the sample tree and the feature tree during the sparse approximation task across iterations on the ZEISEL dataset.
    Despite different random distance initializations, the final expansion coefficients are identical when using sufficiently large $\gamma_r$ and $\gamma_c$, indicating convergence to unique hierarchical representations. 
    }
    \label{fig:ramdom_norm}
\end{figure*}

For sufficiently large $\gamma_r$ and $\gamma_c$, the resulting TWDs are unique. 
This means that even if the initial distances are random, our methods will converge to unique trees and TWDs. 
Fig.~\ref{fig:ramdom_norm} illustrates that using different random initializations yields identical final expansion coefficients for the Zeisel dataset. However, we would like to point out that such large regularization does not necessarily lead to the best hierarchical representation. When $\gamma_r$ and $\gamma_c$ are too large, the regularization terms dominate the distance computation in the iterative learning scheme, leading to trees that may lack meaningful hierarchical structure. This is evident when comparing these results to those in Fig.~\ref{fig:ZEISEL_sparse_approximation} in Sec.~\ref{sec:exp}, where moderate regularization (tuned by Optuna \citep{akiba2019optuna}) yields more informative trees. The purpose of this experiment is therefore to demonstrate empirical uniqueness under strong regularization, rather than to advocate for such settings in practice.

\subsection{Ablation study: effectiveness of iterative learning scheme in classification tasks}\label{sec:ablation_study}

To demonstrate the effectiveness of our iterative procedure in improving the joint hierarchical representation learning for both samples and features, we evaluate its impact on the learned TWDs in downstream classification tasks. Specifically, we compare the classification performance on document and single-cell datasets using both the initial TWDs, i.e., computed without any iterative interaction between the sample and feature trees, and the TWDs obtained after applying our full iterative learning scheme.
Tab.~\ref{tab:classification_intitial_TW} reports the classification results using the initial TWDs, serving as a non-iterative baseline, alongside the results from our iterative method as reported in Tab.~\ref{tab:classification}. 
The comparison shows a consistent improvement in classification accuracy when using the TWDs refined through our iterative process. This suggests that alternating updates between sample and feature trees help better inference for the data, regardless of the specific TWD method used.
%
Importantly, our proposed methods Alg.~\ref{alg:Co_HD} and Alg.~\ref{alg:wavelet_co-HD} not only improve the quality of the learned TWDs but also achieve better performance than all baseline methods.
These results highlight the advantage of our approach in jointly refining hierarchical structures across data modes, validating the effectiveness of the proposed iterative learning framework for hierarchical data.

\begin{table*}[t]
\caption{Document and single-cell classification accuracy using the initial TWD  (without iterative refinement) and the updated TWD (with iterative refinement). Results marked with * are taken from the work of Lin et al. \citep{lin2025tree}. 
Arrows ($\uparrow$) indicate improvements achieved through the iterative learning process. The best performance is marked in bold, and the second-best is underlined.
\label{tab:classification_intitial_TW}
} 
\begin{center}
\resizebox{1\textwidth}{!}{%
\begin{tabular}{lccccccccccr}
\toprule   
   &  BBCSPORT &  TWITTER & CLASSIC & AMAZON & &ZEISEL & CBMC \\
\midrule
Quadtree  &  95.5$\pm$0.5$^*$ & 69.6$\pm$0.8$^*$ & 95.9$\pm$0.4$^*$ & 89.3$\pm$0.3$^*$ & &80.1$\pm$1.2$^*$ & 80.6$\pm$0.6$^*$ \\
Co-Quadtree  & 96.2$\pm$0.4 ($\uparrow$) & 69.6$\pm$0.3 & 95.9$\pm$0.2 & 89.4$\pm$0.2 ($\uparrow$) & & 81.7$\pm$1.0 ($\uparrow$) & 80.7$\pm$0.3 ($\uparrow$)\\
\midrule
Flowtree  &  95.3$\pm$1.1$^*$ & 70.2$\pm$0.9$^*$ & 94.4$\pm$0.6$^*$ &  90.1$\pm$0.3$^*$ & & 81.7$\pm$0.9$^*$ & 81.8$\pm$0.9$^*$\\
Co-Flowtree  & 95.7$\pm$0.9 ($\uparrow$) & 71.5$\pm$0.7 ($\uparrow$) & 95.6$\pm$0.5 ($\uparrow$) & 91.4$\pm$0.4 ($\uparrow$) & & 84.3$\pm$0.7 ($\uparrow$) & 83.0$\pm$1.2 ($\uparrow$)\\
\midrule
WCTWD  &  92.6$\pm$2.1$^*$ &  69.1$\pm$2.6$^*$  & 93.7$\pm$2.9$^*$ & 88.2$\pm$1.4$^*$ & & 81.3$\pm$4.9$^*$ & 78.4$\pm$3.3$^*$ \\
Co-WCTWD  & 93.2$\pm$1.2 ($\uparrow$) & 70.2$\pm$2.1 ($\uparrow$) & 94.7$\pm$2.6 ($\uparrow$) & 87.4$\pm$1.0 & & 82.5$\pm$2.9 ($\uparrow$) & 79.4$\pm$2.1 ($\uparrow$)\\
\midrule
WQTWD  &  94.3$\pm$1.7$^*$ & 69.4$\pm$2.4$^*$ & 94.6$\pm$3.2$^*$ & 87.4$\pm$1.8$^*$ & & 80.9$\pm$3.5$^*$ & 79.1$\pm$3.0$^*$ \\
Co-WQTWD  & 95.7$\pm$1.8 ($\uparrow$) & 70.7$\pm$2.2 ($\uparrow$) & 95.5$\pm$1.3 ($\uparrow$) & 88.2$\pm$2.1 ($\uparrow$) & & 82.3$\pm$3.1 ($\uparrow$) & 80.5$\pm$2.8 ($\uparrow$) \\
\midrule
UltraTree  & 93.1$\pm$1.5$^*$ & 68.1$\pm$3.2$^*$ & 92.3$\pm$1.9$^*$ & 86.2$\pm$3.1$^*$ & & 83.9$\pm$1.6$^*$ & 82.3$\pm$2.6$^*$ \\
Co-UltraTree  & 95.3$\pm$1.4 ($\uparrow$) & 70.1$\pm$2.8 ($\uparrow$) & 93.6$\pm$2.0 ($\uparrow$) & 86.5$\pm$2.8 ($\uparrow$) & & 85.8$\pm$1.1 ($\uparrow$) & 84.6$\pm$1.3 ($\uparrow$)\\
\midrule
TSWD-1  &  87.6$\pm$1.9$^*$ &  69.8$\pm$1.3$^*$ &  94.5$\pm$0.5$^*$ & 85.5$\pm$0.6$^*$ & & 79.6$\pm$1.8$^*$ & 72.6$\pm$1.8$^*$  \\
Co-TSWD-1  &  88.2$\pm$1.4 ($\uparrow$) & 70.4$\pm$1.2 ($\uparrow$) & 94.7$\pm$0.9 ($\uparrow$) & 86.1$\pm$0.5 ($\uparrow$) & & 80.2$\pm$1.4 ($\uparrow$) & 73.2$\pm$1.0 \\
\midrule
TSWD-5  & 88.1$\pm$1.3$^*$ &  70.5$\pm$1.1$^*$ & 95.9$\pm$0.4$^*$ &   90.8$\pm$0.1$^*$ & & 81.3$\pm$1.4$^*$ & 74.9$\pm$1.1$^*$ \\
Co-TSWD-5  & 88.7$\pm$1.7 ($\uparrow$) & 71.0$\pm$1.5 ($\uparrow$) & 96.7$\pm$0.8 ($\uparrow$) & 91.5$\pm$0.4 ($\uparrow$) & & 82.0$\pm$0.9 ($\uparrow$)  & 75.4$\pm$0.7 ($\uparrow$)\\ 
\midrule
TSWD-10  & 88.6$\pm$0.9$^*$  &    70.7$\pm$1.3$^*$  &   95.9$\pm$0.6$^*$  &    91.1$\pm$0.5$^*$  & & 83.2$\pm$0.8$^*$  & 76.5$\pm$0.7$^*$  \\
Co-TSWD-10  & 89.2$\pm$1.1 ($\uparrow$) & 71.4$\pm$1.8 ($\uparrow$) & 95.5$\pm$0.2 & 91.8$\pm$0.7 ($\uparrow$) & & 83.8$\pm$0.5 ($\uparrow$) & 77.2$\pm$0.9 ($\uparrow$)\\
\midrule
SWCTWD  &   92.8$\pm$1.2$^*$ & 70.2$\pm$1.2$^*$ & 94.1$\pm$1.8$^*$ & 90.2$\pm$1.2$^*$ & & 81.9$\pm$3.1$^*$ & 78.3$\pm$1.7$^*$\\
Co-SWCTWD  & 93.5$\pm$2.4 ($\uparrow$) & 70.5$\pm$1.0 ($\uparrow$) & 94.4$\pm$1.3 ($\uparrow$) & 90.7$\pm$1.5 ($\uparrow$) & & 82.7$\pm$1.7  ($\uparrow$) & 79.0$\pm$0.9 ($\uparrow$)\\
\midrule
SWQTWD  & 94.5$\pm$1.0$^*$ & 70.6$\pm$1.9$^*$ & 95.4$\pm$2.0$^*$ & 89.8$\pm$1.1$^*$ & & 80.7$\pm$2.5$^*$ & 79.8$\pm$2.5$^*$\\
Co-SWQTWD  & 96.2$\pm$1.2 ($\uparrow$) & 72.4$\pm$2.1 ($\uparrow$) & 96.0$\pm$1.1 ($\uparrow$) & 90.6$\pm$2.3 ($\uparrow$)  & & 82.4$\pm$1.4 ($\uparrow$) & 81.3$\pm$1.1 ($\uparrow$)\\
\midrule
MST-TWD   &  88.4$\pm$1.9$^*$ & 68.2$\pm$1.9$^*$  & 90.0$\pm$3.1$^*$ & 86.4$\pm$1.2$^*$ & & 80.1$\pm$3.1$^*$ & 76.2$\pm$2.5$^*$ \\
Co-MST-TWD   &  88.7$\pm$2.4 ($\uparrow$) & 68.4$\pm$3.3 ($\uparrow$) & 91.3$\pm$2.9 ($\uparrow$) & 87.1$\pm$1.4 ($\uparrow$)  & & 80.1$\pm$2.8 & 76.5$\pm$1.3 ($\uparrow$)\\
\midrule
TR-TWD   &   89.2$\pm$0.9$^*$  & 70.2$\pm$0.7$^*$  &   92.9$\pm$0.8$^*$   &  88.7$\pm$1.1$^*$  &  & 80.3$\pm$0.7$^*$  &  78.4$\pm$1.2$^*$   \\
Co-TR-TWD   & 89.5$\pm$1.2 ($\uparrow$) & 70.9$\pm$1.7 ($\uparrow$) & 93.4$\pm$2.2 ($\uparrow$) & 89.5$\pm$1.4 ($\uparrow$)  & & 80.7$\pm$0.8 ($\uparrow$) & 78.5$\pm$0.9 ($\uparrow$)\\
\midrule
HHC-TWD  &  85.3$\pm$1.8$^*$ & 70.4$\pm$0.4$^*$ & 93.4$\pm$0.8$^*$ &   88.5$\pm$0.7$^*$ &  & 82.3$\pm$0.7$^*$ & 77.3$\pm$1.1$^*$ \\
Co-HHC-TWD  &  86.1$\pm$2.1 ($\uparrow$) & 70.1$\pm$1.3 ($\uparrow$) & 93.6$\pm$1.5 ($\uparrow$) & 88.5$\pm$0.5 & & 83.2$\pm$1.4 ($\uparrow$) & 77.6$\pm$0.8 ($\uparrow$)\\
\midrule
gHHC-TWD  & 83.2$\pm$2.4$^*$ & 69.9$\pm$1.8$^*$ & 90.3$\pm$2.2$^*$ &  86.9$\pm$2.0$^*$ & & 79.4$\pm$1.9$^*$ & 73.6$\pm$1.6$^*$ \\
Co-gHHC-TWD  & 84.0$\pm$2.0 ($\uparrow$) & 70.4$\pm$1.6 ($\uparrow$) & 90.7$\pm$1.7 ($\uparrow$) & 87.2$\pm$1.9 ($\uparrow$) & & 79.9$\pm$1.4 ($\uparrow$) & 84.2$\pm$1.2 ($\uparrow$)\\
\midrule
UltraFit-TWD  & 84.9$\pm$1.4$^*$ & 69.5$\pm$1.2$^*$ & 91.6$\pm$0.9$^*$ & 87.4$\pm$1.6$^*$ & & 81.9$\pm$3.3$^*$ & 77.8$\pm$1.2$^*$ \\
Co-UltraFit-TWD  &  86.8$\pm$0.9 ($\uparrow$) & 70.9$\pm$1.1 ($\uparrow$) & 91.9$\pm$1.0 ($\uparrow$) & 89.9$\pm$2.0 ($\uparrow$) & & 83.7$\pm$2.9 ($\uparrow$) & 79.1$\pm$1.8 ($\uparrow$)\\
\midrule
WMD &  95.4$\pm$0.7$^*$ &  71.3$\pm$0.6$^*$ &  97.2$\pm$0.1$^*$ & 92.6$\pm$0.3$^*$ & & - & -  \\
GMD  & - & - & - & - & &  84.2$\pm$0.7$^*$ & 81.4$\pm$0.7$^*$  \\
\midrule
HD-TWD & 96.1$\pm$0.4$^*$ & 73.4$\pm$0.2$^*$ 
 & 96.9$\pm$0.2$^*$ &  93.1$\pm$0.4$^*$ & &89.1$\pm$0.4$^*$  &  84.3$\pm$0.3$^*$  \\
 Alg.~\ref{alg:Co_HD}  & \underline{96.7$\pm$0.3} ($\uparrow$) & \underline{74.1$\pm$0.5} ($\uparrow$) & \underline{97.3$\pm$0.2} ($\uparrow$)& \underline{94.0$\pm$0.4} & & \underline{90.1$\pm$0.4} ($\uparrow$)& \underline{86.7$\pm$0.5} ($\uparrow$)\\
Alg.~\ref{alg:wavelet_co-HD}  & \textbf{97.3}$\pm$0.5 ($\uparrow$) & \textbf{76.7}$\pm$0.7  ($\uparrow$) & \textbf{97.6}$\pm$0.1  ($\uparrow$) & \textbf{94.2}$\pm$0.2 ($\uparrow$) & & \textbf{94.0}$\pm$0.6 ($\uparrow$) & \textbf{93.3}$\pm$0.7 ($\uparrow$)\\
\bottomrule
\end{tabular} 
}
\end{center}
\end{table*}

\subsection{Ablation study: incorporating Wavelet filtering with alternative TWD methods}

The key difference between Alg.~\ref{alg:Co_HD} and Alg.~\ref{alg:wavelet_co-HD} is the application of a filtering step in each iteration of Alg.~\ref{alg:wavelet_co-HD}, which aims to suppress noise and other nuisance components. This raises the question of whether wavelet filtering could also benefit other TWD-based baseline methods when considering in our iterative learning scheme. 
To explore this, we apply Haar wavelet filtering using trees constructed from various TWD baselines and report the resulting $L_1$ norms of the Haar coefficients across all samples and features in Tab.~\ref{tab:sparse_approximation_wavelet_competing_TWD}.
 For each baseline, we append ``-Wavelet'' to indicate the variant with wavelet filtering applied.
We observe that incorporating wavelet filtering into these baselines does not consistently lead to improved hierarchical representations.
We argue that this is because the trees used in these baselines are primarily optimized to approximate Wasserstein distances as ground metrics, rather than to represent the hierarchical structure of the data. As such, they do not serve as meaningful hierarchical representations, since the Wasserstein metric is inherently not a hierarchical metric. Consequently, applying wavelet filters, whose effectiveness depends on the data and the tree structure, does not improve, and may even hinder, hierarchical representation learning.

We further evaluate the effect of wavelet filtering on downstream classification tasks for document and single-cell datasets. Tab.~\ref{tab:classification_wavelet_competing} reports the classification accuracies obtained by applying wavelet filtering within the iterative learning scheme using alternative TWD methods. 
Similarly, for each baseline, we append ``-Wavelet'' to indicate the variant with wavelet filtering applied.
We see that the results show inconsistent improvements across datasets and domains, with any gains being marginal at best.
In contrast, when wavelet filtering is applied within our proposed method, the improvement is significantly more pronounced. These findings are consistent with the sparse approximation results in Tab.~\ref{tab:sparse_approximation_wavelet_competing_TWD}, reinforcing the observation that wavelet filtering effectively improves hierarchical representation learning only when applied to trees that meaningfully represent the data hierarchy.

\begin{table*}[t]
\caption{The $L_1$ norm of the Haar coefficients across all samples and features when applying wavelet filtering induced by trees constructed from various TWD-based baselines. Lower values indicate sparser representations. Arrows $(\uparrow)$ denote cases where wavelet filtering leads to an improvement (i.e., reduction in $L_1$ norm) compared to the unfiltered baseline.
 Values are reported in the format $\mathrm{samples}$ / $\mathrm{features}$ (the lowest in bold and the second lowest underlined).} 
\begin{center}
\resizebox{1\textwidth}{!}{%
\begin{tabular}{lccccccccccr}
\toprule 
   &  BBCSPORT &  TWITTER & CLASSIC & AMAZON & & ZEISEL & CBMC  \\
\midrule
Co-Quadtree  & 26.6 / 27.8 & 59.5 / 24.9 & 64.3 / 108.7 &  87.3 / 102.7 & & 157.0 / 242.4 & 1616.6 / 77.3 \\
Co-Quadtree-Wavelet &  29.4 / 29.7 & 75.8 / 35.4 & 65.9 / 120.8 & 86.1 ($\uparrow$) / 106.3 & & 173.9 / 214.7 ($\uparrow$) & 1792.4 / 89.2\\
\midrule
Co-Flowtree  & 27.4 / 31.5 & 69.1 / 20.9 & 73.7 / 97.5 &  88.1 / 110.1 & & 173.7 / 240.0& 1688.8 / 81.7 \\
Co-Flowtree-Wavelet & 26.3 ($\uparrow$) / 31.9 & 69.9 / 22.7 & 74.8 /100.1 & 87.3 ($\uparrow$) / 109.2 ($\uparrow$) && 180.4 / 245.1 & 1537.9 ($\uparrow$) / 89.5 \\
\midrule
Co-WCTWD  & 26.5 / 26.7 & 57.6 / 17.1 & 63.3 / 81.7 & 77.4 / 96.4& & 136.4 / 202.9& 1308.1 / 68.4 \\
Co-WCTWD-Wavelet & 28.1 / 28.4 & 58.5 / 20.7 & 67.1 / 84.3 & 80.9 / 103.4 && 139.6 / 211.7 & 1412.3 / 79.6 \\
\midrule
Co-WQTWD  & 25.2 / 32.1 & 56.9 / 30.2 & 61.3 / 100.1 &  74.7 / 111.0& & 135.0 / 224.8& 1324.8 / 67.4\\
Co-WQTWD-Wavelet & 27.1 / 40.3 & 63.2 / 37.1 & 68.7 / 121.3 & 80.9 / 114.7 & & 149.2 / 237.6 & 1417.5 / 79.6 \\
\midrule
Co-UltraTree  & 37.6 / 32.1 & 69.8 / 18.8 & 76.0 / 125.5 & 86.3 / 133.3& &155.4 / 226.6& 1450.0 / 82.2 \\
Co-UltraTree-Wavelet & 42.5 / 28.3 ($\uparrow$) & 73.1 / 25.9 & 80.4 / 137.2 & 93.4 / 155.0 & & 172.3 / 257.8 & 1524.6 / 95.4\\
\midrule
Co-TSWD-1  & 26.0 / 29.6 & 70.1 / 15.0 & 69.8 / 91.6 & 100.4 / 111.2&& 179.5 / 211.5& 1716.1 / 79.5  \\
Co-TSWD-1-Wavelet  &  29.7 / 34.5 & 76.2 / 14.8 ($\uparrow$) & 73.9 / 98.4 & 103.6 / 129.1 & & 184.7 / 239.4 & 1788.5 / 94.3 \\
\midrule
Co-TSWD-5  & 30.5 / 24.7 & 58.9 / 16.0 & 65.7 / 98.0 & 83.3 / 102.3& & 170.2 / 234.7 & 1560.2 / 70.7  \\
Co-TSWD-5-Wavelet  & 33.1 / 28.0 & 64.2 / 19.1 & 66.7 / 104.5 & 90.5 / 121.3 & & 196.7 / 251.2 & 1795.4 / 93.1\\
\midrule
Co-TSWD-10  & 21.1 / 34.5 & 52.6 / 23.6 & 59.3 / 140.8 & 74.5 / 135.8& & 141.4 / 229.7 & 1373.3 / 81.2  \\
Co-TSWD-10-Wavelet  & 24.5 / 37.9 & 55.7 / 27.0 & 63.4 / 151.2 & 73.9 ($\uparrow$) / 142.7 & & 153.6 / 232.4 & 1425.6 / 79.2 ($\uparrow$)\\
\midrule
Co-SWCTWD  & 35.0 / 29.5 & 69.0 / 24.7 & 79.8 / 93.6 &  95.9 / 104.6& & 170.4 / 215.9& 1595.0 / 89.7\\
Co-SWCTWD-Wavelet  & 36.1 / 31.4 & 72.4 / 33.9 & 84.6 / 96.7 & 102.3 / 124.1 && 169.5 ($\uparrow$) / 224.7 & 1723.4 / 94.6\\
\midrule
Co-SWQTWD  & 33.5 / 25.6 & 57.6 / 13.9 & 61.3 / 86.0 &  77.0 / 98.9& & 141.8 / 209.6 & 1369.8 / 68.0\\
Co-SWQTWD-Wavelet   & 34.9 / 27.7 & 63.4 / 14.8 & 62.7 / 92.4 & 86.5 / 106.7 && 149.2 / 203.1 ($\uparrow$) & 1428.4 / 70.3\\
\midrule
Co-MST-TWD   & 30.4 / 34.1 & 69.1 / 22.1 & 77.5 / 109.3& 97.0 / 126.4& &179.1 / 254.0 
 &1755.3 / 83.7\\
 Co-MST-TWD-Wavelet  & 39.4 / 38.6 & 73.2 / 26.4 & 87.1 / 123.0 & 105.6 / 139.7 & & 192.8 / 276.3 & 1863.7 / 94.2  \\
\midrule
Co-TR-TWD   & 36.8 / 24.5 & 59.1 / 15.2 & 66.1 / 94.7 &  82.2 / 102.3& & 124.6 / 238.6 &926.8 / 68.8  \\
Co-TR-TWD-Wavelet  & 34.2 ($\uparrow$) / 25.6 & 59.3 / 19.2 & 69.5 / 92.7 ($\uparrow$) & 84.6 / 105.7 & & 139.6 / 274.8 & 1031.9 / 72.3\\
\midrule
Co-HHC-TWD  & 22.5 / 22.7 & 51.5 / 13.6 & 58.7 / 93.3 & 74.1 / 110.1& & 192.1 / 215.5 & 1148.8 / 79.0\\
Co-HHC-TWD-Wavelet  & 28.7 / 36.1 & 53.4 / 19.6 & 64.2 / 100.4 & 86.5 / 126.8 & & 203.1 / 276.4 & 1253.6 / 86.1 \\
\midrule
Co-gHHC-TWD  & 27.9 / 10.6 & 65.0 / 16.7 & 72.8 / 112.9& 88.5 / 115.5& & 162.7 / 240.7 &1612.8 / 70.8\\
Co-gHHC-TWD-Wavelet  & 32.6 / 16.1 & 78.2 / 23.4 & 84.1 / 125.3 & 96.0 / 139.7 & & 182.6 / 277.4 & 1823.5 / 84.4\\
\midrule
Co-UltraFit-TWD  & 22.5 / 22.1 & 51.9 / 13.4 & 58.4 / 81.3 & 73.1 / 92.8& & 133.7 / 202.0 & 1294.6 / 78.1 \\
Co-UltraFit-TWD-Wavelet  & 26.4 / 29.3 & 58.2 / 19.6 & 64.3 / 89.2 & 79.6 / 104.8 & & 159.4 / 241.0 & 1432.5 / 89.0\\
\midrule
QUE & 22.8 / 19.8 & 57.8 / 10.4 & 71.6 / 67.6 & 88.8 / 72.0& & 93.1 / 173.3 & 906.4 / 58.5 \\
QUE& 23.1 / 17.6 ($\uparrow$) & 56.9 ($\uparrow$) / 13.7 & 74.2 / 69.4 & 90.6 / 78.5 & & 94.5 / 162.7 ($\uparrow$) & 802.4 ($\uparrow$) / 69.0\\
\midrule
Tree-WSV & 23.6 / 24.9 & 54.3 / 18.2 & 65.4 / 99.2 & 84.2 / 106.3 & & 139.7 / 201.9 & 1637.4 / 73.2\\
Tree-WSV-Wavelet   & 27.2 / 29.1 & 63.4 / 22.5 & 67.2 / 105.6 & 93.1 / 120.4 & & 145.7 / 212.3 & 1739.2 / 79.8 \\
\midrule
Alg.~\ref{alg:Co_HD}  & \underline{12.9} / \underline{7.0} & \underline{25.4} / \underline{3.7} & \underline{40.4} / \underline{24.6}& \underline{51.0} / \underline{30.1} & & \underline{64.4} / \underline{110.8} &  \underline{511.3} / \underline{50.1} \\
Alg.~\ref{alg:wavelet_co-HD}   & \textbf{10.1} ($\uparrow$) / \textbf{4.8} ($\uparrow$) & \textbf{22.4} ($\uparrow$) / \textbf{3.4} ($\uparrow$) &  \textbf{37.2} ($\uparrow$) / \textbf{19.4} ($\uparrow$) & \textbf{46.6}  ($\uparrow$) / \textbf{26.6} ($\uparrow$) & &  \textbf{50.9} ($\uparrow$) / \textbf{93.7} ($\uparrow$) & \textbf{489.4} ($\uparrow$) / \textbf{45.6} ($\uparrow$) \\
\bottomrule
\end{tabular} }
\label{tab:sparse_approximation_wavelet_competing_TWD}
\end{center}
\end{table*}

\begin{table*}[t]
\caption{Classification accuracy on document and single-cell datasets using various TWD-based baselines with and without wavelet filtering, applied within the iterative learning scheme. The “-Wavelet” suffix denotes the inclusion of Haar wavelet filtering. 
Arrows $(\uparrow)$ denote cases where wavelet filtering leads to an improvement compared to the unfiltered baseline.
While some baselines show marginal improvement, the gains are inconsistent across datasets, highlighting the importance of using meaningful hierarchical structures for wavelet filtering to enhance representation learning.
The best performance is marked in bold, and the second-best is underlined.
} 
\begin{center}
\resizebox{1\textwidth}{!}{%
\begin{tabular}{lccccccccccr}
\toprule  
   &  BBCSPORT &  TWITTER & CLASSIC & AMAZON & &ZEISEL & CBMC \\
\midrule
Co-Quadtree  & 96.2$\pm$0.4 & 69.6$\pm$0.3 & 95.9$\pm$0.2 & 89.4$\pm$0.2  & & 81.7$\pm$1.0 & 80.7$\pm$0.3\\
Co-Quadtree-Wavelet & 95.6$\pm$0.6 & 69.6$\pm$0.2 & 96.0$\pm$0.1 ($\uparrow$) & 89.4$\pm$1.0 & &  81.4$\pm$1.1 & 80.6$\pm$0.4 ($\uparrow$) \\
\midrule
Co-Flowtree  & 95.7$\pm$0.9 & 71.5$\pm$0.7 & 95.6$\pm$0.5 & 91.4$\pm$0.4 & & 84.3$\pm$0.7 & 83.0$\pm$1.2\\
Co-Flowtree-Wavelet & 95.5$\pm$0.8 & 71.0$\pm$ 0.8 & 95.0$\pm$0.7 & 90.8$\pm$0.5 & & 82.7$\pm$0.8 & 82.2$\pm$1.0\\
\midrule
Co-WCTWD  & 93.2$\pm$1.2 & 70.2$\pm$2.1 & 94.7$\pm$2.6 & 87.4$\pm$1.0 & & 82.5$\pm$2.9 & 79.4$\pm$2.1\\
Co-WCTWD-Wavelet & 93.2$\pm$1.1 & 70.6$\pm$1.9 ($\uparrow$) & 94.5$\pm$2.2 & 88.3$\pm$1.3 ($\uparrow$) & & 82.2$\pm$3.0 & 79.0$\pm$2.9 \\
\midrule
Co-WQTWD  & 95.7$\pm$1.8 & 70.7$\pm$2.2 & 95.5$\pm$1.3 & 88.2$\pm$2.1 & & 82.3$\pm$3.1 & 80.5$\pm$2.8 \\
Co-WQTWD-Wavelet & 95.0$\pm$1.6 & 70.2$\pm$1.8 & 95.2$\pm$ 1.8 & 88.2$\pm$1.9 & & 81.9$\pm$3.0 & 80.7$\pm$2.7 ($\uparrow$) \\
\midrule
Co-UltraTree  & 95.3$\pm$1.4 & 70.1$\pm$2.8 & 93.6$\pm$2.0 & 86.5$\pm$2.8 & & 85.8$\pm$1.1 & 84.6$\pm$1.3\\
Co-UltraTree-Wavelet & 95.0$\pm$1.5 & 70.4$\pm$2.9 ($\uparrow$) & 93.0$\pm$1.8 & 86.3$\pm$2.7 & & 85.7$\pm$1.3 & 83.9$\pm$2.0 \\
\midrule
Co-TSWD-1  &  88.2$\pm$1.4 & 70.4$\pm$1.2 & 94.7$\pm$0.9& 86.1$\pm$0.5 & & 80.2$\pm$1.4 & 73.2$\pm$1.0 \\
Co-TSWD-1-Wavelet & 88.0$\pm$1.3 & 70.1$\pm$1.2 & 94.6$\pm$1.0 & 85.7$\pm$0.7 & & 80.1$\pm$1.2 & 72.9$\pm$1.1\\
\midrule
Co-TSWD-5  & 88.7$\pm$1.7 & 71.0$\pm$1.5 & 96.7$\pm$0.8 & 91.5$\pm$0.4 & & 82.0$\pm$0.9 & 75.4$\pm$0.7\\
Co-TSWD-5-Wavelet & 88.5$\pm$1.4 & 70.9$\pm$1.3 & 96.5$\pm$0.7 & 91.1$\pm$0.6 & & 82.3$\pm$1.0 ($\uparrow$) & 75.6$\pm$0.9 ($\uparrow$) \\
\midrule
Co-TSWD-10  & 89.2$\pm$1.1 & 71.4$\pm$1.8 & 95.5$\pm$0.2 & 91.8$\pm$0.7 & & 83.8$\pm$0.5 & 77.2$\pm$0.9\\
Co-TSWD-10-Wavelet & 89.0$\pm$0.9 & 71.2$\pm$1.4 & 95.4$\pm$0.5 & 91.3$\pm$0.8 & & 83.7$\pm$0.7 & 76.9$\pm$1.0 \\
\midrule
Co-SWCTWD  & 93.5$\pm$2.4 & 70.5$\pm$1.0 & 94.4$\pm$1.3 & 90.7$\pm$1.5 & & 82.7$\pm$1.7 & 79.0$\pm$0.9\\
Co-SWCTWD-Wavelet & 93.2$\pm$1.6 & 71.3$\pm$0.9 ($\uparrow$) & 94.2$\pm$1.5 & 90.3$\pm$1.7 & & 82.3$\pm$2.0 & 79.0$\pm$0.5 \\
\midrule
Co-SWQTWD  & 96.2$\pm$1.2 & 72.4$\pm$2.1 & 96.0$\pm$1.1 & 90.6$\pm$2.3 & & 82.4$\pm$1.4 & 81.3$\pm$1.1\\
Co-SWQTWD-Wavelet & 95.9$\pm$0.9 & 71.8$\pm$1.6 & 95.6$\pm$1.3 & 90.2$\pm$2.0 & & 82.0$\pm$1.7 & 80.9$\pm$1.4\\
\midrule
Co-MST-TWD   &  88.7$\pm$2.4 & 68.4$\pm$3.3 & 91.3$\pm$2.9 & 87.1$\pm$1.4  & & 80.1$\pm$2.8 & 76.5$\pm$1.3\\
Co-MST-TWD-Wavelet & 88.6$\pm$1.3 & 68.7$\pm$2.9 ($\uparrow$) & 90.8$\pm$2.7 & 86.9$\pm$1.3 & & 80.2$\pm$2.6 ($\uparrow$) & 76.1$\pm$2.1\\
\midrule
Co-TR-TWD   & 89.5$\pm$1.2 & 70.9$\pm$1.7 & 93.4$\pm$2.2 & 89.5$\pm$1.4  & & 80.7$\pm$0.8 & 78.5$\pm$0.9\\
Co-TR-TWD-Wavelet & 90.6$\pm$0.7 ($\uparrow$) & 70.9$\pm$1.3 & 93.2$\pm$1.9 & 89.0$\pm$1.3 & & 80.5$\pm$0.9 & 78.5$\pm$0.7 \\
\midrule
Co-HHC-TWD  &  86.1$\pm$2.1 & 70.1$\pm$1.3 & 93.6$\pm$1.5 & 88.5$\pm$0.5 & & 83.2$\pm$1.4 & 77.6$\pm$0.8\\
Co-HHC-TWD-Wavelet & 85.9$\pm$1.7 & 70.4$\pm$1.2 ($\uparrow$) & 93.4$\pm$1.1 & 89.2$\pm$0.9 ($\uparrow$) & & 82.7$\pm$1.0 & 77.5$\pm$0.9 \\
\midrule
Co-gHHC-TWD  & 84.0$\pm$2.0 & 70.4$\pm$1.6 & 90.7$\pm$1.7 & 87.2$\pm$1.9 & & 79.9$\pm$1.4 & 84.2$\pm$1.2\\
Co-gHHC-TWD-Wavelet & 83.5$\pm$1.7 & 70.2$\pm$1.9 & 90.6$\pm$2.1 & 87.0$\pm$1.4 & & 82.7$\pm$2.5 & 78.6$\pm$1.6\\
\midrule
Co-UltraFit-TWD  &  86.8$\pm$0.9 & 70.9$\pm$1.1 & 91.9$\pm$1.0 & 89.9$\pm$2.0 & & 83.7$\pm$2.9 & 79.1$\pm$1.8\\
Co-UltraFit-TWD-Wavelet & 86.7$\pm$1.2 & 70.3$\pm$1.4 & 92.1$\pm$0.9 ($\uparrow$) & 88.7$\pm$1.8 & & 83.4$\pm$2.7 & 79.0$\pm$1.7\\
\midrule
QUE & 84.7$\pm$0.5 &72.4$\pm$0.6 & 91.9$\pm$0.5  & 91.6$\pm$0.9 & & 83.6$\pm$1.4 & 82.5$\pm$1.9\\
QUE-Wavelet & 86.4$\pm$0.7 ($\uparrow$) & 72.0$\pm$0.8 & 92.3$\pm$0.8 ($\uparrow$) & 90.8$\pm$1.0 & & 85.1$\pm$1.0 ($\uparrow$) & 82.4$\pm$1.6 \\
\midrule
WSV  & 85.9$\pm$1.0 &71.4$\pm$1.3 & 92.6$\pm$0.7 & 89.0$\pm$1.5 & & 81.6$\pm$2.4 & 77.5$\pm$1.7\\
\midrule
Tree-WSV & 86.3$\pm$1.5 & 71.2$\pm$1.9 & 92.4$\pm$1.0 & 88.7$\pm$1.9 & & 82.0$\pm$2.9 & 76.4$\pm$2.4  \\
Tree-WSV-Wavelet & 86.2$\pm$1.1 & 71.5$\pm$1.4 ($\uparrow$) & 92.0$\pm$0.8 & 89.3$\pm$1.4 ($\uparrow$) & & 82.0$\pm$2.3 & 78.1$\pm$1.9 ($\uparrow$)\\
\midrule
Alg.~\ref{alg:Co_HD}  & \underline{96.7$\pm$0.3} & \underline{74.1$\pm$0.5} & \underline{97.3$\pm$0.2} & \underline{94.0$\pm$0.4} & & \underline{90.1$\pm$0.4} & \underline{86.7$\pm$0.5}\\
Alg.~\ref{alg:wavelet_co-HD}  & \textbf{97.3}$\pm$0.5 ($\uparrow$) & \textbf{76.7}$\pm$0.7 ($\uparrow$) & \textbf{97.6}$\pm$0.1 ($\uparrow$) & \textbf{94.2}$\pm$0.2 ($\uparrow$) & & \textbf{94.0}$\pm$0.6 ($\uparrow$) & \textbf{93.3}$\pm$0.7 ($\uparrow$)\\
\bottomrule
\end{tabular} 
}

\label{tab:classification_wavelet_competing}
\end{center}
\end{table*}

\subsection{Ablation study: integrating wavelet filters to HGCNs without iterative learning scheme}

In Sec.~\ref{sec:exp_hgcn}, we demonstrate that our method can serve as a preprocessing step for hierarchical graph data and be integrated into HGCNs. The implementation details are provided in App.~\ref{app:hgcn}.
Here, we examine whether wavelet filtering alone, without our iterative learning scheme, can also improve the performance of HGCNs.

It is important to note that Haar wavelet filters are induced by a tree structure \citep{gavish2010multiscale}. However, in the case of hierarchical graph data, the structure is not always a tree. To quantify how closely a graph resembles a tree, one can use the notion of $\delta$-hyperbolicity \citep{gromov1987hyperbolic}. A tree has $\delta = 0$, whereas hierarchical graphs that deviate from tree-like geometry exhibit larger $\delta$-hyperbolicity values, indicating lower tree-likeness or increased curvature.
For the hierarchical graph datasets considered in our experiments, following prior work on HGCNs \citep{chami2019hyperbolic, dai2021hyperbolic}, the measured $\delta$-hyperbolicity values are as follows: DISEASE ($\delta = 0$), AIRPORT ($\delta = 1$), PUBMED ($\delta = 3.5$), and CORA ($\delta = 11$).
Among them, only DISEASE forms a tree, allowing the Haar wavelet to be applied directly. 
For the other datasets, which are not trees, we can either convert the graph into a tree before applying Haar wavelets or construct Haar-like wavelets using the multiscale tree approximation framework \citep{gavish2010multiscale}.

For the naive transformation, we consider two strategies to approximate the graph with a tree structure:
(i) we construct an MST by connecting all nodes using edges with the smallest cumulative weights; 
(ii) we apply agglomerative clustering to iteratively merge nodes and form a hierarchical tree, as in standard hierarchical clustering.
These transformations alter the original graph topology, but the resulting tree is intended to closely approximate the structure of the original graph. Once the tree is constructed, it can be used to define a Haar wavelet basis and build the corresponding wavelet filters, as described in Sec.~\ref{sec:haar_filtering}, which are then applied to the node attributes (i.e., features).
To construct Haar-like wavelets using the multiscale tree approximation framework \citep{gavish2010multiscale}, we first build a hierarchical partition tree over the data using a suitable distance metric (e.g., shortest path distance). Each wavelet basis function is then defined over a node in the tree by assigning opposite-signed, normalized values to its children.
We denote wavelet filtering applied to the node attributes (i.e., features) using these approximations, without our iterative learning scheme, by ``-MST'' (for MST-based tree), ``-HC'' (for hierarchical clustering), and ``-Wavelet'' (for multiscale tree approximation).
We incorporate these variants into HGCNs \citep{chami2019hyperbolic, dai2021hyperbolic} and evaluate their performance on link prediction and node classification tasks. 

Tab.~\ref{tab:hgcn_wavelet} presents the performance of integrating the wavelet filtering variants (“-MST”, “-HC”, and “-Wavelet”) into HGCNs for link prediction and node classification, compared to our full method that has iterative learning with wavelet filtering.
The results show that the MST- and HC-based variants do not improve performance, while the improvement from the Wavelet-based variant is marginal.
Applying wavelet filtering alone, without the iterative learning scheme, fails to achieve performance comparable to our full approach.
In contrast, our full method incorporates the relation between samples and features through TWD, and uses this structure to inform the wavelet filtering process.
It plays a key role in improving hierarchical representation and downstream task performance.
To the best of our knowledge, this is the first work to integrate wavelet filtering with TWD and apply it within HGCNs.

\begin{table*}[t]
\caption{Performance comparison of HGCNs integrated with wavelet filtering variants (“-MST”, “-HC”, and “-Wavelet”) versus our full method that combines iterative learning with wavelet filtering.
Results are reported for link prediction and node classification tasks across hierarchical graph datasets. 
ROC AUC for LP, F1 score for the DISEASE dataset, and accuracy for the AIRPORT, PUBMED, and CORA datasets for NC tasks.
Arrows $(\uparrow)$ denote cases with improvement.
The highest performance is marked in bold, and the second-highest is underlined.
}
\begin{center}
\resizebox{1\textwidth}{!}{%
\begin{tabular}{lccccccccccccccr}
\toprule
  & \multicolumn{2}{c}{DISEASE} & \multicolumn{2}{c}{AIRPORT} & \multicolumn{2}{c}{PUBMED} &  \multicolumn{2}{c}{CORA} \\
 \cmidrule{2-9} 
   &  LP & NC &  LP & NC &   LP & NC  &  LP & NC \\
\midrule
HGCNs \citep{chami2019hyperbolic} &  90.8$\pm$0.3  & 74.5$\pm$0.9 & \underline{96.4$\pm$0.1} & \underline{90.6$\pm$0.2} & 96.3$\pm$0.0 & 80.3$\pm$0.3 & 92.9$\pm$0.1 & 79.9$\pm$0.2 \\
HGCNs-MST & - & - & 92.3$\pm$0.2 & 88.4$\pm$0.4 & 91.7$\pm$0.2 & 77.9$\pm$0.3 & 90.4$\pm$0.3 & 77.6$\pm$0.6  \\
HGCNs-HC & - & - & 90.9$\pm$0.2 & 87.9$\pm$0.3 & 92.6$\pm$0.3 & 76.6$\pm$0.4 & 91.3$\pm$0.5 & 76.4$\pm$0.7 \\
HGCNs-Wavelet & 91.2$\pm$0.6 ($\uparrow$) & 76.7$\pm$1.0 ($\uparrow$) & 96.0$\pm$0.2 & 90.6$\pm$0.3 & 96.4$\pm$0.1 ($\uparrow$) & 80.1$\pm$0.4 & 93.0$\pm$0.2 ($\uparrow$) & 80.4$\pm$0.3 ($\uparrow$)\\
H2H-GCN \citep{dai2021hyperbolic} & 97.0$\pm$0.3 & \underline{88.6$\pm$1.7}  & \underline{96.4$\pm$0.1} & 89.3$\pm$0.5 & \underline{96.9$\pm$0.0} & 79.9$\pm$0.5 & \underline{95.0$\pm$0.0} & 82.8$\pm$0.4 \\
H2H-GCN-MST & - & - & 90.5$\pm$0.3 & 88.6$\pm$0.4 & 91.4$\pm$0.7 & 78.6$\pm$0.6 & 92.4$\pm$0.3 & 81.3$\pm$0.5  \\ 
H2H-GCN-HC & - & - & 92.1$\pm$0.2 & 88.9$\pm$0.4 & 92.6$\pm$0.4 & 79.8$\pm$0.2 & 92.5$\pm$0.5 & 81.7$\pm$0.4\\
H2H-GCN-Wavelet  & \underline{97.2$\pm$0.5} ($\uparrow$) & 87.9$\pm$1.8 & 96.4$\pm$0.4 & 90.1$\pm$0.4 ($\uparrow$) & 95.2$\pm$0.1 & 80.2$\pm$0.5 ($\uparrow$) & 94.7$\pm$0.2 & 82.4$\pm$0.3\\
\midrule
HGCN-Alg.~\ref{alg:Co_HD} & 93.2$\pm$0.6 ($\uparrow$) & 87.9$\pm$0.7 ($\uparrow$) &  93.7$\pm$0.2 ($\uparrow$) & 89.9$\pm$0.4 & 94.1$\pm$0.7 & \underline{81.7$\pm$0.2} ($\uparrow$) & 93.1$\pm$0.1 ($\uparrow$) & \underline{82.9$\pm$0.3} ($\uparrow$)  \\
HGCN-Alg.~\ref{alg:wavelet_co-HD} &   \textbf{98.4}$\pm$0.4 ($\uparrow$) &  \textbf{89.4}$\pm$0.3 ($\uparrow$) &  \textbf{97.2}$\pm$0.1 ($\uparrow$) &  \textbf{92.1}$\pm$0.3 ($\uparrow$) &  \textbf{97.2}$\pm$0.2 ($\uparrow$) &  \textbf{83.6}$\pm$0.4 ($\uparrow$) &  \textbf{96.9}$\pm$0.3 ($\uparrow$) &  \textbf{83.9}$\pm$0.2 ($\uparrow$)\\
\bottomrule
\end{tabular} 
}
\label{tab:hgcn_wavelet}
\end{center}
\end{table*}

\subsection{Ablation study: effect of post-hoc wavelet filtering vs. iterative integration}

The main difference between Alg.~\ref{alg:Co_HD} and Alg.~\ref{alg:wavelet_co-HD} lies in the inclusion of a filtering step at each iteration of Alg.~\ref{alg:wavelet_co-HD}, designed to suppress noise and other undesired components of the data. This raises the question of whether applying wavelet filtering once Alg.~\ref{alg:Co_HD} has converged could also be beneficial.

To assess whether applying wavelet filtering as a post-processing step after the convergence of Alg.~\ref{alg:Co_HD} is beneficial, we evaluate a variant denoted as Alg.~\ref{alg:Co_HD}-Wavelet.
Tab.~\ref{tab:classification_using_wavelet_after_convergence} demonstrates that this post-hoc filtering yields marginal improvements over the unfiltered version (Alg.~\ref{alg:Co_HD}). 
However, the performance remains consistently lower than that of the fully integrated approach in Alg.~\ref{alg:wavelet_co-HD}, where wavelet filtering is applied iteratively during the learning process. This indicates that while post-processing offers slight gains, applying the filtering step within the iterative scheme is more effective for hierarchical representation learning.

\begin{table*}[t]
\caption{Comparison of classification accuracy across word-document and scRNA-seq datasets for three variants: Alg.~\ref{alg:Co_HD}, Alg.~\ref{alg:Co_HD}-Wavelet (post-hoc wavelet filtering after convergence), and Alg.~\ref{alg:wavelet_co-HD} (wavelet filtering applied iteratively during learning). 
Arrows $(\uparrow)$ indicate improvements.
We see that while post-hoc filtering offers minor improvements, iterative integration yields consistently better performance.
} 
\begin{center}
\resizebox{0.95\textwidth}{!}{%
\begin{tabular}{lccccccccccr}
\toprule  
   &  BBCSPORT &  TWITTER & CLASSIC & AMAZON & &ZEISEL & CBMC \\
\midrule
Alg.~\ref{alg:Co_HD}  & 96.7$\pm$0.3 & 74.1$\pm$0.5 & 97.3$\pm$0.2 & 94.0$\pm$0.4 & & 90.1$\pm$0.4 & 86.7$\pm$0.5\\
Alg.~\ref{alg:Co_HD}-Wavelet & 96.8$\pm$0.3 ($\uparrow$) & 74.1$\pm$0.4 & 97.3$\pm$0.3 & 94.0$\pm$0.3 & & 91.1$\pm$0.4 ($\uparrow$) & 88.2$\pm$0.6 ($\uparrow$)\\ 
\midrule
Alg.~\ref{alg:wavelet_co-HD}  & 97.3$\pm$0.5 & 76.7$\pm$0.7 & 97.6$\pm$0.1 & 94.2$\pm$0.2 & & 94.0$\pm$0.6 & 93.3$\pm$0.7\\
\bottomrule
\end{tabular} 
}
\label{tab:classification_using_wavelet_after_convergence}
\end{center}
\end{table*}

\subsection{Sparse approximation analysis via haar coefficient across iterations }
\begin{figure*}[t]
    \centering
     \includegraphics[width=1\textwidth]{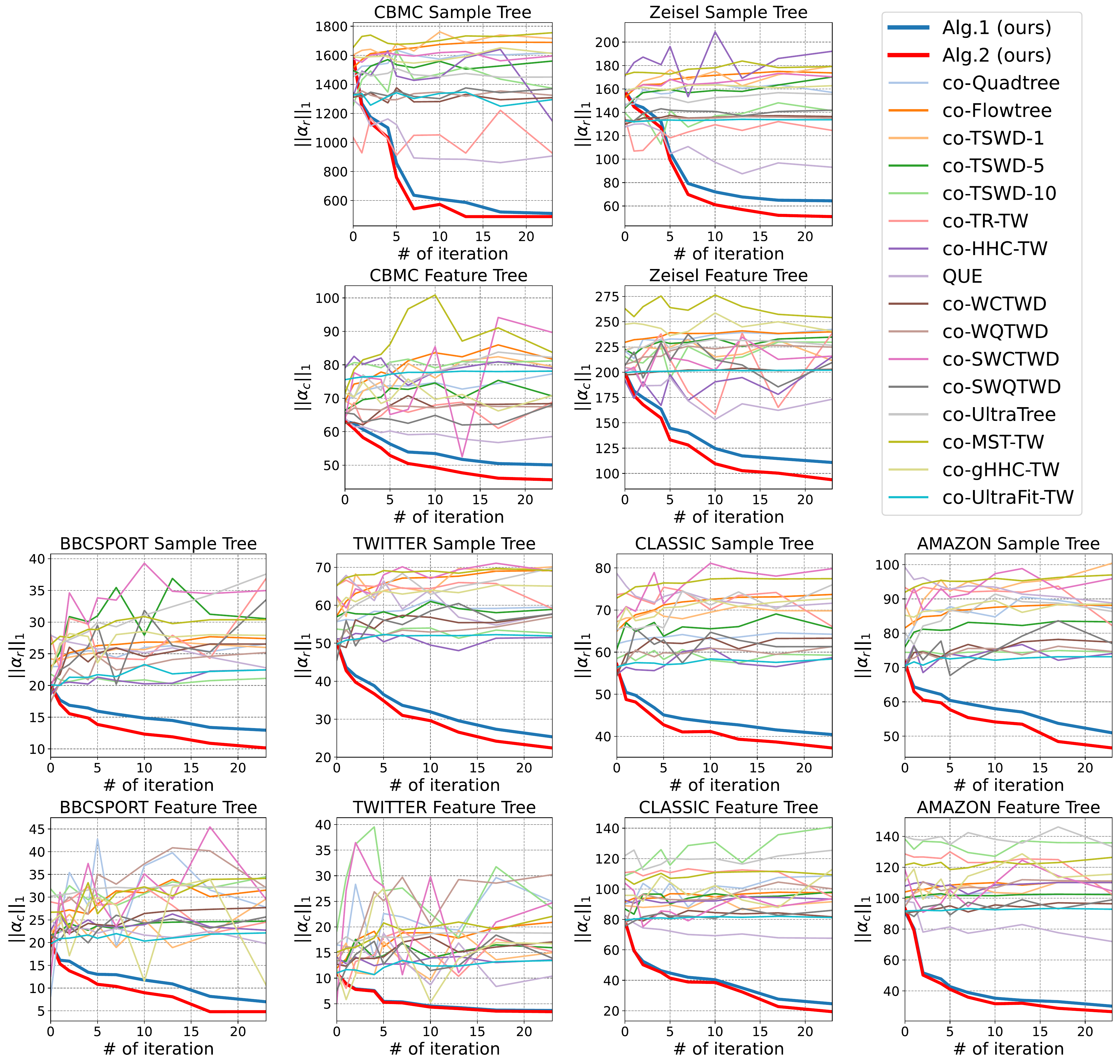}
    \caption{
        Evolution of the $L_1$ norm of the Haar coefficients over iterations for all methods. Our methods exhibit a consistent decrease in the $L_1$ norm, indicating improved sparse approximation through joint hierarchical learning. Competing methods show less stable behavior, with many failing to reach convergence within 25 iterations.}
    \label{fig:sparse_approximation}
\end{figure*}

Fig.~\ref{fig:sparse_approximation} shows the evolution of the $L_1$ norm of the Haar coefficients over iterations for all methods. 
Our methods exhibit a consistent and monotonic decrease in the $L_1$ norm as the iterations progress, reaching convergence. 
It reflects the sparse representations as a result of the joint hierarchical representation learning of both samples and features. Notably, sparsity is not explicitly enforced by our algorithm but emerges naturally from the learning process.
In contrast, the competing methods are less stable over iterations. Some show slight reductions in the $L_1$ norm, but the decrease is not consistent and, in many cases, convergence is not reached. Even among those that converge, the final $L_1$ norm values are higher than those attained by our methods, indicating less effective sparse approximation.

\subsection{Classification accuracy over iterations}

\begin{figure*}[t]
    \centering
     \includegraphics[width=1\textwidth]{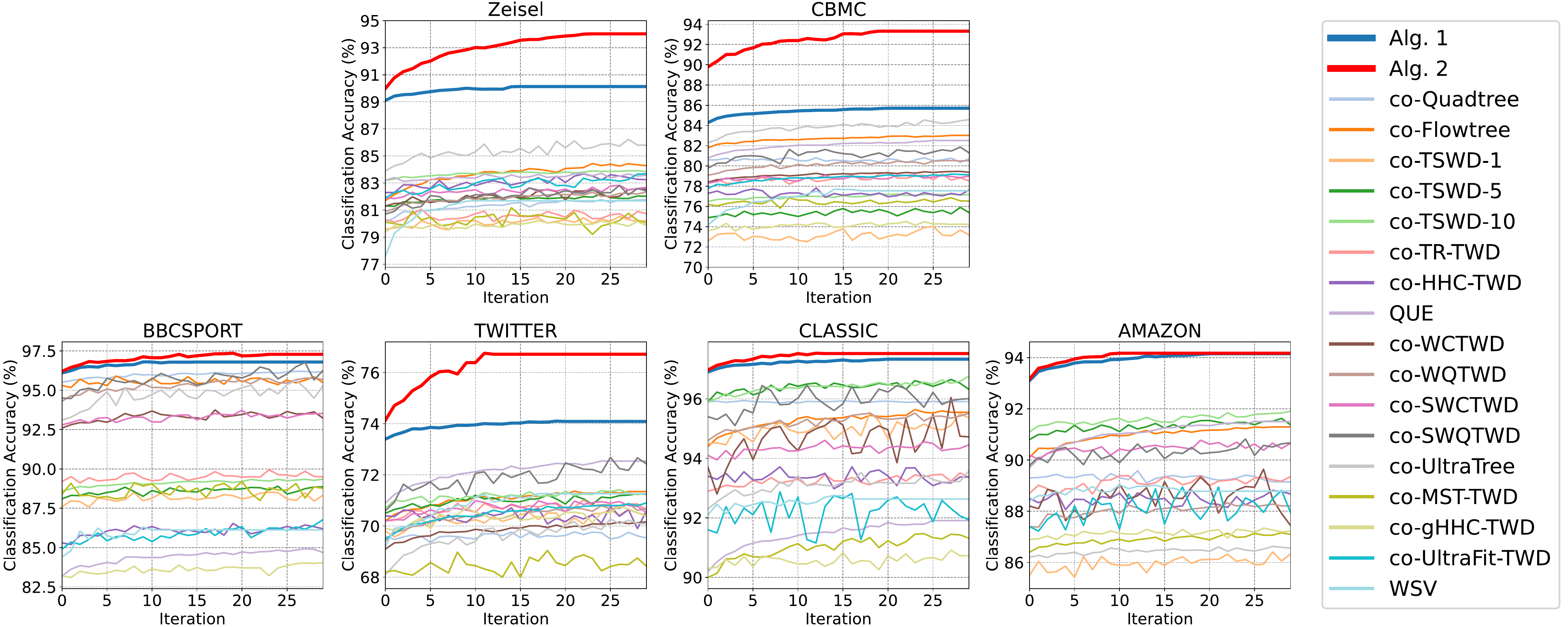}
    \caption{
        Classification accuracy over iterations for scRNA-seq and word-document datasets. Our method consistently improves accuracy across iterations and converges rapidly, outperforming both its non-iterative variant and all competing baselines. While some baselines benefit from iterative refinement, they show less consistent improvement and lower final accuracy.
    }
    \label{fig:classification}
\end{figure*}

We report the document and single-cell classification accuracy at each iteration for our method and the competing baselines. 
Fig.~\ref{fig:classification} presents the results on both scRNA-seq and word-document datasets. Our method shows consistent improvement in classification accuracy over iterations, outperforming the initial performance after the first half iteration of Alg.~\ref{alg:Co_HD}, which corresponds to using TWD without iterative refinement (see Tab.~\ref{tab:classification_intitial_TW}). This demonstrates the effectiveness of our iterative learning scheme in both data domains.

Moreover, our method consistently outperforms all competing baselines across iterations. While some baseline methods also benefit from iterative refinement, showing higher accuracy than their original TWD variants reported in Tab.~\ref{tab:classification_intitial_TW}, these gains are not uniformly observed. In addition, methods such as Co-Quadtree, Co-TSWD, Co-TR-TWD, and Co-HHC-TWD do not exhibit consistent improvements across all tasks, likely due to random sampling and initialization in their tree construction procedures.

Importantly, we observe that our method converges within a few iterations. While QUE and WSV also reach convergence, their final accuracy remains lower than ours.
This highlights a key distinction: our method explicitly learns hierarchical representation learning by jointly refining the hierarchical structures of samples and features. It is further enhanced by the wavelet filtering, leading to improved classification performance.

\subsection{Runtime analysis}
\begin{figure*}[h]
    \centering
     \includegraphics[width=0.9\textwidth]{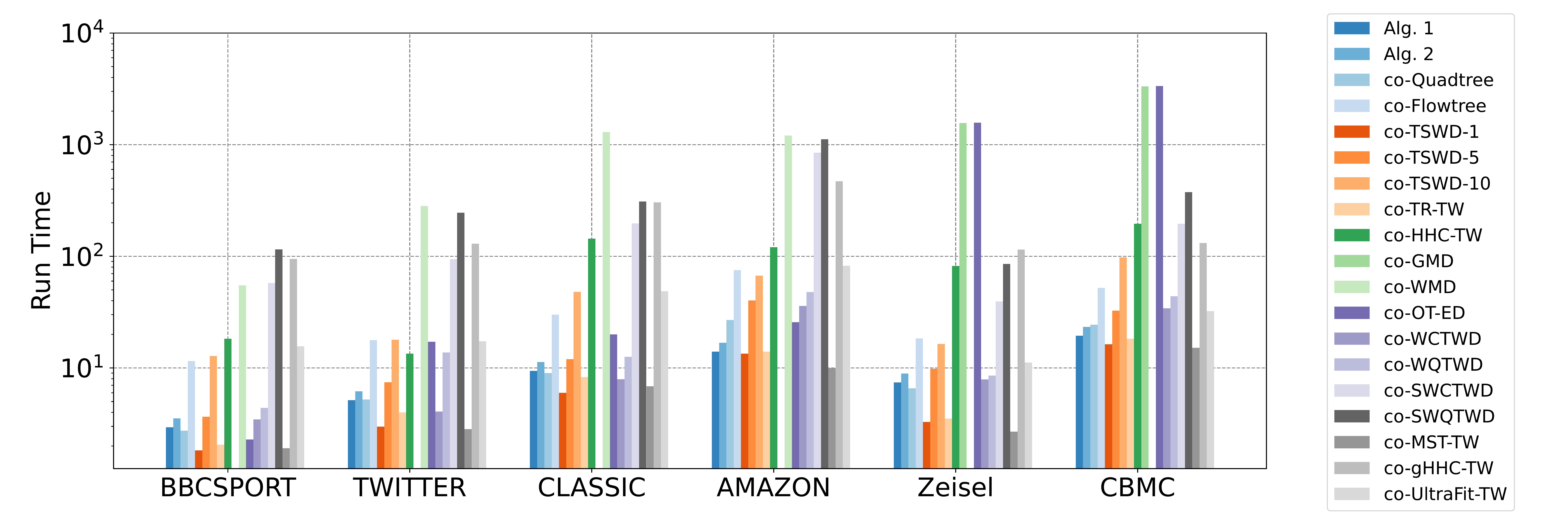}
    \caption{Runtime comparison of our methods and competing baselines. 
    }
    \label{fig:time}
\end{figure*}

Fig.~\ref{fig:time} presents the comparison of runtime between our methods and the competing baselines. Co-WMD and Co-GMD correspond to the WSV variants initialized with WMD and GMD, respectively. 
While our methods show slightly higher runtime than Co-TSWD-1 and Co-TR-TWD, they achieve significantly better sparse approximation and classification accuracy across all baselines (see Tab.~\ref{tab:sparse_approximation} and Tab.~\ref{tab:classification}). This trade-off demonstrates that our approach remains computationally efficient while providing superior performance.

\subsection{Effect of Sinkhorn regularization in the iterative learning scheme}

Our method adopts a snowflake-based regularization function \citep{mishne2019co}, which promotes smoothness across both rows and columns of the data matrix. T
his regularizer increases monotonically over $[0, \infty)$ and penalizes small differences more strongly, encouraging local consistency. 
While effective, other regularization strategies may also be considered. One notable alternative is entropic regularization \citep{cuturi2013sinkhorn}, which solves the OT problem using matrix scaling.

To explore this, we implement the entropic regularization using the Sinkhorn solver \citep{flamary2021pot} within our iterative learning framework. We denote these variants as Alg.~\ref{alg:Co_HD}-Sinkhorn and Alg.~\ref{alg:wavelet_co-HD}-Sinkhorn.
Tab.~\ref{tab:classification_using_sinkhorn} reports the classification results on scRNA-seq and word-document datasets using the Sinkhorn regularization. In all cases, incorporating the Sinkhorn penalty leads to noticeably lower performance compared to the original versions of our algorithms. The classification accuracy degrades, indicating that the entropic regularization may overly smooth or distort the learned hierarchical structures. This suggests that the snowflake regularization is more suitable in our setting, where the goal is to represent fine-grained hierarchical relationships across both samples and features.

\begin{table*}[t]
\caption{Classification accuracy on scRNA-seq and word-document datasets using Sinkhorn-regularized variants of our methods. ``-Sinkhorn'' denotes the use of entropic regularization in place of the snowflake penalty. Across all datasets, the Sinkhorn-based variants perform worse than the original methods, highlighting the effectiveness of the snowflake regularization.
} 
\begin{center}
\resizebox{0.95\textwidth}{!}{%
\begin{tabular}{lccccccccccr}
\toprule  
   &  BBCSPORT &  TWITTER & CLASSIC & AMAZON & &ZEISEL & CBMC \\
\midrule
Alg.~\ref{alg:Co_HD}  & 96.7$\pm$0.3 & 74.1$\pm$0.5 & 97.3$\pm$0.2 & 94.0$\pm$0.4 & & 90.1$\pm$0.4 & 86.7$\pm$0.5\\
Alg.~\ref{alg:Co_HD}-Sinkhorn & 90.3$\pm$0.5 & 70.9$\pm$0.4 & 94.0$\pm$0.5 & 91.2$\pm$0.5 &  & 86.3$\pm$0.5 & 82.4$\pm$0.6\\ 
\midrule
Alg.~\ref{alg:wavelet_co-HD}  & 97.3$\pm$0.5 & 76.7$\pm$0.7 & 97.6$\pm$0.1 & 94.2$\pm$0.2 & & 94.0$\pm$0.6 & 93.3$\pm$0.7 \\
Alg.~\ref{alg:wavelet_co-HD}-Sinkhorn & 94.2$\pm$0.4 & 73.8$\pm$0.5 & 95.4$\pm$0.3 & 91.5$\pm$0.1 & & 89.7$\pm$0.5 & 85.9$\pm$0.6  \\ 
\bottomrule
\end{tabular} 
}
\label{tab:classification_using_sinkhorn}
\end{center}
\end{table*}

\subsection{Co-Clustering performance on gene expression data}

\begin{table*}[t]
\caption{Clustering performance on gene expression datasets (Breast Cancer and Leukemia) evaluated using Clustering Accuracy (CA), Normalized Mutual Information (NMI), and Adjusted Rand Index (ARI). Our methods (Alg.~\ref{alg:Co_HD} and Alg.~\ref{alg:wavelet_co-HD}) outperform all baselines. Alg.~\ref{alg:wavelet_co-HD}, which incorporates wavelet filtering, achieves the highest overall performance. Results for BCOT, CCOT, CCOT-GW, and COOT are reported from the work of Fettal et al. \citep{fettal2022efficient}. OOM indicates methods that ran out of memory.
}
\begin{center}
\resizebox{0.9\textwidth}{!}{%
\begin{tabular}{lccccccccc}
\toprule
Dataset & \multicolumn{3}{c}{Breast Cancer} && \multicolumn{3}{c}{Leukemia} \\
 \cmidrule{2-4} \cmidrule{6-8}     
Evaluation Metric & CA & NMI & ARI & &  CA & NMI & ARI \\
\midrule
QUE   & 71.9\scriptsize{$\pm$1.2} & 45.3\scriptsize{$\pm$0.7} & 39.8\scriptsize{$\pm$0.4} & &72.3\scriptsize{$\pm$0.5} & 56.6\scriptsize{$\pm$0.6} & 42.9\scriptsize{$\pm$0.4} \\
WSV & 72.4\scriptsize{$\pm$3.6} & 33.8\scriptsize{$\pm$2.2} & 39.0\scriptsize{$\pm$0.8} & &64.3\scriptsize{$\pm$4.1} & 61.7\scriptsize{$\pm$2.3} & 44.8\scriptsize{$\pm$5.1} \\
Tree-WSV & 73.1\scriptsize{$\pm$1.8} & 36.7\scriptsize{$\pm$1.7} & 41.5\scriptsize{$\pm$0.9} & & 66.2\scriptsize{$\pm$3.0} & 60.4\scriptsize{$\pm$1.8} & 42.6\scriptsize{$\pm$3.1} \\ 
BCOT$^*$      & 76.9\scriptsize{$\pm$0.0} & 37.2\scriptsize{$\pm$0.0} & 26.7\scriptsize{$\pm$0.0} & & 71.2\scriptsize{$\pm$5.4} & 59.6\scriptsize{$\pm$6.9} & 39.9\scriptsize{$\pm$6.3} \\
$\text{BCOT}_\lambda$$^*$ & 84.6\scriptsize{$\pm$0.0} & 48.3\scriptsize{$\pm$0.0} & 46.0\scriptsize{$\pm$0.0} & &80.9\scriptsize{$\pm$3.8} & 70.9\scriptsize{$\pm$4.1} & 55.3\scriptsize{$\pm$3.3} \\
CCOT$^*$      & OOM & OOM & OOM & & 40.6\scriptsize{$\pm$0.0} & 0.0\scriptsize{$\pm$0.0} & 0.0\scriptsize{$\pm$0.0} \\
CCOT-GW$^*$   & OOM & OOM & OOM & & OOM & OOM & OOM \\
COOT$^*$      & 63.1\scriptsize{$\pm$5.2} & 5.4\scriptsize{$\pm$8.7} & -1.2\scriptsize{$\pm$2.9} & & 36.2\scriptsize{$\pm$2.7} & 14.0\scriptsize{$\pm$3.6} & 5.4\scriptsize{$\pm$3.2} \\
$\text{COOT}_\lambda$$^*$ & 61.5\scriptsize{$\pm$0.0} & 5.4\scriptsize{$\pm$0.0} & 2.2\scriptsize{$\pm$0.0} & & 32.5\scriptsize{$\pm$3.3} & 8.7\scriptsize{$\pm$2.7} & -0.5\scriptsize{$\pm$2.1} \\
\midrule
Alg.~\ref{alg:Co_HD} &   85.6\scriptsize{$\pm$0.7} & 49.2\scriptsize{$\pm$0.5} & 50.4\scriptsize{$\pm$0.9} & &  75.8\scriptsize{$\pm$0.9} & 64.2\scriptsize{$\pm$1.0}  & 54.2\scriptsize{$\pm$1.1} \\
Alg.~\ref{alg:wavelet_co-HD}  & 87.2\scriptsize{$\pm$0.4} & 51.3\scriptsize{$\pm$0.3} & 57.6\scriptsize{$\pm$0.7} & &81.2\scriptsize{$\pm$1.1} & 75.4\scriptsize{$\pm$0.8} & 59.7\scriptsize{$\pm$1.2} \\ 
\bottomrule
\end{tabular}%
}
\end{center}
\label{tab:clustering_gene_expression}
\end{table*}
We further evaluate the effectiveness of our methods in clustering tasks using two gene expression datasets \citep{feltes2019cumida}. The first is a breast cancer (BC) dataset containing 42,945 gene expression values across 26 samples divided into two classes. The second is a leukemia (LEU) dataset with 22,283 genes measured across 64 patients divided into five classes. We apply Alg.~\ref{alg:Co_HD} and Alg.~\ref{alg:wavelet_co-HD} to the data and perform $k$-means clustering \citep{hartigan1979algorithm} on the resulting sample TWD matrices to obtain clustering assignments.
We evaluate the OT-based co-clustering baselines including: QUE \citep{ankenman2014geometry}, WSV \citep{huizing2022unsupervised}, Tree-WSV \citep{dusterwald2025fast}, biclustering with optimal transport (BCOT and $\text{BCOT}_\lambda$) \citep{fettal2022efficient}, Co-clustering through OT (CCOT and CCOT-GW) \citep{laclau2017co}, and Co-optimal transport (COOT and $\text{COOT}_\lambda$) \citep{titouan2020co}. We follow the evaluation setup \citep{fettal2022efficient}, reporting Clustering Accuracy (CA), Normalized Mutual Information (NMI) \citep{cai2008non}, and Adjusted Rand Index (ARI) \citep{hubert1985comparing}.
The results of BCOT, CCOT, CCOT-GW, and COOT are reported from the work of Fettal et al. \citep{fettal2022efficient}.

Tab.~\ref{tab:clustering_gene_expression} shows that our methods outperform all competing baselines. Alg.~\ref{alg:Co_HD} already achieves strong performance, while Alg.~\ref{alg:wavelet_co-HD} further improves upon it with consistently higher CA, NMI, and ARI scores. In particular, the improvements are pronounced on the Leukemia dataset, where our method significantly exceeds the baselines. Some methods, such as CCOT and CCOT-GW, run out of memory (OOM), and others like COOT and $\text{COOT}_\lambda$ produce poor clustering results. These findings demonstrate the robustness and effectiveness of our method, especially when enhanced by wavelet filtering at each iteration.

\subsection{Incorporating fixed hierarchical distances in the iterative learning scheme for HGCNs}

\begin{algorithm}[t]
\caption{Incorporating Fixed Hierarchical Distance for One Mode in Iterative Learning }
\label{alg:fix_M_in_iteration}
\begin{algorithmic}
\State \textbf{Input:} Data matrix $\mathbf{X}\in\mathbb{R}_+^{n \times m}$,  $\mathbf{M}_c\in\mathbb{R}^{m \times m}$ and $\mathbf{M}_r\in\mathbb{R}^{n \times n}$, $\gamma_r$ and $\gamma_c$, thresholds $\vartheta_c$ and $\vartheta_r$
\State \textbf{Output:} Trees $\mathcal{T}(\widetilde{\mathbf{W}}_r^{(l)})$ and $\mathcal{T}(\widetilde{\mathbf{W}}_c^{(l)})$, and TWDs $\widetilde{\mathbf{W}}_r^{(l)}$ and $\widetilde{\mathbf{W}}_c^{(l)}$
\vspace{2 mm}
\State $l \gets 0$, $\bX^{(0)} \gets \bX$, $\bZ^{(0)} \gets \mathbf{X}^\top$, $\widetilde{\mathbf{W}}_c^{(0)} \gets \mathbf{M}_c$ \Comment{Initialization}
\State \textbf{repeat}
    \State \quad $\bX^{(l+1)} \gets \Psi(\bX^{(l)}; \mathcal{T}(\widetilde{\mathbf{W}}_c^{(l)}))$ \Comment{Tree haar wavelet filtering with updated tree}
    \State \quad $\bZ^{(l+1)} \gets \Psi(\bZ^{(l)}; \mathcal{T}(\mathbf{M}_r))$ \Comment{Tree haar wavelet filtering with fixed tree}
    \State \quad $\widetilde{\mathbf{X}}_r^{(l+1)} \gets \left\{\mathbf{r}_i^{(l+1)} = \left(\bX_{i,:}^{(l+1)}\right)^\top / \|\bX_{i,:}^{(l+1)}\|_1\right\}$
    \State \quad $\widetilde{\mathbf{X}}_c^{(l+1)} \gets \left\{\mathbf{c}_j^{(l+1)} = \left(\bZ_{j,:}^{(l+1)}\right)^\top / \|\bZ_{j,:}^{(l+1)}\|_1\right\}$
    \State \quad $\widetilde{\mathbf{W}}_r^{(l+1)} \gets \Phi(\widetilde{\mathbf{X}}_r^{(l+1)}; \mathcal{T}(\widetilde{\mathbf{W}}_c^{(l)}))$ \Comment{Iterative update with updated tree}
    \State \quad $\widetilde{\mathbf{W}}_c^{(l+1)} \gets \Phi(\widetilde{\mathbf{X}}_c^{(l+1)}; \mathcal{T}(\mathbf{M}_r))$ \Comment{Iterative update with fixed tree}
    \State \quad $l \gets l + 1$
\State \textbf{until} convergence
\end{algorithmic}
\end{algorithm}

\begin{table*}[t]
\caption{Link prediction and node classification performance on hierarchical graph datasets using different variants of the proposed HGCN framework. HGCN-Alg.~\ref{alg:Co_HD} uses only the reference update without filtering. HGCN-Alg.~\ref{alg:wavelet_co-HD} updates both filtering and referencing functions adaptively. HGCN-Alg.~\ref{alg:fix_M_in_iteration} applies filtering with a fixed sample distance matrix $\mathbf{M}_r = d_H$. Results show that while fixing $\mathbf{M}_r$ improves over the reference-only variant, the superior performance is achieved when both filtering and referencing steps are adaptively updated throughout the iterations.
}
\begin{center}
\resizebox{1\textwidth}{!}{%
\begin{tabular}{lccccccccccccccr}
\toprule
  & \multicolumn{2}{c}{DISEASE} & \multicolumn{2}{c}{AIRPORT} & \multicolumn{2}{c}{PUBMED} &  \multicolumn{2}{c}{CORA} \\
 \cmidrule{2-9} 
   &  LP & NC &  LP & NC &   LP & NC  &  LP & NC \\
\midrule
HGCN-Alg.~\ref{alg:Co_HD} & 93.2$\pm$0.6 & 87.9$\pm$0.7 &  93.7$\pm$0.2 & 89.9$\pm$0.4 & 94.1$\pm$0.7 & 81.7$\pm$0.2  & 93.1$\pm$0.1 & 82.9$\pm$0.3  \\
HGCN-Alg.~\ref{alg:fix_M_in_iteration} & 95.1$\pm$0.5 & 88.2$\pm$0.9 & 95.9$\pm$0.1 & 90.7$\pm$0.3 & 95.7$\pm$0.4 & 82.0$\pm$0.3 & 95.3$\pm$0.1 & 83.2$\pm$0.4\\
HGCN-Alg.~\ref{alg:wavelet_co-HD} &   98.4$\pm$0.4 &  89.4$\pm$0.3 &  97.2$\pm$0.1 &  92.1$\pm$0.3 &  97.2$\pm$0.2 &  83.6$\pm$0.4 &  96.9$\pm$0.3 &  83.9$\pm$0.2\\
\bottomrule
\end{tabular} 
}
\label{tab:fix_M_in_iteration}
\end{center}
\end{table*}

In Sec.~\ref{sec:exp_hgcn}, we demonstrated how prior knowledge of a hierarchical structure on one of the modes can be incorporated by initializing the sample distance matrix $\mathbf{M}_r$ with the shortest-path distances $d_H$ induced by a given hierarchical graph $H$. 
This initialization was evaluated in the context of hierarchical graph data using the HGCN framework. Beyond initialization, another way to incorporate $d_H$ into our iterative learning scheme is to fix the sample pairwise distance matrix as $\mathbf{M}_r = d_H$ during the filtering step $\Psi(\cdot)$ and the joint reference computation $\Phi(\cdot)$.

However, it is important to distinguish between the two variants of our learning scheme. The iterative algorithm in Alg.~\ref{alg:Co_HD} only involves the reference computation $\Phi(\cdot)$. Therefore, if $\mathbf{M}_r$ is fixed within $\Phi(\cdot)$, no update occurs after the first iteration, and the procedure halts. In contrast, Alg.~\ref{alg:wavelet_co-HD} includes an additional filtering step $\Psi(\cdot)$ that continues to evolve even when the used distance in $\Phi(\cdot)$ remains fixed. As a result, fixing $\mathbf{M}_r$ is only meaningful within Alg.~\ref{alg:wavelet_co-HD}, where the iterative updates can still proceed. We summarize this approach in Alg.~\ref{alg:fix_M_in_iteration} as an alternative variant that allows the incorporation of external hierarchical information.
This approach is also evaluated on hierarchical graph data using the HGCNs framework.

Tab.~\ref{tab:fix_M_in_iteration} reports the link prediction and node classification results when the sample distance matrix $\mathbf{M}_r$ is fixed in the iterative learning scheme (denoted as HGCN-Alg.~\ref{alg:fix_M_in_iteration}). 
Compared to HGCN-Alg.~\ref{alg:Co_HD}, fixing $\mathbf{M}_r$ and applying filtering improves performance. 
This shows the benefit of incorporating the hierarchical structure during filtering. 
However, HGCN-Alg.~\ref{alg:wavelet_co-HD}, which adaptively updates both filtering and referencing functions via the evolving $\widehat{\mathbf{W}}_r^{(l)}$ along the iteration, consistently outperforms the fixed-distance variant. 
This highlights the importance of mutual adaptation between the sample and feature structures during the iterative learning process, where the joint updates allow the hierarchical relations to be refined jointly across iterations.

\subsection{Evaluating the preprocessing step with additional backbones}

To evaluate the effectiveness of our method as a preprocessing step across different neural network backbones, we extend our experiments to include GCN and GraphSAGE, in addition to the hyperbolic models considered in Tab.~\ref{tab:hgcn}. Tab.~\ref{tab:additional_backbone} reports link prediction and node classification results, where models initialized with our learned hierarchical representations (denoted ``-Alg.~\ref{alg:wavelet_co-HD}'') consistently outperform their baselines. Improvements are indicated with an upward arrow ($\uparrow$). These findings demonstrate the broad applicability of our approach across diverse neural architectures. Among all models, HGCN-Alg.~\ref{alg:wavelet_co-HD} achieves the highest overall performance, which we attribute to the advantages of hyperbolic representations in modeling hierarchical structures, distinguishing HGCNs from their Euclidean counterparts.

\begin{table}[t]
\caption{Link prediction  and node classification  results for GCN, \textsc{Graph}SAGE, and HGCNs, with and without initialization using our learned hierarchical representations (“-Alg.~\ref{alg:wavelet_co-HD}”).}
\begin{center}
\resizebox{1\textwidth}{!}{%
\begin{tabular}{lccccccccccccccr}
\toprule
  & \multicolumn{2}{c}{DISEASE} & \multicolumn{2}{c}{AIRPORT} & \multicolumn{2}{c}{PUBMED} &  \multicolumn{2}{c}{CORA} \\
 \cmidrule{2-9} 
   &  LP & NC &  LP & NC &   LP & NC  &  LP & NC \\
\midrule
GCN  & 64.7$\pm$0.5 &  69.7$\pm$0.4 &  89.3$\pm$0.4 & 81.4$\pm$0.6 & 91.1$\pm$0.5 & 78.1$\pm$0.2 & 90.4$\pm$0.2 & 81.3$\pm$0.3 \\ 
GCN-Alg.~\ref{alg:wavelet_co-HD}  & 68.1$\pm$0.4 ($\uparrow$) & 71.2$\pm$0.5 ($\uparrow$) & 90.6$\pm$0.2 ($\uparrow$) & 83.0$\pm$0.9 ($\uparrow$) & 90.5$\pm$0.3 & 78.7$\pm$0.1 ($\uparrow$) & 91.2$\pm$0.4 ($\uparrow$) ($\uparrow$) & 81.9$\pm$0.2 \\
\midrule
\textsc{Graph}SAGE  & 65.9$\pm$0.3  &  69.1$\pm$0.6 &  90.4$\pm$0.5 & 82.1$\pm$0.5 & 86.2$\pm$1.0 & 77.4$\pm$2.2 & 85.5$\pm$0.6 & 77.9$\pm$2.4  \\ 
\textsc{Graph}SAGE-Alg.~\ref{alg:wavelet_co-HD}  & 67.8$\pm$0.1 ($\uparrow$) & 70.5$\pm$0.3 ($\uparrow$) & 90.2$\pm$0.4  & 83.7$\pm$0.5 ($\uparrow$) & 90.3$\pm$1.1 ($\uparrow$) &  78.8$\pm$1.7 ($\uparrow$) & 89.6$\pm$0.8 ($\uparrow$)& 80.5$\pm$1.6 ($\uparrow$) \\ 
\midrule
HGCNs &  90.8$\pm$0.3  & 74.5$\pm$0.9 & 96.4$\pm$0.1 & 90.6$\pm$0.2 & 96.3$\pm$0.0 & 80.3$\pm$0.3 & 92.9$\pm$0.1 & 79.9$\pm$0.2 \\
H2H-GCN  & 97.0$\pm$0.3 & 88.6$\pm$1.7  & 96.4$\pm$0.1 & 89.3$\pm$0.5 & 96.9$\pm$0.0 & 79.9$\pm$0.5 & 95.0$\pm$0.0 & 82.8$\pm$0.4 \\
HGCN-Alg.~\ref{alg:wavelet_co-HD} &   \textbf{98.4}$\pm$0.4 ($\uparrow$) &  \textbf{89.4}$\pm$0.3 ($\uparrow$) &  \textbf{97.2}$\pm$0.1 ($\uparrow$) &  \textbf{92.1}$\pm$0.3 ($\uparrow$)&  \textbf{97.2}$\pm$0.2 ($\uparrow$) &  \textbf{83.6}$\pm$0.4 ($\uparrow$) &  \textbf{96.9}$\pm$0.3 ($\uparrow$) &  \textbf{83.9}$\pm$0.2 ($\uparrow$)\\
\bottomrule
\end{tabular} 
}
\label{tab:additional_backbone}
\end{center}
\end{table}

\begin{table}[t]
\caption{Comparison of representation quality for Independent MST, Co-MST-TWD, and our full method (Alg.~\ref{alg:wavelet_co-HD}), measured by the $L_1$ norm of Haar coefficients (lower is better). Results are reported as $\mathrm{samples}$ / $\mathrm{features}$ for each dataset. Lower values indicate that the learned hierarchical representations more effectively capture the hierarchical information across the two data modes.}
\begin{center}
\resizebox{1\textwidth}{!}{%
\begin{tabular}{lccccccccccr}
\toprule
   &  BBCSPORT &  TWITTER & CLASSIC & AMAZON & & ZEISEL & CBMC  \\
\midrule
Independent MST & 37.9 / 42.8 & 82.4 /  31.1& 83.6 / 131.4 & 109.6 /  144.1 & & 194.3 /  281.5 & 1928.7 /  97.6  \\
Co-MST-TWD   & 30.4 / 34.1 & 69.1 / 22.1 & 77.5 / 109.3& 97.0 / 126.4& &179.1 / 254.0 
 &1755.3 / 83.7\\
Alg.~\ref{alg:wavelet_co-HD}   & \textbf{10.1} / \textbf{4.8} & \textbf{22.4} / \textbf{3.4} &  \textbf{37.2} / \textbf{19.4} & \textbf{46.6} / \textbf{26.6} & &  \textbf{50.9} / \textbf{93.7} & \textbf{489.4} / \textbf{45.6} \\
\bottomrule
\end{tabular}}
\label{tab:indep_tree}
\end{center}
\end{table}

\begin{figure*}[t]
    \centering
     \includegraphics[width=1\textwidth]{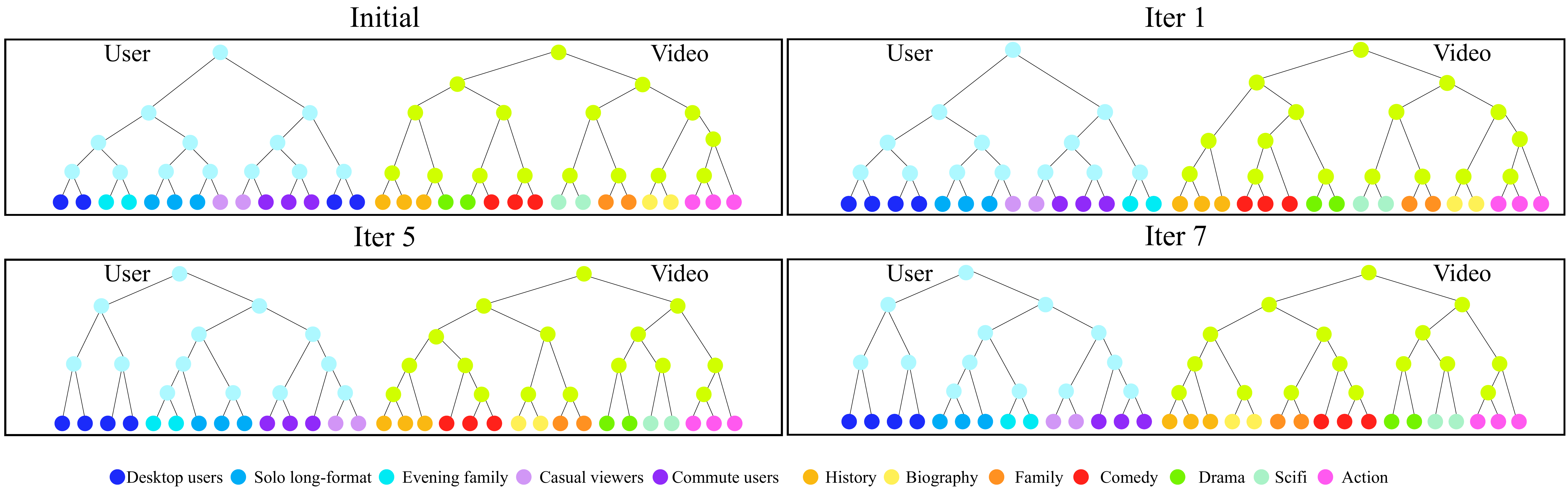}
    \caption{
        Stepwise hierarchical refinement on the toy user–video recommendation in App.~\ref{app:toy_example}.
        }
    \label{fig:stepwise_visualization}
\end{figure*}

\subsection{Discussion and comparison with other tree distance metrics}

In our framework, TWD plays a central role due to its ability to compare probability distributions supported on hierarchical structures. This is essential for our alternating, joint-learning process, where the hierarchical structure from one mode (e.g., samples) is used to inform and refine the hierarchy in the other mode (e.g., features), and vice versa. That is, the tree structure provides the foundation for expressing hierarchy, while the Wasserstein metric provides the mechanism for distributional comparison and transfer from one mode to the other. This cross-mode interaction is central to our method and relies on the Wasserstein nature of TWD.

TWD differs from other tree distance metrics such as minimum spanning tree (MST) \cite{alon1995graph}, neighbor joining, or low-stretch trees, which typically operate on tree structures or discrete node sets but do not directly support distribution-based comparison needed for cross-mode refinement. 
To clarify this difference, we consider two alternative setups using MST: (1) using independently constructed MSTs for samples and features without cross-mode interaction, and (2) using an MST as the tree structure in the TWD framework for our iterative learning process (Co-MST-TWD). We compare these alternatives to our full method (Alg.~\ref{alg:wavelet_co-HD}) by measuring representation quality using the $L_1$ norm of Haar coefficients. Results are shown in Tab.~\ref{tab:indep_tree} (format: $\mathrm{samples}$ / $\mathrm{features}$). We see that both alternatives yield substantially higher $L_1$ norms. These results suggest that the learned joint hierarchical representation from our iterative process better reflects the hierarchical information of the two data modes. We attribute this advantage to the specific design of our proposed method, where the TWD supports iterative cross-mode refinement, which is not feasible with independent tree metrics.

Replacing TWD with other graph-based distances is conceptually possible, but there are important considerations:
\begin{itemize}
    \item Most graph-based distances are designed to compare node positions or global connectivity in general graphs, not to measure how distributions are supported on a tree or reveal explicit nested groupings.
    \item Gromov-Wasserstein distance \cite{memoli2011gromov} could, in principle, be integrated into an iterative scheme as it compares distributions across different metric spaces or graphs. However, such approaches are much more computationally intensive.
    \item Another possible direction is to construct a graph from high-dimensional data and transform this general graph into unrolling trees \cite{chuang2022tree}. Further investigation would be needed to determine if and how such transformations can be integrated within our iterative learning process.
\end{itemize}

\subsection{Stepwise hierarchical visualizations in recommendation toy problem}

Fig.~\ref{fig:stepwise_visualization} presents the stepwise visualizations based on the toy video recommendation example in App.~\ref{app:toy_example} for hierarchical refinement. 
The iterative refinement process starts with an initial tree structure, which is unstructured since the preliminary grouping is based on initial data relationships ($\mathbf{M}_c$ or $\mathbf{M}_r$). At this point, major clusters are not well formed, and samples or features from different categories are mixed. 
After the first iteration, the algorithm uses the current tree from one mode (e.g., users) to compute a TWD for the other mode (e.g., videos). This leads to a newly induced tree that starts to separate the major categories, and major branches start to reflect groupings, but subcategory assignments are not fully sorted.
As the algorithm proceeds through subsequent iterations, it alternately refines the trees for users and videos. The updated tree for one mode informs a more accurate computation of TWD in the other mode, enabling finer divisions within the existing branches. Subcategories like action, drama, and sci-fi (for fiction videos) become more distinct and better grouped. After several iterations, the subcategories become more cohesive and consistently organized, and the tree structures stabilize. 
It illustrates how the trees evolve with each iteration and points out key groupings formed in the learned trees.

\section{Additional remarks}\label{app:additional_remark}
We provide additional remarks on our methods, including the motivation for using wavelet filtering, and the comparison with WSV and Tree-WSV.

\subsection{Motivation for tree-based wavelet filtering}

To refine the hierarchical representations learned through our iterative learning scheme, we incorporate a filtering step $\Psi(\cdot)$ at each iteration. This filtering is based on Haar wavelets \citep{mallat1989multiresolution, mallat1999wavelet, haar1911theorie, daubechies1990wavelet} induced by the learned trees.
Each sample $\bX_{i,:}^\top \in \mathbb{R}^m$ can be viewed as a signal defined over the feature tree $\mathcal{T}(\mathbf{M}_c)$, whose $m$ leaves correspond to features \citep{ortega2018graph}. 
Given a feature tree $\mathcal{T}(\mathbf{M}_c)$, we construct an orthonormal Haar basis $\mathbf{B}_c \in \mathbb{R}^{m \times m}$, as described in Sec.~\ref{sec:background} and App.~\ref{app:more_background}. 
Each sample is expanded in this basis, yielding coefficients $\bm{\alpha}_i = (\bX_{i,:} \mathbf{B}_c)^\top \in \mathbb{R}^m$, where $\bm{\alpha}_i(j) = \langle \bX_{i,:}^\top, \bm{\beta}_j \rangle$ and $\bm{\beta}_j$ is the $j$-th wavelet in $\mathbf{B}_c$.
To define a wavelet filter, we select a subset of the Haar basis vectors as follows.
For each coefficient index $j$, we compute the aggregate $L_1$ norm $\sum_{i=1}^n |\bm{\alpha}_i(j)|$ across samples and sort the indices in descending order. 
We then select the subset $\Omega$ such that the cumulative contribution $\eta_\Omega = \sum_{q \in \Omega} \sum_{i=1}^n |\bm{\alpha}_i(q)|$ exceeds a threshold $\vartheta_c > 0$. 
A similar filtering procedure can be applied to features using the Haar wavelet construct by the sample tree. 

It is important to note that this $L_1$-based filtering retains the components that contribute most to the hierarchical decomposition, as represented by the tree \citep{coifman1992entropy}.
We postulate that the residual, i.e., the remaining coefficients outside the set $\Omega$, is the variation that is not aligned with the tree structure. These components are treated as noise or nuisance terms that are irrelevant for the hierarchical representation learning.
By discarding this residual, the filter emphasizes signal content that is structurally consistent with the hierarchical structure, leading to representations that are more stable and meaningful across iterations \citep{grossmann1984decomposition, ngo2023multiresolution}.
Our filtering step adapts to the learned trees at each iteration. 
That is, the Haar wavelets are constructed from the trees $\mathcal{T}(\widehat{\mathbf{W}}_r^{(l)})$ and $\mathcal{T}(\widehat{\mathbf{W}}_c^{(l)})$  via TWD, making the filtering aligned with the evolving hierarchical representations.
%
Empirically, we show that applying the wavelet filtering in the iterative learning scheme improves performance in various tasks on hierarchical data, including link prediction and node classification on hierarchical graph data, as well as sparse approximation and metric learning on scRNA-seq and document datasets.

\subsection{Advantages of Haar wavelet filtering over Laplacian-based filtering for hierarchical data}

In Sec.~\ref{sec:haar_filtering}, we introduced a filtering step based on the Haar wavelet, where the wavelet basis is induced by the tree structure \citep{gavish2010multiscale} learned during our iterative scheme. This filtering step is designed to reflect the intrinsic hierarchical structure of the data, enhancing meaningful 
hierarchical representations while suppressing noise and other nuisance components.
An alternative filtering approach could involve using the Laplacian matrix derived from the tree graph. 
This would allow spectral filtering based on the eigendecomposition of the tree \citep{ortega2018graph, dong2020graph}, offering another way to apply filtering that incorporates the hierarchical information from the tree \citep{grone1990laplacian}.  

Consider a feature tree $\mathcal{T}(\mathbf{M}_c)$ with the affinity matrix $\mathbf{A}_c \in \mathbb{R}^{m \times m}$.
We can construct the Laplacian matrix for the feature tree by $\mathbf{L}_c = \mathbf{D}_c - \mathbf{A}_c$, where $\mathbf{D}_c$ is a diagonal degree matrix with $\mathbf{D}_c (j, j) = \sum_{i} \mathbf{A}(i,j)$. 
The eigenvectors of the Laplacian matrix are then treated as an orthonormal basis, often referred to as the graph Fourier basis \citep{chung1997spectral, stoica2005spectral, sandryhaila2013discrete}.
Subsequently, each sample can be expanded in this eigenbasis, and we denote $\widetilde{\bm{\alpha}}_i =  (\bX_{i,:}\mathbf{U}_c)^\top \in  \mathbb{R}^m$ as the vector of the expansion coefficients of the $i$-th sample, where $\mathbf{U}_c$ is the matrix consisting of eigenvectors of $\mathbf{L}_c$. 
This expansion allows the interpretation of the Laplacian expansion coefficient $\widetilde{\bm{\alpha}}_i(j)$ as the contribution of the $j$-th frequency component to the signal.
By sorting these coefficients according to the corresponding Laplacian eigenvalues, we can construct a low-pass filter (containing the first few expansions) that suppresses high-frequency components and noise while promoting signal smoothness \citep{hu2021graph}. A similar decomposition can also be applied to the features using the Laplacian matrix of the sample tree.

We evaluate this alternative filtering technique based on the Laplacian matrix of the tree graph. 
Specifically, we modify the iterative scheme in Alg.~\ref{alg:wavelet_co-HD} by replacing the Haar wavelet filtering step with the low-pass Laplacian-based filtering, and refer to this variant as ``Alg.~\ref{alg:wavelet_co-HD}-Laplacian.''
We test this variant on both document and single-cell classification tasks, as well as on hierarchical graph data for link prediction and node classification. Tab.~\ref{tab:classification_using_laplacian_filter} reports the classification accuracy for the document and single-cell tasks.
We see that compared to the wavelet-based approach, the Laplacian variant consistently underperforms, showing a noticeable drop. 
This suggests that while Laplacian filters offer a spectral interpretation, they are less effective at representing hierarchical information in our setting.
Similarly, Tab.~\ref{tab:hgcn_with_laplacian_fitlering} presents link prediction and node classification performance for integrating the Laplacian filtering into our iterative learning scheme and applied to HGCNs. Here too, we observe that HGCN-Alg.~\ref{alg:wavelet_co-HD}-Laplacian performs worse than HGCN-Alg.~\ref{alg:wavelet_co-HD}.
These results show the benefit of Haar wavelet filtering, which more directly reflects the hierarchical structure and is more effective for learning meaningful representations.

We attribute this performance gap to the differences between the two filtering methods. Laplacian-based filtering is not designed to represent hierarchical structure explicitly \citep{ortega2018graph, shuman2013emerging}. Its eigenvectors are global and do not distinguish between different levels of abstraction in the data.
In addition, from a computational perspective, computing the eigendecomposition of the graph Laplacian can be expensive: full decomposition scales cubically with the number of nodes, while partial decomposition (e.g., top $k$ eigenvectors) still incurs significant cost, often quadratic or worse in $n$ (depending on the method and graph sparsity). 
Furthermore, it has been shown that the expansion coefficients in the Haar wavelet basis decay at an exponential rate (Prop.~\ref{prop:smoothness}), in contrast to the typically polynomial rate of decay observed with Laplacian eigenfunction coefficients \citep{gavish2010multiscale}. This rapid decay reflects the ability of Haar wavelets to provide compact representations of piecewise-smooth signals, using the learned tree structure to concentrate energy in fewer coefficients.
Finally, Haar wavelets on trees yield interpretable components \citep{mallat1989multiresolution}: each wavelet coefficient corresponds to a specific scale and position within the tree, often representing differences between sibling nodes or between a node and its parent. This locality and hierarchical structure are absent in Laplacian eigenbases. Therefore, for data where a hierarchical relationship between samples or features is important, Haar wavelets provide a more structurally aligned approach compared to graph signal filtering based on the Laplacian.

We note that more advanced filtering strategies, such as those learned through neural networks \citep{kipf2016semi}, could also be explored.
We leave this extension for future work. 

\begin{table*}[t]
\caption{Classification accuracy on document and single-cell datasets. We compare the performance of the iterative method without filtering step (Alg.~\ref{alg:Co_HD}), our full method with Haar wavelet filtering (Alg.~\ref{alg:wavelet_co-HD}), and the Laplacian filtering variant (Alg.~\ref{alg:wavelet_co-HD}-Laplacian). Haar wavelet filtering consistently achieves the better performance.
} 
\begin{center}
\resizebox{0.95\textwidth}{!}{%
\begin{tabular}{lccccccccccr}
\toprule  
   &  BBCSPORT &  TWITTER & CLASSIC & AMAZON & &ZEISEL & CBMC \\
\midrule
Alg.~\ref{alg:Co_HD}  & 96.7$\pm$0.3 & 74.1$\pm$0.5 & 97.3$\pm$0.2 & 94.0$\pm$0.4 & & 90.1$\pm$0.4 & 86.7$\pm$0.5\\
Alg.~\ref{alg:wavelet_co-HD}-Laplacian & 95.2$\pm$0.7 & 72.6$\pm$0.9 & 94.4$\pm$1.0 & 91.2$\pm$0.4 & & 83.7$\pm$0.9 & 81.6$\pm$1.2 \\
Alg.~\ref{alg:wavelet_co-HD}  & 97.3$\pm$0.5 & 76.7$\pm$0.7 & 97.6$\pm$0.1 & 94.2$\pm$0.2 & & 94.0$\pm$0.6 & 93.3$\pm$0.7 \\
\bottomrule
\end{tabular} 
}
\label{tab:classification_using_laplacian_filter}
\end{center}
\end{table*}

\begin{table*}[t]
\caption{Performance of different filtering variants in our iterative learning scheme applied to hierarchical graph data within the HGCN framework for link prediction and node classification. We compare HGCNs using the hierarchical representations from Alg.~\ref{alg:Co_HD}, the variant with Laplacian filtering (HGCN-Alg.~\ref{alg:wavelet_co-HD}-Laplacian), and the full method with Haar wavelet filtering (HGCN-Alg.~\ref{alg:wavelet_co-HD}). The Haar wavelet variant consistently achieves superior performance.
}
\begin{center}
\resizebox{1\textwidth}{!}{%
\begin{tabular}{lccccccccccccccr}
\toprule
  & \multicolumn{2}{c}{DISEASE} & \multicolumn{2}{c}{AIRPORT} & \multicolumn{2}{c}{PUBMED} &  \multicolumn{2}{c}{CORA} \\
 \cmidrule{2-9} 
   &  LP & NC &  LP & NC &   LP & NC  &  LP & NC \\
\midrule
HGCN-Alg.~\ref{alg:Co_HD} & 93.2$\pm$0.6 & 87.9$\pm$0.7 &  93.7$\pm$0.2 & 89.9$\pm$0.4 & 94.1$\pm$0.7 & 81.7$\pm$0.2  & 93.1$\pm$0.1 & 82.9$\pm$0.3  \\
HGCN-Alg.~\ref{alg:wavelet_co-HD}-Laplacian & 89.4$\pm$0.3 & 79.2$\pm$1.1 & 93.1$\pm$0.4 & 88.4$\pm$0.7 & 93.5$\pm$0.4 & 80.9$\pm$0.2 & 92.7$\pm$0.2 & 81.9$\pm$0.3\\
HGCN-Alg.~\ref{alg:wavelet_co-HD} &   98.4$\pm$0.4 &  89.4$\pm$0.3 &  97.2$\pm$0.1 &  92.1$\pm$0.3 &  97.2$\pm$0.2 &  83.6$\pm$0.4 &  96.9$\pm$0.3 &  83.9$\pm$0.2\\
\bottomrule
\end{tabular} 
}

\label{tab:hgcn_with_laplacian_fitlering}
\end{center}
\end{table*}

\subsection{Comparison with Tree-WSV and the role of hierarchical representation learning}\label{app:Comparison with Tree-WSV}

While preparing this manuscript,  a new method called Tree-WSV \citep{dusterwald2025fast} was introduced. 
It uses Tree-Wasserstein Distance (TWD) to place samples and features as leaves on trees and was introduced to reduce the complexity that follows ideas from existing TWD research \citep{le2019tree, yamada2022approximating}. 
Specifically, this approach speeds up WSV \cite{huizing2022unsupervised} for unsupervised ground metric learning.

The main idea of WSV is to simultaneously compute the Wasserstein distance among samples (resp. features) by using the Wasserstein distance among features (resp. samples) as the ground metric. 
This design conceptually aligns with co-manifold learning methods, 
which use relationships among samples to guide the relationships among features \citep{ankenman2014geometry, gavish2012sampling, yair2017reconstruction, leeb2016holder, mishne2016hierarchical, shahid2016fast, mishne2019co, chi2017convex}.
Tree-WSV extends WSV by using TWD as a low-rank approximation of the Wasserstein distance.
It does this by computing TWD among samples (resp. features) while using TWD among features (resp. samples) as the ground metric.
Tree-WSV reduces the complexity of WSV from $O(n^2m^2(n \log n + m \log m ))$ to $O(n^3 + m^3 + mn)$, making it more efficient in practice.
 In this context, the constructed trees primarily serve as computational tools for approximating Wasserstein distances.

In contrast, our work takes a geometric approach where the trees induced by TWD are used to represent hierarchical structures among samples and features. 
Based on this geometric perspective, we propose Alg.~\ref{alg:Co_HD} and Alg.~\ref{alg:wavelet_co-HD}, which jointly learn hierarchical representations for both samples and features. 
Rather than using TWD merely as a computationally efficient approximation, we treat it as an input distance to construct diffusion operators that encode manifold information \citep{kileel2021manifold}. These operators are then used to construct trees based on hyperbolic embeddings and diffusion densities \citep{lin2025tree}, which recursively guide the computation of TWD for the other data mode.

This geometric view relates more closely to co-manifold learning in spirit, which use the geometry of one data mode to inform the geometry of the other, and was not explored in the WSV or Tree-WSV frameworks.
Our primary goal is not unsupervised ground metric learning (though this arises naturally) but rather joint samples and features hierarchical representations. We also use multiple resolutions to better represent the hierarchical structure of the data,  similar to ideas found in tree-based Earth mover’s distance \citep{ankenman2014geometry, mishne2019co}.

Empirically, our methods outperform WSV and Tree-WSV across tasks involving hierarchical data, including sparse approximation, document classification, and single-cell analysis, as detailed in Sec.~\ref{sec:exp}.
These improvements highlight the benefit of explicitly modeling and learning hierarchical representations rather than using trees solely for  efficient computation.
Furthermore, we demonstrate that the learned hierarchical representations can serve as effective inputs for downstream applications, such as hyperbolic graph neural networks (HGCNs), where incorporating our wavelet-based hierarchical representations leads to improved link prediction and node classification performance. This integration further illustrates the utility and effectiveness of our approach beyond unsupervised ground metric learning.

\subsection{On the interpretability of TWDs}

Note that one strength of TWD lies in its transparent and decomposable structure. Given two distributions (e.g., feature or sample vectors), the TWD quantifies the minimal ``mass transport'' needed to align them over the tree, with transport cost reflecting branch length. This formulation enables users to understand where and at what scale differences occur. For instance, if most of the transport cost is concentrated at high-level branches, the two distributions differ in broad terms; if the cost is concentrated near the leaves, they are largely similar but differ in fine details. As a result, TWD not only quantifies similarity but also supports qualitative interpretation: users can trace which parts of the tree contribute most to the distance between any two data points. In classification tasks, this interpretability reveals which regions of the hierarchy drive class separation and can help identify key transitions in the data.

\end{document}